\def\eqref#1{equation~\ref{#1}}
\def\ceil#1{\lceil #1 \rceil}
\def\1{\bm{1}}
\DeclareMathAlphabet{\mathsfit}{\encodingdefault}{\sfdefault}{m}{sl}
\SetMathAlphabet{\mathsfit}{bold}{\encodingdefault}{\sfdefault}{bx}{n}
\def\gB{{\mathcal{B}}}
\def\gD{{\mathcal{D}}}
\def\gS{{\mathcal{S}}}
\def\gX{{\mathcal{X}}}
\def\gY{{\mathcal{Y}}}
\def\gZ{{\mathcal{Z}}}
\DeclareMathOperator*{\argmax}{arg\,max}
\DeclareMathOperator*{\argmin}{arg\,min}
\newcommand{\cmark}{\ding{51}}%
\newcommand{\xmark}{\ding{55}}%
\title{Addressing Missing Sources with Adversarial Support-Matching}
\author{Thomas Kehrenberg$^1$, Myles Bartlett$^1$, Viktoriia Sharmanska$^1$ \& Novi Quadrianto$^{1,2}$ \\
$^1$Predictive Analytics Lab (PAL), University of Sussex, Brighton, United Kingdom \\
$^2$BCAM Severo Ochoa Strategic Lab on Trustworthy Machine Learning, Bilbao, Spain \\
\texttt{t.kehrenberg@sussex.ac.uk}
}
\newtheorem{theorem}{Proposition}
\theoremstyle{definition}
\begin{document}

\maketitle

\begin{abstract}
When trained on diverse labeled data, machine learning models have proven themselves to be a powerful tool in all facets of society.
However, due to budget limitations, deliberate or non-deliberate censorship, and other problems during data collection and curation,
the labeled training set might exhibit a systematic shortage of data for certain groups.
We investigate a scenario in which the absence of certain data is linked to the second level of a two-level hierarchy in the data.
Inspired by the idea of protected groups from algorithmic fairness, we refer to the partitions carved by this second level as ``subgroups''; we refer to combinations of subgroups and classes, or leaves of the hierarchy, as ``sources''.
To characterize the problem, we introduce the concept of classes with incomplete subgroup support.
The representational bias in the training set can give rise to spurious correlations between the classes and the subgroups which render standard classification models ungeneralizable to unseen sources.
To overcome this bias,
we make use of an additional, diverse but unlabeled dataset, called the  ``deployment set'', to learn a representation that is invariant to subgroup. This is done by adversarially matching the support of the training and deployment sets in representation space.
In order to learn the desired invariance,
it is paramount that the sets of samples observed by the discriminator are balanced by class;
this is easily achieved for the training set, but requires using semi-supervised clustering for the deployment set.
We demonstrate the effectiveness of our method with experiments on several datasets and variants of the problem.
\end{abstract}

\section{Introduction}
\label{sec:intro}
\begin{figure}[tb]
    \centering
    \includegraphics[width=0.9\columnwidth]{./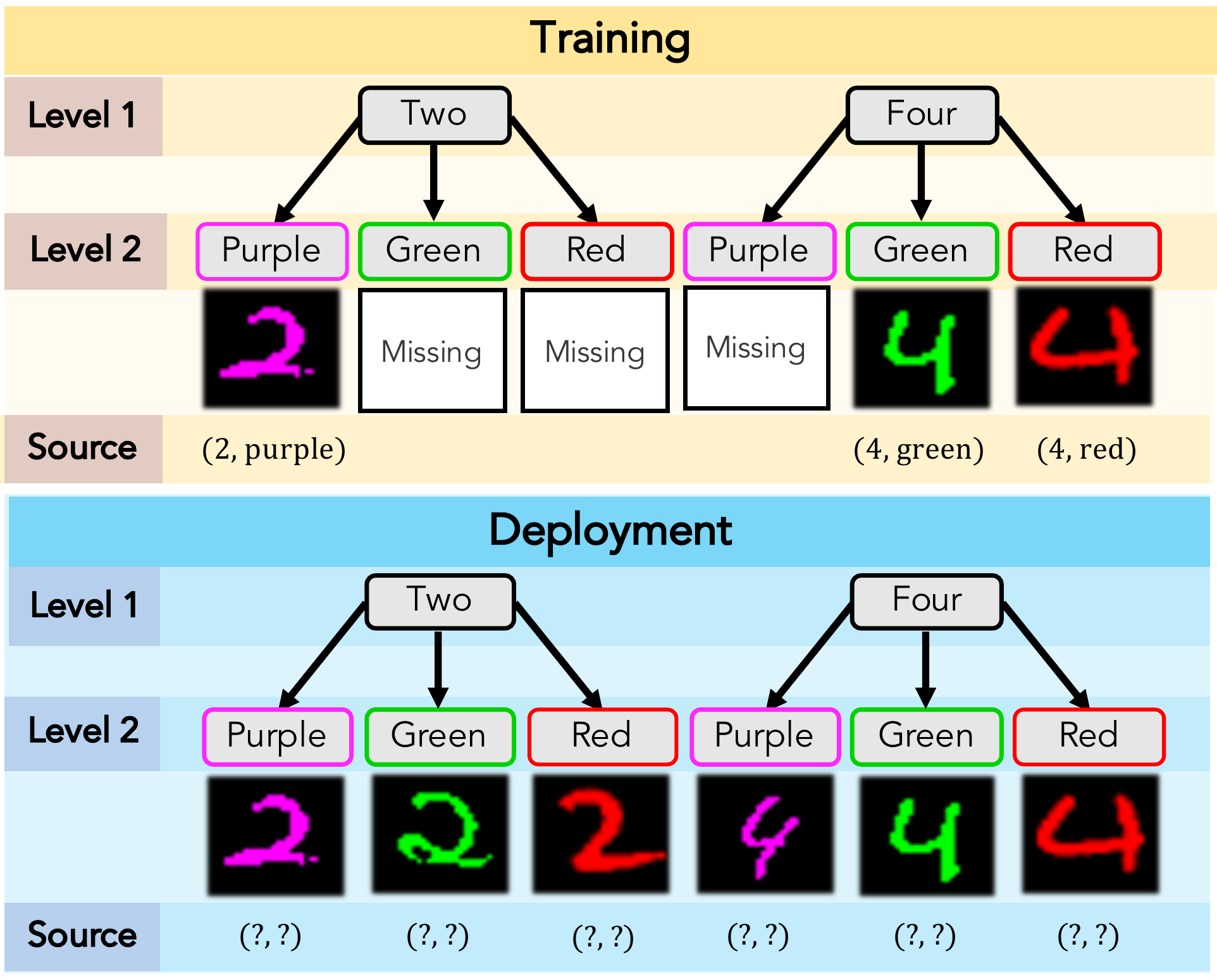}
    \caption{%
      Illustration of our general problem setup. 
      We assume the data follows a two-level hierarchy in which the first level corresponds to the class-level information (digit) and the second level corresponds to subgroup-level information (color).
      While all digits appear in the training set (Top), not all digit-color combinations (sources) do; these gaps in conditional support give rise to a spurious correlation between digit and color, where the former is completely determined by the latter in the training set (giving the mappings $\textrm{{\color{purple}purple}} \rightarrow \texttt{2}$ and $\textrm{{\color{green}green}} \lor \textrm{{\color{red}red}} \rightarrow \texttt{4}$ as degenerate solutions to the classification problem), yet this same correlation does not hold for the deployment set (Bottom) which contains samples from the missing combinations.
      To disentangle the (spurious) subgroup- and class-related information, we make use of an additional dataset that is representative of the data the model is expected to encounter at deployment time, in terms of the sources present.
    }%
    \label{fig:problem-setup}
\end{figure}
Machine learning has burgeoned in the last decade, showing the ability to solve a wide variety of tasks with unprecedented accuracy and efficiency.
These tasks range from image classification \citep{krizhevsky2012imagenet} and object detection \citep{ren2015faster}, to recommender systems \citep{ying2018graph} and the modeling of complex physical systems such as precipitation \citep{ravuri2021skillful} and protein folding \citep{jumper2021highly}.
In the shadow of this success, however, one finds less cause for optimism in frequent failure in equitability and generalization-capability, failure which can have serious repercussions in high-stakes applications such as self-driving cars \citep{sun2019unsupervised}, judicial decision-making \citep{mayson2018bias}, and medical diagnoses \citep{albadawy2018deep}.
ML's data-driven nature is a double-edged sword: while it opens up the ability to learn patterns that are infeasibly complex for a practitioner to encode by hand, the quality of the solutions learned by these models depends primarily on the quality of the data with which they were trained. If the practitioner does not properly account for this, models ingesting data ridden with biases will assimilate, and sometimes even amplify, those biases.
The problem boils down to not having sufficiently diverse annotated data, however collecting more labeled data is not always feasible due to temporal, monetary, legal, regulatory, or physical constraints.

While data can be intrinsically biased (such as in the case of bail records), \emph{representational bias} is more often to blame, where socioeconomic or regulatory factors resulting in certain demographics being under- (or even un-) represented. 
Clinical datasets are particularly problematic for ML due to the frequency of the different outcomes being naturally highly imbalanced, with the number of negative cases (\texttt{healthy}) typically greatly outweighing the number of positive cases (\texttt{diseased}); even if a subgroup is well-represented overall, that may well not be the case when conditioned on the outcome. Equally, it is entirely possible that certain subgroups may be completely absent.
For example, pregnant women are often excluded from clinical trials due to safety concerns, and if they do participate it is often at too low a rate to be meaningful \citep{afrose2021overcoming}.

Like many prior works \citep{SohDunAngGuetal20,kim2019learning,creager2021environment,SagRagKohLia20}, we consider settings where there is a two-level hierarchy, with the second level partitioning the data into \emph{subgroups} that are causally independent of the class (constituting the first level) which is being predicted.
This second level of the data is assumed to be predictable by the classifiers in the considered hypothesis class. In both \citet{SohDunAngGuetal20} and \citet{creager2019flexibly} the entailed subgroups are unobserved and need to be inferred in a semi-supervised fashion. We consider a similar problem but one where the second level is partially observed.
Specifically, we focus on problems where some outcomes are available for some subgroups and not for others. 
This particular form of the problem has -- so far as we are aware -- been hitherto overlooked despite pertaining to a number of real-world problems.

If the labeled training set is sufficiently balanced in terms of classes and subgroups,
a standard ERM (empirical risk minimization) classifier can achieve good performance.
However, we consider the added difficulty that, in the labeled training set,
some outcomes (classes) are not observed for all subgroups,
meaning some of the classes do not overlap with all the subgroups.
In other words, in the training set, some of the classes have \emph{incomplete support} with respect to the subgroup partition, while in the deployment setting we expect all possible combinations of subgroup and class to appear.
We illustrate our problem setup in Fig.~\ref{fig:problem-setup}, using Colored MNIST digits as examples; here, the first level of the hierarchy captures digit class, the second level, color. 
While the (unlabeled) deployment set contains all digit-color combinations (or \emph{sources}), half of these combinations are missing from the (labeled) training set. 
A classifier trained using only this labeled data would wrongly learn to classify \texttt{2}s based on their being {\color{purple}purple} and \texttt{4}s, based on their being {\color{green}green} or {\color{red}red} (instead of based on shape) and when deployed would perform no better than random due to the new sources being colored contrary to their class (relative to the training set).

We address this problem by learning representations that are invariant to subgroups and that thus enable the model to ignore the subgroup partition and to predict only the class labels.
In order to train an encoder capable of producing these representations,
the information contained in the labeled training set alone is not sufficient to break the \emph{spurious correlations}.
To learn the ``correct'' representations, we make use of an additional unlabeled dataset with support equivalent to that of the deployment set (which includes the possibility of it being the actual deployment set).
We do not consider this a significant drawback as such data is almost always far cheaper and less labor-intensive to procure than \emph{labeled} data (which may require expert knowledge).

This additional dataset serves as the inductive bias needed by the encoder to disentangle class- and subgroup-related factors.
The encoder is trained adversarially to produce representations whose source (\texttt{training} or \texttt{deployment}) is indeterminable to a set-classifier.
To ensure subgroup- (not class-) invariance is learned,
the batches fed to the discriminator need to be approximately balanced, such that they reflect the support, and not the shape, of the distributions.
We propose a practical way of achieving this based on semi-supervised clustering.

We empirically show that our proposed method can effectively disentangle subgroup and semantic factors on a range of classification datasets and is robust to noise in the bag-balancing, to the degree of outperforming the baseline methods even when no balancing of bags from the deployment set is performed.
Furthermore, we prove that the entailed objective is theoretically guaranteed to yield representations that are invariant to subgroups and that we can bound the error incurred due to imperfect clustering.

\section{Related work }%
\paragraph{Invariant learning.} \citet{SohDunAngGuetal20} and \citet{creager2021environment} both consider a similar problem,
where the data also exhibits a two-level hierarchy formed by classes and subgroups.
In contrast to our work, however, there is no additional bias in the data; while they may be unobserved, the labeled data is assumed to have complete class-conditional support over the subgroups.
As such, these methods are not directly applicable to the particular form of the problem we consider.
Like us, \citet{SohDunAngGuetal20} uses semi-supervised clustering to uncover the hidden subgroups, however their particular clustering method requires access to the class labels not afforded by the deployment set, as does the training of the robust classifier.

\paragraph{Unsupervised domain adaptation.}
In unsupervised domain adaptation (UDA), there are typically one or more source domains, for which training labels are available, and one or more unlabeled target domains to which we hope to generalize the classifier.
A popular approach for solving this problem is to learn a representation that is invariant to the domain using adversarial networks \citep{ganin2016domain} or non-parametric discrepancy measures such as MMD \citep{gretton2012kernel}.

There are two ways in which one can compare UDA to our setting:
1) by treating the subgroups as domains;
and 2) by treating the training and the deployment set as ``source'' and ``target'' domains, respectively.
The first comparison is exploited in algorithm fairness, yet does not carry over to our setting in which the labeled data contains \emph{incomplete} domains. 
When all sources from a given domain are missing then there are no domains to be matched, and even when this is not the case, matching will result in misalignment due to differences in class-conditional support.
The second comparison is more germane
but ignores an important aspect of our problem: the presence of spurious correlations.

\paragraph{Multiple instance learning.}
Multiple instance learning \citep{maron1998framework} is a form of weakly-supervised learning in which samples are not labeled individually part as part of a set or \emph{bag} of samples.
In the simplest (binary) case, a bag is labeled as positive if there is a single instance of a positive class contained within it, and negative otherwise.
In our case, we can view the missing sources as constituting the positive classes, which leads to all bags (a term we will use throughout the paper distinctly from ``batches'') from the deployment set being labeled as positive, and all bags from the training set being labeled as negative.
Given this labeling scheme, we make use of an adversarial set-classifier to align the supports of the training and deployment sets in the representation space of an encoder network.

\section{Problem setup}\label{sec:problem-setup}
In this section, we formalize the problem of classes with incomplete subgroup-support.
Let $s\in\gS$ refer to discrete-valued subgroup labels with the associated domains $\gS$,
typically given by integers, e.g., $\gS=\{0, 1, 2\}$.
$x$, with the associated domain $\gX$, represents other attributes\slash features of the data.
Let $\gY$ denote the space of class labels for a classification task; $\gY = \{0,1\}$ for binary classification or $\gY = \{1,2,\ldots,C_{\text{cls}}\}$ for multi-class classification.

Let \(\mathcal{S}_{tr}(y=y')\subset\gS\) refer to the set of subgroups that the class \(y'\) has overlap with in the training set.
Thus, a class \(y'\) has full \(s\)-support in the training set if \(\mathcal{S}_{tr}(y=y')=\mathcal{S}\).
As in a standard supervised learning task, we have access to a labeled training set $\gD_{tr} =\{(x_i, s_i, y_i)\}$, that is used to learn a model $M:\gX \rightarrow \gY$. 
The distinguishing aspect of our setup is
that not all classes have full \(s\)-support:
\begin{align}
\exists y'\in \gY: \mathcal{S}_{tr}(y=y')\neq \mathcal{S}~.
\end{align}
For example, we might have \(\mathcal{S}_{tr}(y=1)=\{1\}\),
while \(\mathcal{S}=\{0, 1\}\),
meaning class $y=1$ has no overlap with subgroup $s=0$.
Assuming binary $y$, we are thus observing a one-sided (negative) outcome for the subgroup $s=0$,
giving rise to a setting we refer to as \emph{subgroup bias} (SB).

Once the model $M$ is trained, we deploy it on diverse real-world data.
That is, it will encounter data where the classes have overlap with all subgroups.
If the model relies only on the incomplete training set, it is to be expected that the model will misclassify the subgroups with reduced presence in the training set.
The model becomes biased against those subgroups, leading to unexpectedly poor performance when it is deployed.
We propose to alleviate the issue of subgroup bias by mixing labeled data with unlabeled data that is usually much cheaper to obtain \citep{ChaSchZie06}. 
In this paper, we refer to this set of \emph{unlabeled} data as the
deployment set\footnote{In our experiments, we report accuracy and bias metrics on another independent test set instead of on the unlabeled data that is available at training time.} $\gD_{dep} =\{(x_i)\}$.
This deployment set has full support in all classes:
$ \gS_{dep}(y=y') = \gS ~,~\forall y'\in\gY$
where $\gS_{dep}$ is the analogue of $\gS_{tr}$ for the deployment set.
Importantly, the deployment set comes with neither class nor subgroup labels.

\begin{figure*}[tbp]
  \centering
  \includegraphics[width=0.8\textwidth]{./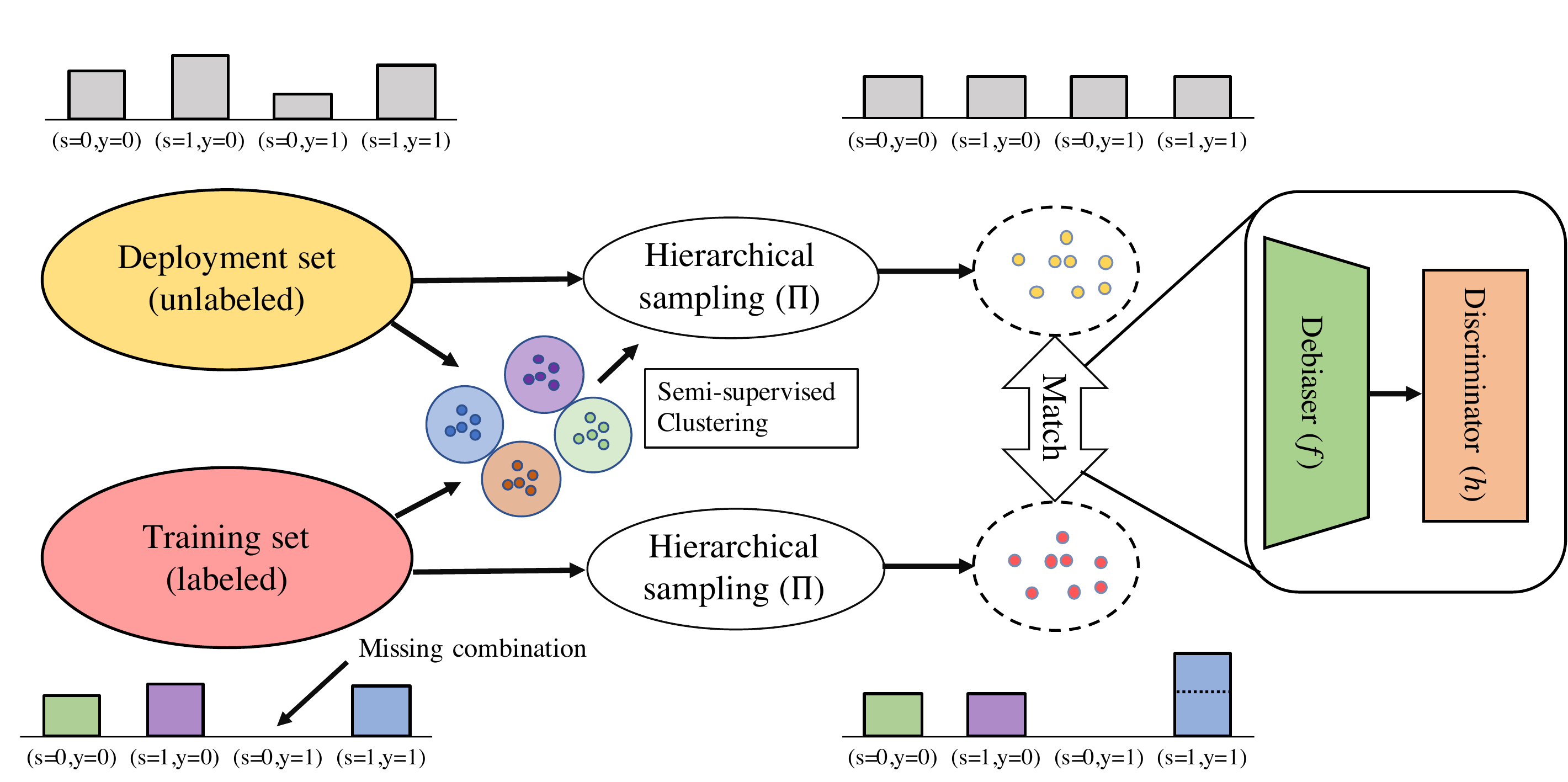}
  \caption{%
    Visualization of our support-matching pipeline.
    Bags are sampled from the training and deployment sets using the hierarchical sampling procedure described in Sec.~\ref{sec:adversarialsm} and defined functionally in Eq.~\ref{eq:functional-sampling}. 
    Since we cannot use ground-truth labels for hierarchical sampling of the deployment set, we use a semi-supervised clustering algorithm to produce balanced batches.
    In the event that certain combinations are missing, as shown here for $(s=0,y=1)$,
    the sampling on the training set substitutes the missing combinations with combinations that ensure equal representation of the target classes. 
    The debiaser is adversarially trained to produce representations from which the source dataset cannot be reliably inferred by the discriminator. 
    Assuming the bags are sufficiently balanced and $S_{tr} \times Y_{tr} \subset S_{dep} \times Y_{dep}$, the optimal debiaser is one that produces a representation $z$ that is invariant to $s$, which we prove in Appendix~\ref{implication-of-the-objective}.
    }%
  \label{fig:pipeline}
\end{figure*}

\section{Adversarial Support-Matching}
\label{sec:adversarialsm}
We cast the problem of learning a subgroup-invariant representation as one of \emph{support-matching} between a dataset that is \emph{labeled} but has \emph{incomplete} support over $\gS \times \gY$, and one, conversely, that has \emph{complete} support over $\gS \times \gY$, but is \emph{unlabeled}.
The idea is to produce a representation that is invariant to this difference in support, and thus invariant to the subgroup.
However, it is easy to learn the wrong invariance if one is not careful.
To measure the discrepancy in support between the two distributions, we adopt an adversarial approach, but one where the adversary is operating on small sets -- which we call \emph{bags} -- instead of individual samples.
These bags need to be balanced with respect to ($s$, $y$),
such that we can interpret them as approximating ($\mathrm{\gS \times \gY}$) as opposed to the joint probability distribution, $P(\gS, \gY)$.
More details on how these bags are constructed will be discussed in Secs. \ref{ssec:realization} and \ref{sec:implementation}.

\subsection{Objective}
We now present our overall objective.
The goal is to learn an encoder $f: \gX\to\gZ$,
which preserves all information relating to $y$,
but is invariant to $s$. %
Let \(P_\mathit{train}(f(x)=z', s=s',y=y')\) be the joint
probability that a data point \(x\) drawn from \(P_\mathit{train}(x)\) -- the training set --
results in the encoding \(z'\) and is at the same time labeled as
subgroup \(s'\) and class \(y'\).
We also define the following shorthand: $p_f(z=z')=P_\mathit{train}(f(x)=z')$,
the distribution resulting from sampling \(x\)
from \(P_\mathit{train}\) and then transforming \(x\) with \(f\).
Analogously for the deployment set: $q_f(z=z')=P_\mathit{dep}(f(x)=z')$.
For the conditioned distributions, we write $p_f|_{s=s',y=y'}$.

The objective makes a distinction between those classes, \(y\in\gY\), for which there is overlap with all subgroups \(s\in\gS\) in the training set and those classes for which there is not.
To formalize this,
we define the following helper function $\Pi$ which maps \((s',y')\)
to a set of subgroup identifiers depending on whether the class \(y\)
has full \(s\)-support:
\begin{align}
\Pi(s',y') = \begin{cases}
\{s'\}&\text{if }\,\mathcal{S}_{tr}(y=y')=\mathcal{S}\\
\mathcal{S}_{tr}(y=y')&\text{otherwise.}
\end{cases}
\end{align}
$\Pi(s,y)$ ensures that the correct invariance is learned and is discussed in more detail further below.
Our objective is then
\begin{align}
\mathcal{L}(f)=\sum\limits_{s'\in\mathcal{S}}\sum\limits_{y'\in\mathcal{Y}} d(p_f|_{s\in \Pi(s',y'),y=y'}, q_f|_{s=s',y=y'})
\label{eq:objective}
\end{align}
where \(d(\cdot, \cdot)\) is a distance measure for probability distributions.
The optimal encoder $f^*$ is found by solving the following optimization problem:
\begin{align}
f^*=\argmin\limits_{f\in\mathcal{F}} \mathcal{L}(f)\quad
\text{s.t.} \quad\mathcal{I}(X;Y)=\mathcal{I}(f(X);Y)
\end{align}
where $\mathcal{I}$ denotes the mutual information between two random variables. As written, Eq.~\ref{eq:objective} requires knowledge of \(s\) and \(y\) on the deployment set for conditioning.
That is why, in practice, the distribution matching is not done separately for all combinations of \(s'\in\gS\) and \(y'\in\gY\).
Instead, we compare \emph{bags} that contain samples from all combinations in the right proportions.
For the deployment set, Eq.~\ref{eq:objective} implies that all \(s\)-\(y\)-combinations have to be present at the same rate in the bags, but for the training set, we need to implement \(\Pi(s',y')\) with hierarchical balancing.

As the implications of the given objective might not be immediately clear,
we provide the following proposition. The proof can be found in Appendix~\ref{sec:theoretical-analysis}.
\begin{theorem}
If \(f\) is such that
\begin{align}
p_f|_{s\in \Pi(s',y'),y=y'} = q_f|_{s=s',y=y'}\quad\forall s'\in\mathcal{S}, y' \in\mathcal{Y}
\end{align}
and \(P_\mathit{train}\) and \(P_\mathit{dep}\) are data
distributions that correspond to the real data distribution \(P\),
except that some \(s\)-\(y\)-combinations are less prevalent, or, in the
case of \(P_\mathit{train}\), missing entirely, then, for every
\(y'\in\mathcal{Y}\), there is either full coverage of \(s\) for \(y'\)
in the training set (\(\mathcal{S}_{tr}(y=y')=\mathcal{S}\)), or the
following holds:
\begin{align}
P(s=s'|f(x)=z', y=y')=\frac{1}{n_s}~.
\end{align}
In other words: for \(y=y'\), \(f(x)\) is not predictive of \(s\).
\end{theorem}

\subsection{Realization}\label{ssec:realization}
The implementation of the objective combines elements from unsupervised representation-learning and adversarial learning.
In addition to the invariant representation $z$,
our model also outputs $\tilde{s}$, in a similar fashion to \citet{KehBarThoQua20} and \citet{creager2019flexibly}. 
This can be understood as a reconstruction of the subgroup information from the input $x$ and is necessary to prevent $z$ from being forced to encode $s$ by the reconstruction loss.
We note that this need could potentially be obviated through use of non-reconstruction-based encoders, such as those based on contrastive learning \citep{chen2020simple}.

The model is made up of four core modules:
1) two \emph{encoder} functions, $f$ (which we refer to as the ``debiaser'') and $t$, which share weights and map $x$ to $z$ and $\tilde{s}$, respectively;
2) a \emph{decoder} function $g$ that learns to invert $f$ and $t$: $g: (z, \tilde{s}) \rightarrow \tilde{x}$;
3) \emph{predictor} functions $\ell_y$ and $\ell_s$ that predict $y$ and $s$ from $z$ and $\tilde{s}$ respectively,
and 4) a \emph{discriminator} function $h$ that predicts which dataset a bag of samples embedded in $z$ was sampled from.
The predictor $\ell_s$ is usually the identity function,
and is primarily listed here for notational symmetry.
The basic idea is that $f$ produces a representation $z$ for which the adversary $h$ cannot tell whether it originated from the training set or deployment set.
Formally, given bags $\gB_{tr}$, sampled according to $\Pi$ from the training set, and balanced bags from the deployment set, $\gB_\mathit{dep}$, we first define, for notational convenience, the loss with respect to the encoder networks, $f$ and $t$ as
\begin{align}
&\mathcal{L}_{\text{enc}}(f, t, h) = \sum_{b \in \{\gB_\mathit{dep}, \mathcal{B}_{tr}\}}\Bigg[ 
   \,\sum_{x \in b} L_{\text{recon}} (x,g(f(x),t(x)))
    \nonumber\\
   &\quad\quad\quad\quad\quad\quad\quad\quad\quad\quad\quad- \lambda_\mathit{adv} \log h(\{f(x)\ | x \in b\})\Bigg] \nonumber\\
   &\quad+ \!\!\sum_{x\in \gB_{tr}}
   \lambda_y L_{\text{sup}} (
   y, \ell_y(f(x))) + \lambda_s L_{\text{sup}} (s, \ell_s(t(x)))
\label{eq:disentangling}
\end{align}
where $L_{\text{recon}}$ and $L_{\text{sup}}$ denote the reconstruction loss, and supervised loss, respectively, and $\lambda_y$, $\lambda_s$ and $\lambda_\mathit{adv}$ are positive pre-factors.
The overall objective, encompassing $f$, $t$, and $h$ can then be formulated in terms of $\mathcal{L}_{\text{enc}}$ as
\begin{align}
    \underset{f, t}{\textrm{min}}\; \underset{h}{\textrm{max}}\,\mathcal{L}_{enc}(f, t, h)~.
    \label{eq:disentangling_total}
\end{align}

Aside from being defined bag-wise, our adversarial loss is unusual in the respect that both of its constituent terms are dependent on $f$ ($f$ generates both the ``real'' and the ``fake'' samples).
We allow the gradient to flow through both $f(x) \in \gB_{tr}$ and $f(x) \in \gB_{dep}$, finding that adding a stop-gradient to $\log h(\{f(x) | x \in \gB_{dep}\})$ has a negative effect on the stability and convergence rate of training.

Eq.~(\ref{eq:disentangling_total}) is computed over batches of bags and the discriminator is trained to map a bag of samples from the training set and the deployment set to a binary label:
$1$ if the bag is judged to have been sampled from the deployment set, $0$ if from the training set.
For the discriminator to be able to classify sets of samples, it needs to be permutation-invariant along the bag dimension -- that is, its predictions should take into account dependencies between samples in a bag while being invariant to the order in which they appear.
We experiment with two different types of attention mechanism for the bag-wise pooling layer of our discriminator, finding them both to work well.
For more details see Appendix~\ref{ssec:attention-mechanism}.
Furthermore, in Appendix~\ref{ssec:no-mil},
we validate that using sets (``bags'') as input to the discriminator improves performance compared to using individual samples.

Our goal is to disentangle $x$ into a part $z$, representing the class, and a part $\tilde{s}$, representing the subgroup,
but for this to be well-posed, it is crucial that the bags differ only in terms of which sources are present and not in terms of other aspects.
We thus sample the bags according to the following set of rules which operationalize $\Pi$.
Please refer to Fig.~\ref{fig:pipeline} for a visualization of the effect of these rules.
1) Bags of the deployment set are sampled so as to be approximately balanced with respect to $s$ and $y$ (all combinations of $s$ and $y$ should appear in equal number).
2) For bags from the training set, all possible values of $y$ should appear with equal frequency. 
Without this constraint, there is the risk of $y$ being encoded in $\tilde{s}$ instead of $s$.
3) Bags of the training set should furthermore exhibit equal representation of each subgroup within classes so long as 2) would not be violated. 
For classes that do not have complete $s$-support, the missing combinations of $(s, y)$ need to be substituted with a sample from the same class -- i.e., if $s \notin \mathcal{S}_{tr}(y)$ we instead sample randomly from a uniform distribution over $\mathcal{S}_{tr}(y$).

We supplement the implicit constraints carried by the balancing of the bags with the explicit constraint that $z$ be predictive of $y$, which we achieve using a linear predictor $l_y$. Whenever we have $\textrm{dim}(\mathcal{S}_{tr}) > 1$, %
we can also impose the same constraint on $\tilde{s}$, but with respect to $s$.

\subsection{Implementation}\label{sec:implementation}
A visual overview of our pipeline is given in Fig.~\ref{fig:pipeline}.
Borrowing from the literature on algorithmic fairness \citep{chouldechova17,KleMulRag16}, we refer to a bag in which all elements of $\gY \times \gS$ appear in equal proportions as a ``perfect bag'' (even if the balancing is only approximate).
Our pipeline entails two steps: 1) sample perfect bags from an unlabeled deployment set, and 2) produce disentangled representations using the perfect bags via adversarial support-matching as described in Sec.~\ref{ssec:realization}.

\begin{figure}[tbp]
  \centering
   \includegraphics[width=1.0\columnwidth]{./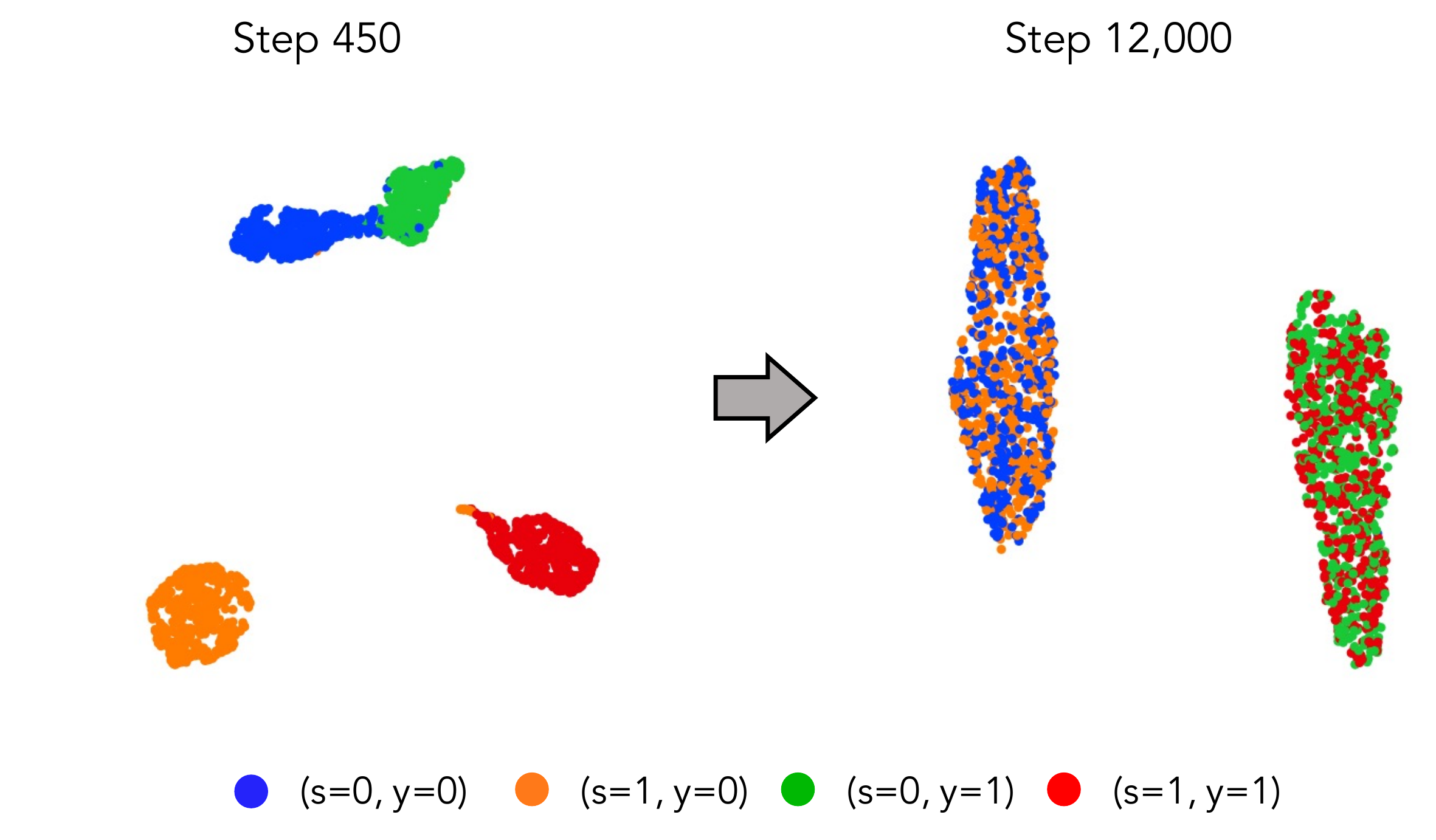}
  \caption{%
    UMAP visualizations of the representations learned by the debiaser for the Colored MNIST dataset. \textbf{Left}: After 450 training-steps each source forms a distinct cluster. \textbf{Right}: After 12,000 training-steps sources with the same $y$-value have merged, thus eliminating the spurious correlation between digit and color.
  }%
  \label{fig:umap}
\end{figure}

\textbf{Constructing perfect bags via clustering.}
We cluster the data points from the deployment set into $K=\textrm{dim}(\mathcal{Y})\cdot \textrm{dim}(\mathcal{S})$ number of clusters (i.e. the number of $s$-$y$-combinations) using a  recently-proposed supervised clustering method based on rank statistics \citep{HanRebEhrVedetal20}.
The cluster assignments can then be used to evenly stratify the deployment set into perfect bags, to be used by the subsequent disentangling phase.

As a result of clustering, the data points in the deployment set $\gD_\mathit{dep}$ are labeled with cluster assignments 
$\gD_\mathit{dep}=\{(x_i, c_i)\}$, $c_i = C(z_i)$,
so that we can form perfect bags from $\gD_\mathit{dep}$ by sampling all clusters at equal rates.
We do \emph{not} have to associate the clusters with specific $s$ or $y$ labels.

Balancing bags based on clusters instead of the true labels introduces an error, which we can try to bound.
For this error-bounding, we assume that the probability distribution distance measure used in Eq.~\ref{eq:objective} is the \emph{total variation distance} $TV$.
The proof can be found in Appendix~\ref{sec:theoretical-analysis}.
\begin{theorem}
If \(q_f(z)\) is a data distribution on
\(\mathcal{Z}\) that is a mixture of \(n_y\cdot n_s\) Gaussians, which
correspond to all the unique combinations of \(y\in\mathcal{Y}\) and
\(s\in\mathcal{S}\), and \(p_f(z)\) is any data distribution
on \(\mathcal{Z}\), then without knowing \(y\) and \(s\) on \(q_f\), it is possible to estimate
\begin{align}
\sum\limits_{s'\in\mathcal{S}}\sum\limits_{y'\in\mathcal{Y}} TV(p_f|_{s\in g(s',y'),y=y'}, q_f|_{s=s',y=y'})
\end{align}
with an error that is bounded by \(\tilde{O}(\sqrt{1/N})\) with high
probability, where \(N\) is the number of samples drawn from \(q_f\) for
learning.
\end{theorem}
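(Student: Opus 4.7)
The plan is to exploit the parametric structure of $q_f$: since $q_f$ is a mixture of $n_y n_s$ Gaussians whose components are in bijection with the $(s',y')$-combinations, every conditional $q_f|_{s=s',y=y'}$ is \emph{one} of the mixture components and can be recovered from unlabelled samples up to a relabeling of components. The proof then splits into three stages. First, use the $N$ unlabelled samples from $q_f$ to estimate the means, covariances, and mixing weights of every component. Second, match the recovered components to the $n_y n_s$ conditionals $q_f|_{s=s',y=y'}$ using the labelled distribution $p_f$. Third, substitute the matched estimates $\hat{q}_k$ into the target sum and control the error by the triangle inequality, obtaining a bound in terms of per-component TV estimation errors.

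For stage one I would invoke a standard PAC guarantee for well-separated Gaussian mixture learning (for instance, spectral initialisation followed by EM, or a moment-method estimator). Such results yield $\tilde{O}(1/\sqrt{N})$ rates in parameter estimation provided the mixing weights are bounded below, which holds because every $(s',y')$-combination is represented in $q_f$ by assumption. Converting parameter error to TV error is routine for Gaussians with bounded covariance: Pinsker's inequality together with a closed-form bound on the KL between two Gaussians gives that $TV(\hat q_k, q_f|_{s=s',y=y'})$ is Lipschitz in the estimated mean and covariance, so each component TV error is also $\tilde{O}(1/\sqrt{N})$. For stage two, matching clusters to $(s',y')$ pairs is a bipartite assignment over the $(n_y n_s)!$ permutations: one scores each candidate matching against the known $p_f|_{s \in g(s',y'), y=y'}$ conditionals (for example by picking the permutation that minimises the resulting sum), and since this is a finite combinatorial problem with no dependence on $N$, it does not affect the rate. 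Stage three then bounds the estimation error of the whole sum, term by term, by
\begin{equation*}
\sum_{s',y'} \bigl| TV(p_f|_{s\in g(s',y'),y=y'}, q_f|_{s=s',y=y'}) - TV(p_f|_{s\in g(s',y'),y=y'}, \hat q_{\pi(s',y')}) \bigr| \le \sum_{s',y'} TV(q_f|_{s=s',y=y'}, \hat q_{\pi(s',y')}),
\end{equation*}
where the inequality uses the triangle inequality for $TV$. This sum has $n_y n_s$ terms each of order $\tilde{O}(1/\sqrt{N})$, and the factor $n_y n_s$ is constant with respect to $N$ and therefore absorbed into the $\tilde{O}(\cdot)$ notation, yielding the claimed rate.

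The main obstacle is the identifiability/separation hypothesis implicit in the statement ``$q_f$ is a mixture of $n_y n_s$ Gaussians''. If the components overlap heavily, a clustering procedure will misassign mass across components and the pointwise TV bound degrades faster than $1/\sqrt{N}$; getting a clean $\tilde{O}(1/\sqrt{N})$ rate requires either a minimum-separation assumption on the component means relative to the largest covariance eigenvalue, or a sample-size-dependent threshold that lets the separation shrink slowly with $N$. A secondary technical point is that $p_f$ is allowed to be arbitrary, so the conditionals $p_f|_{s\in g(s',y'),y=y'}$ appearing in the target sum are assumed either exactly known or independently estimable from labelled samples; any estimation error incurred on the $p_f$ side is additive and propagates through the triangle inequality without changing the rate, so this is essentially a bookkeeping matter.
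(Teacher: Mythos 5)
Your overall skeleton — estimate the mixture from the $N$ unlabelled draws of $q_f$, align the learned components with the $(s',y')$-combinations, then apply the triangle inequality term by term and absorb the constant factor $n_s n_y$ into $\tilde{O}(\cdot)$ — is the same as the paper's. The difference, and the genuine gap, is in the key ingredient of stage one. The paper does \emph{not} go through parameter recovery: it invokes the density-estimation result of \citet{ashtiani2020near}, which learns a mixture $\hat q_f$ of $n_y\cdot n_s$ Gaussians with $TV(q_f,\hat q_f)\le\tilde O(\sqrt{1/N})$ with high probability, \emph{without} any separation, minimum-weight, or bounded-covariance assumptions, and then uses Lemma~3 of \citet{SohDunAngGuetal20} to get the component-wise guarantee $TV(q_f(z|s=s',y=y'),\hat q_f(z|k=i(s',y')))\le\tilde O(1/\sqrt{N})$ for some mapping $i$ from learned components to $(s,y)$-combinations. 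Your route via estimating means, covariances and mixing weights (spectral/EM/moments) and then converting to TV via Pinsker requires exactly the hypotheses you flag as ``the main obstacle'': component separation, weights bounded below, and well-conditioned covariances. None of these appear in the statement, and without separation, parameter learning of Gaussian mixtures cannot be done at a $1/\sqrt{N}$ rate (it can require super-polynomial sample complexity), so as written your stage one proves a weaker theorem under added assumptions rather than the stated one. The fix is precisely the paper's move: work with TV-density estimation of the mixture rather than parameter identification, which sidesteps separation entirely.

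A secondary issue is your stage-two matching rule. Choosing the permutation that minimises the candidate sum against the $p_f$-conditionals is not a valid estimator of the sum defined by the \emph{true} labelling: the minimising permutation can produce a value far below the true sum, and nothing ties it to the correct assignment. The paper's argument only needs the \emph{existence} of a suitable alignment between learned components and the true $(s,y)$-conditionals of $q_f$, which is exactly what Lemma~3 of \citet{SohDunAngGuetal20} supplies; no optimisation over permutations against $p_f$ is involved. Your stage three (triangle inequality and absorbing the $n_y n_s$ factor) coincides with the paper's final step and is fine.
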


\subsection{Limitation and intended use}
\label{sec:limitations}
Although having zero labeled examples for some subgroups is not uncommon due to the effects of systematic bias or dataset curation, we should make a value-judgment on the efficacy of the dataset with respect to a task.
We can then decide whether or not to take corrective action as described in this paper.
A limitation of the presented approach is that, for constructing the perfect bags used to train the disentangling algorithm, we have relied on knowing the number of clusters \emph{a priori}, something that, in practice, is perhaps not the case.
However, for person-related data, such information can, for example, be gleaned from recent census data.
(see also Appendix~\ref{sec:overclustering}
for results with misspecified numbers of clusters.)
One difficulty with automatic determination of the number of clusters is the need to ensure that the small
clusters are correctly identified. 
A cluster formed by an underrepresented subgroup can be easily overlooked by a clustering algorithm in favor of larger but less meaningful clusters.

\begin{figure*}[t]
  \centering
  \begin{subfigure}[b]{\textwidth}
  \includegraphics[width=0.49\textwidth]{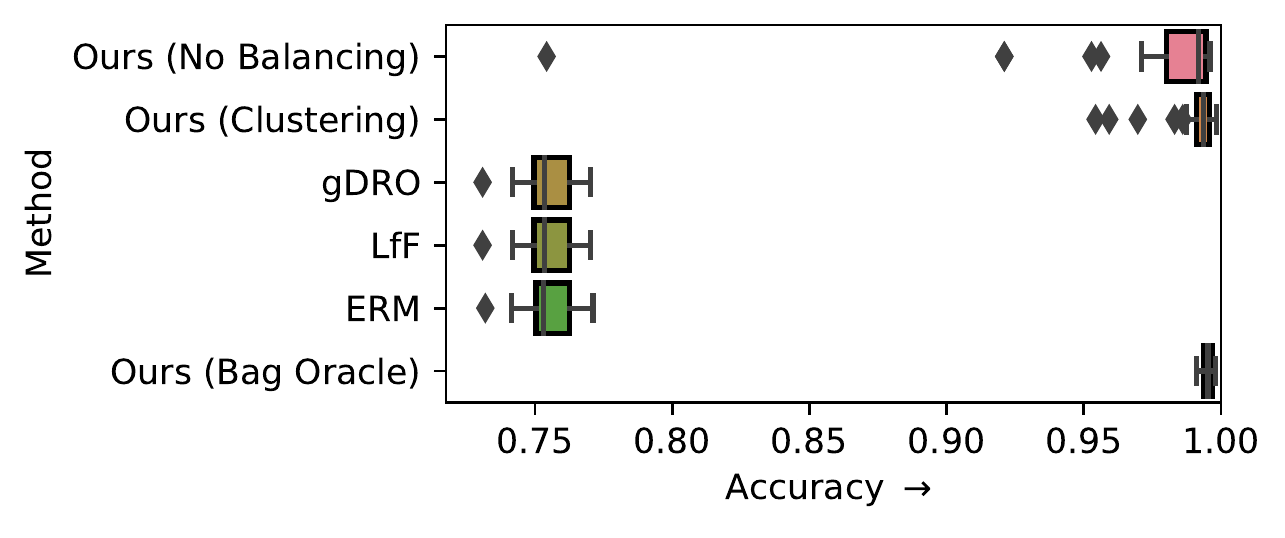}
  \includegraphics[width=0.49\textwidth]{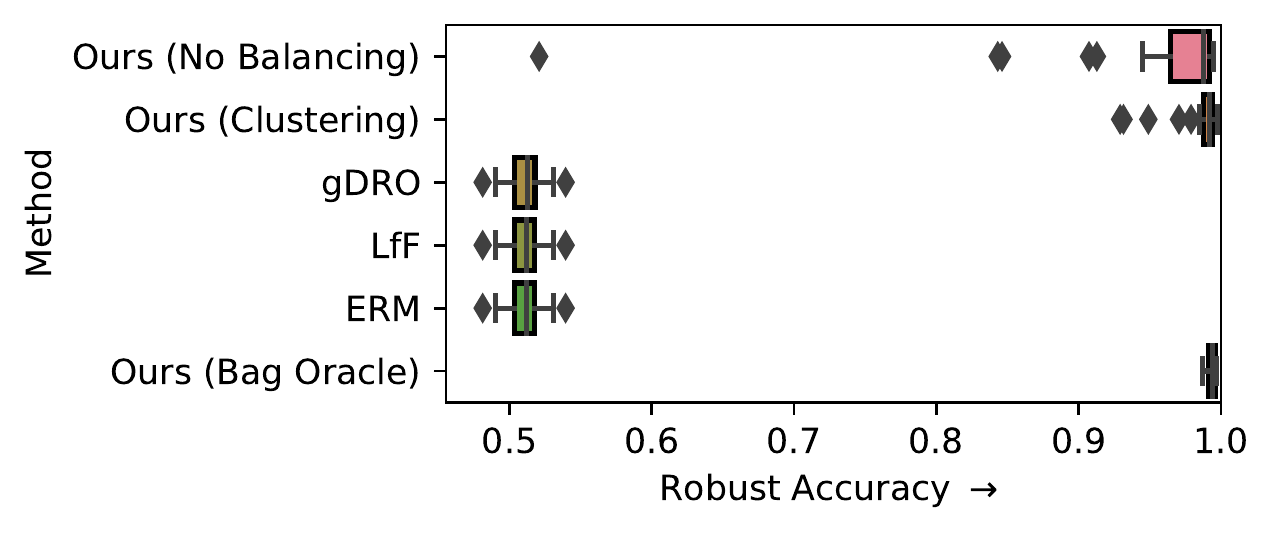}
  \caption{
    Results for the \emph{subgroup-bias} scenario where {\color{purple}purple} fours constitute the missing source.
    The clustering accuracy for \texttt{Ours (No Balancing)} was 96\% $\pm$ 6\%.
    Our method consistently outperforms the baselines, which fare no better than random on the subgroup with the missing source. 
    As we would expect, the median and IQR of our method are positively- and negative- correlated, respectively, with how well the bags of the deployment set are balanced, with \texttt{Ours (Bag Oracle)} providing an upper bound for this.
    Indeed, in one case \texttt{Ours (No Clustering)} failed to surpass the baselines, but through use of clustering the \texttt{Robust Accuracy} is kept within the region of $95\%$ at worst.
  }%
  \label{fig:cmnist-2v4-partial}
  \end{subfigure}
  
  \begin{subfigure}[b]{\textwidth}
  \centering
  \includegraphics[width=0.49\textwidth]{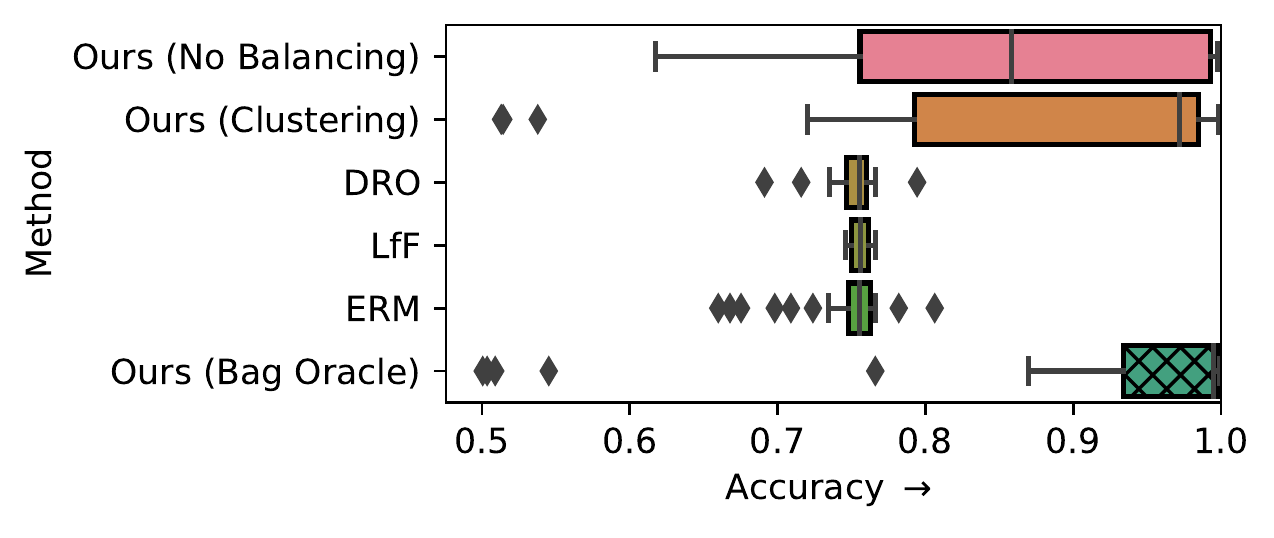}
  \includegraphics[width=0.49\textwidth]{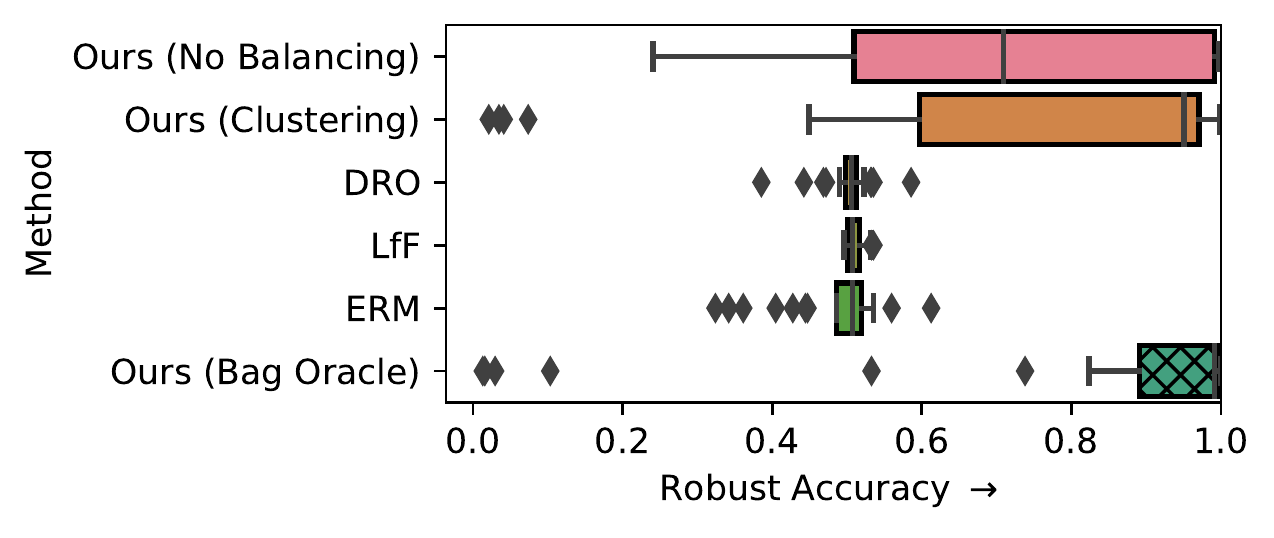}
  \caption{
    Results for the \emph{missing-subgroup scenario} where {\color{purple}purple} digits constitute the missing subgroup.
    The clustering accuracy for \texttt{Our (No Balancing)} was 88\% $\pm$ 5\%. 
    This scenario is significantly more difficult to solve than the subgroup-bias as there is insufficient inductive bias in the labels and the deployment set for the support matching to be well-posed. 
    This is reflected in the high variance of our method, variance, however, which can be drastically reduced by improving the quality of balancing.
    Nevertheless, all variants of our method perform significantly better than the baselines in terms of the median \texttt{Robust Accuracy}, and the rate at which they produce degenerate solutions (marked by performance worse than \texttt{ERM}'s) relatively low.
  }%
  \label{fig:cmnist-2v4-miss-s}
  \end{subfigure}
  \caption{
  Results for two-digit Colored MNIST for two different scenarios (subgroup bias (Top) and missing subgroup (Bottom)) in the form of box plots of the \texttt{Robust Accuracy} (the minimum accuracy computed over the subgroups) over \textbf{30 repeats}.
  }
\end{figure*}

\section{Experiments}
We perform experiments on a combination of image and tabular datasets that are publicly available -- Colored MNIST \citep{arjovsky2019invariant}, CelebA \citep{liu2015celeba} and Adult Income \citep{Dua:2017} (the latter with results in Appendix~\ref{sec:adult-results}).
For all experiments, we report the \texttt{Robust Accuracy}, defined as the minimum accuracy over the subgroups. Additional plots showing the Accuracy, Positive Rate, TPR, and TNR ratios can be found in Appendix~\ref{sec:additional-metrics}.

We compare the performance of our disentangling model when paired with each of three different bag balancing methods:
1) with clustering via rank statistics (\texttt{Clustering});
2) without balancing, when the deployment set $\gD_{dep}$ is used as is (\texttt{No Balancing});
3) with balancing using the ground-truth class and subgroup labels (\texttt{Oracle Bag}) that would in practice be unobservable; this provides insight into the performance under ideal conditions and how sensitive the method is to bag imbalance.
 
\textbf{Colored MNIST.}
Appendix~\ref{sec:dataset-construction}
provides description of the dataset and the settings used for $D_{dep}$ and $D_{tr}$. Each source is then a combination of digit-class (class label) and color (subgroup label). 
We begin by considering a binary, 2-digit/2-color, variant of the dataset with $\mathcal{Y} = \{2, 4\}$ and $\mathcal{S} = \{\text{\color{green}green}, \text{\color{purple}purple}\}$.
(Appendix~\ref{ssec:3-digit-3-color}
provides results for 3-digit/3-color variant.)
For this variant we explore both the SB (subgroup bias) setting and a more extreme \emph{missing subgroup} setting.
To simulate the SB setting, we set \(\mathcal{S}_{tr}(y=4)=\{\text{\color{green}{green}}\}\).
In the \emph{missing subgroup} setting,
$s=\text{\color{purple}{purple}}$ is missing from \(\mathcal{S}_{tr}(y=2)\) as well,
so that all classes only have support in \(\{\text{\color{green}{green}}\}\).
However, for this scenario, the disentangling procedure has more than one possible solution --
apart from the natural solution, it is also possible to consider ($y=2$, $s=\text{\color{green}{green}}$) and ($y=4$, $s=\text{\color{purple}{purple}}$)
as forming one factor in the disentangling,
with the other factor comprising the two remaining $s$-$y$-combinations.
Such an ``unnatural'' disentangling (spanning digit class \emph{and} color) is avoided only by the tendency of neural networks to prefer simpler solutions (Occam's razor) and in general we cannot guarantee that this pathological case be avoided based only on the information provided by the training labels and deployment set.

To establish the effectiveness of our method, we compare with four baselines.
The first is \texttt{ERM}, a classifier trained with cross-entropy loss on this data;
the second is \texttt{DRO} \citep{HasSriNamLia18}, which functions without subgroup labels by minimizing the worst-case training loss over all possible groups that are above a certain minimum size; the third is \texttt{gDRO} \citep{sagawa2019distributionally}, which minimizes the worst-case training loss over predefined subgroups but is only applicable when $\text{dim}(\mathcal{S}_{tr}) > 1$; the fourth is \texttt{LfF} \citep{NamChaAhnLeeetal20} which reweights the cross-entropy loss using the predictions of a purposely biased sister network.
For fair comparison, the training set is balanced according to the rules defined in Sec.~\ref{sec:adversarialsm} for all baselines.

Fig.~\ref{fig:cmnist-2v4-partial} shows the results for the SB setting. 
We see that the performance of our method directly correlates with how balanced the bags are, with the ranking of the different balancing methods being \texttt{Oracle} $>$ \texttt{Clustering}$>$ \texttt{No Balancing}. 
Even without balancing, our method greatly outperforms the baselines, which all perform similarly.

Fig.~\ref{fig:cmnist-2v4-miss-s} shows that the problem of \emph{missing subgroups} is harder to solve.
For all balancing strategies, the IQR is significantly higher than observed in the SB setting, with the latter also giving rise to a large number of extreme outliers. 
The median, however, remains high, reflecting a ``hit-or-miss'' performance of the method, but with the number of hits far outweighing the misses. 
Visualizations of the reconstructions (Appendix~\ref{sec:qual-results})
suggest that the extreme outliers correspond to the degenerate solution mentioned above.

We visualize the learned representation -- from a successful run -- using UMAP \citep{mcinnes2018umap} in Fig.~\ref{fig:umap}.
Here, we see that at the beginning of training,
all four sources are distinct, and the two sources with $s=0$
(from which one is missing in the training set)
are closer to each other than to their respective classes.
At the end of training,
the representations clearly separate into two clusters corresponding to the two classes,
while the subgroups are distributed evenly therein.

\begin{figure*}[t]
  \centering
 \includegraphics[width=0.49\textwidth]{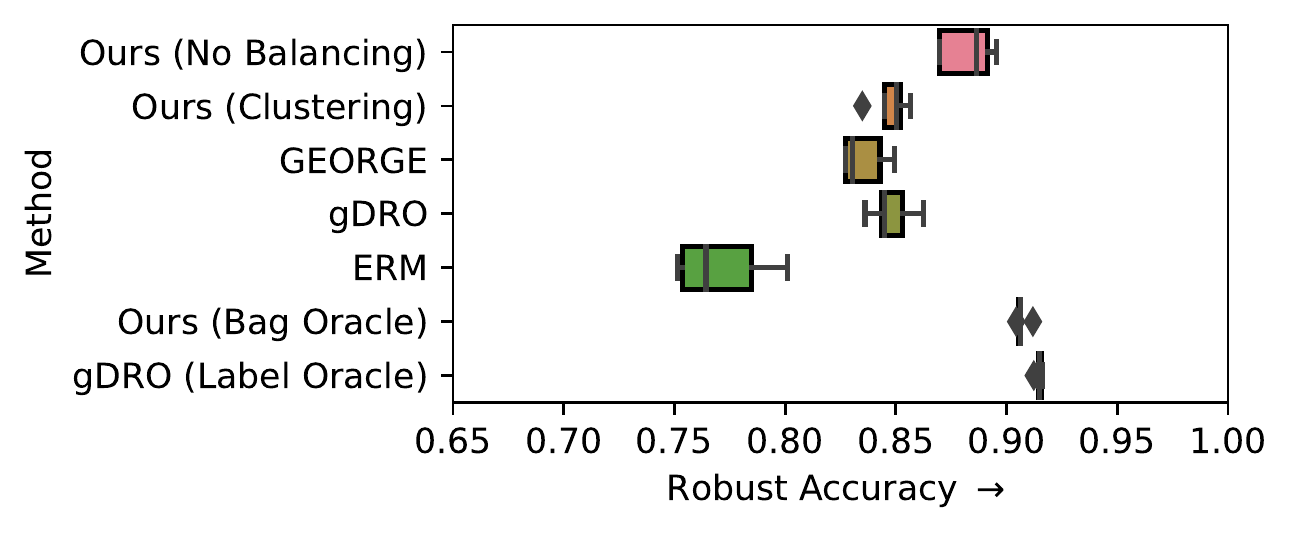}
 \includegraphics[width=0.49\textwidth]{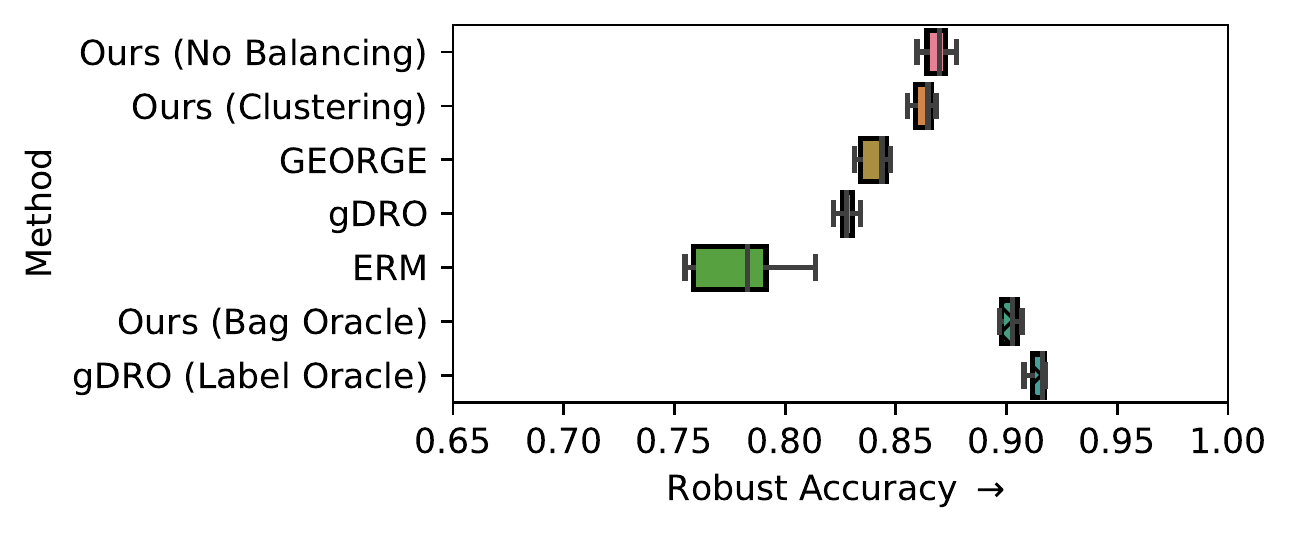}
 \includegraphics[width=0.49\textwidth]{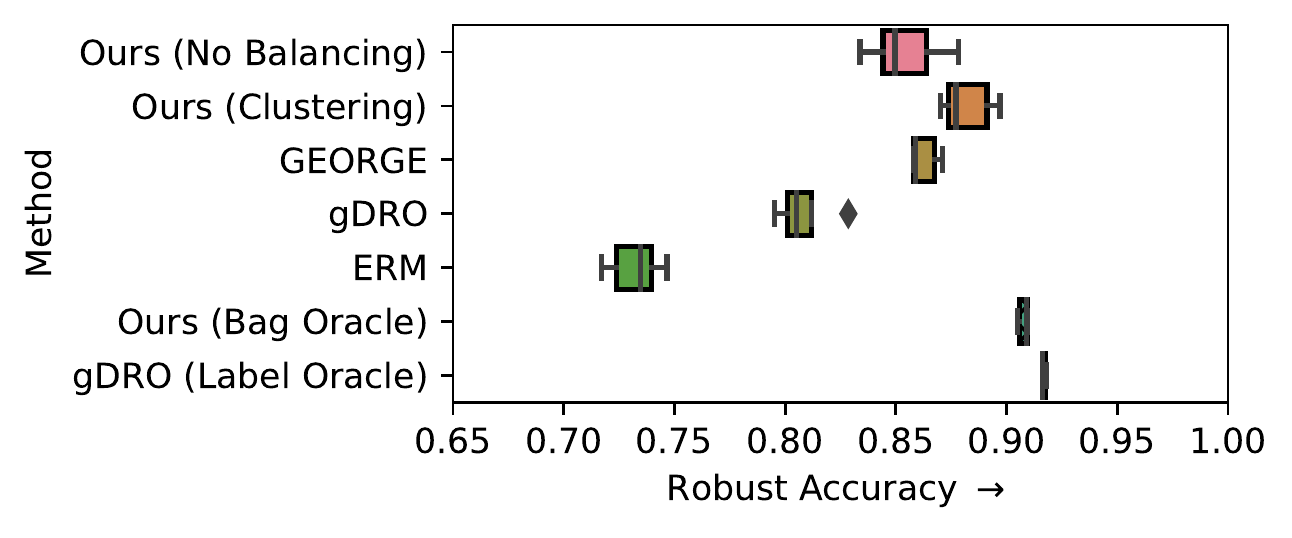}
 \includegraphics[width=0.49\textwidth]{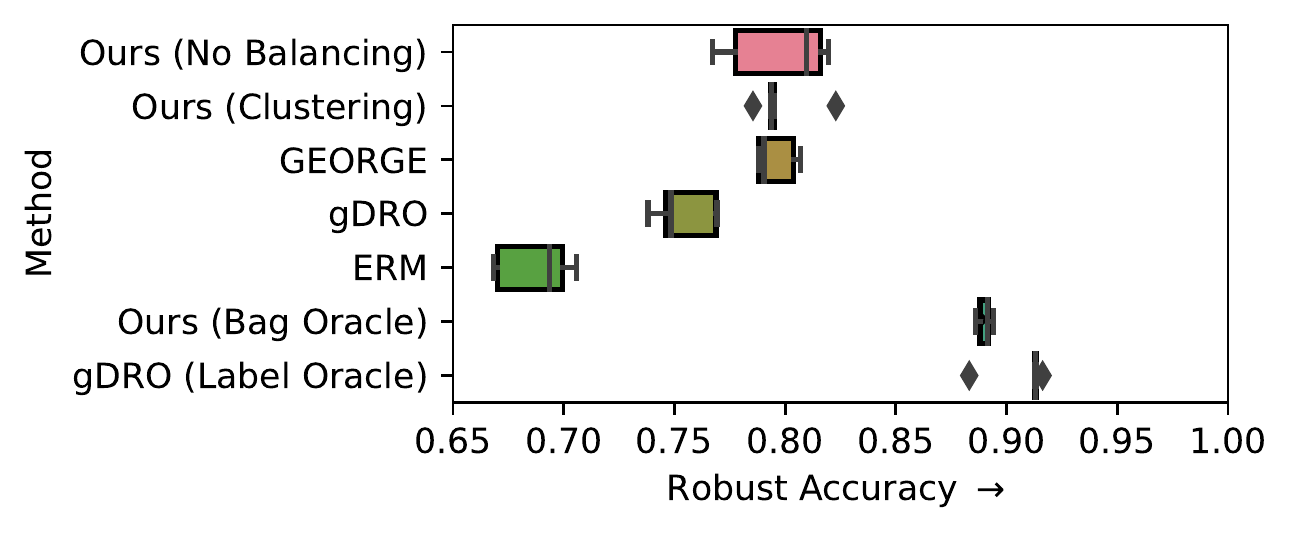}

  \caption{
    Results from \textbf{5 repeats} for the CelebA dataset for the \emph{subgroup bias} scenario.
    The task is to predict ``smiling'' vs ``non-smiling'' and the subgroups are based on gender. 
    The four sources are dropped one at a time from the training set
    (\textbf{Top Left}: smiling females; \textbf{Top Right}: smiling males; \textbf{Bottom Left}: non-smiling females;
    \textbf{Bottom Right}: non-smiling males), while the deployment set is kept fixed.
    \texttt{Robust Accuracy} refers to the minimum accuracy computed over the subgroups.
    Our method consistently performs on par with or outperforms \texttt{GEORGE} (which in turn outperforms \texttt{ERM}).
    We note that in some of the runs, \texttt{GEORGE} performed no better than random -- these results were truncated for visibility but can be found in
    Fig.~\ref{fig:celeba-gender-smiling-full}.
    Given \emph{indirect} supervision from the deployment set in the form of oracle-balancing, our method performs similarly to \texttt{gDro (Label Oracle)} that receives \emph{direct} supervision.
  }%
  \label{fig:celeba-gender-smiling}
\end{figure*}

\textbf{CelebA.} To demonstrate our method generalizes to real-world computer vision problems, we consider the CelebA dataset~\cite{liu2015celeba} comprising over 200,000 images of different celebrities.
The dataset comes with per-image annotations of physical and affective attributes such as smiling, gender, hair color, and age.
Since the dataset exhibits natural imbalance with respect to $S \times Y$, we perform no additional sub-sampling of either the training set or the deployment set.
We predict ``smiling'' as the class label and use the binary attribute, ``gender'', as the subgroup label.
Here, we consider the SB setting but rather than just designating one missing source, we repeat our experiments with each source being dropped in turn.
As before, we evaluate our method under three balancing schemes and compare with ERM and gDRO trained on only the labeled training data.
We also compare with two other variants of gDRO: 1) \texttt{gDRO (Label Oracle)}, a variant that is trained with access to the ground-truth labels of the deployment set, thus providing an upper-bound on the downstream classification performance;
2) \texttt{GEORGE} \citep{SohDunAngGuetal20}, which follows a two-step procedure of first clustering to obtain the labels for hidden subgroups, and then using these labels to train a robust classifier using gDRO.
\citet{SohDunAngGuetal20} consider a different version of the problem (termed \emph{hidden-stratification}) in which the class labels are known for all samples but the subgroup-labels are missing entirely.
We adapt \texttt{GEORGE} to our setting by modifying the semi-supervised clustering algorithm to predict the marginal distributions ($P(Y|X)$ and $P(S|X)$ instead of the joint distribution $P(Y, S| X$), allowing us to propagate the class labels from $D_{tr}$ to $D_{dep}$ (see Appendix~\ref{adapting_g}
for details).

Fig.~\ref{fig:celeba-gender-smiling} shows the results for experiments for each missing source,
showing similar trends across all instantiations of the SB scenario.
\texttt{gDRO (Label Oracle)} consistently achieves the best performance according to both metrics, with \texttt{Our Method (Bag Oracle)} consistently coming in second.
We note that while both methods use some kind of oracle, the \emph{label oracle} provides \emph{all} class/subgroup labels to its algorithm, whereas the \emph{bag oracle} only balances the bags.
Despite the large difference in the level of supervision, the margin between the two oracle methods is slim.
We observe that clustering in many cases impairs performance which can be explained by poor clustering of the missing source ($\sim$60\% accuracy).
CelebA exhibits a natural imbalance with respect to gender/smiling but not a significant one, allowing for random sampling to yield a reasonable approximation to the desired perfect bags.
We believe adjustments to the clustering algorithm -- e.g., using a self-supervised loss instead of a reconstruction loss for the encoder -- could close the gap between clustering-based and oracle-based balancing.
Nonetheless, among the non-oracle methods, variants of our method consistently match or exceed the performance of the baselines.
While the plots show \texttt{GEORGE} can perform strongly in this SB scenario, we note that for several of the missing sources, the method failed catastrophically in one out of the five runs.
We have cut off those data points here so as not to compromise the visibility of the other results; the full versions of the plots can be found in Appendix~\ref{ssec:extended-results-celeba}.
The fact that \texttt{GEORGE} leverages both the training and deployment sets in a semi-supervised way with clustering makes it the baseline most comparable to our method.
However, its performance is much more dependent on the clustering step than our method.

\section{Conclusion}
We have highlighted the problem that systematic bias or dataset curation can result in one or more subgroups having zero labeled data, and by doing so, hope to have stimulated serious consideration for it (even if to be dismissed) when planning, building, and evaluating %
ML
systems.
We proposed a two-step approach for addressing the resulting spurious correlations.
First, we construct perfect bags from an unlabeled deployment set via semi-su\-per\-vised clustering. 
Second, by matching the support of the training and deployment sets in rep\-res\-en\-ta\-tion space, we learn rep\-res\-en\-ta\-tions with subgroup-invariance.
We empirically validate our frame\-work on the Colored MNIST, CelebA and Adult Income datasets, proving it is possible to maintain high performance on subgroups with incomplete support.
Furthermore, we bound the error in the objective due to imperfect clustering.
Future work includes exploring other unsupervised-learning methods (e.g. con\-tras\-tive learning) and addressing the limitations raised in Sec.~\ref{sec:limitations}.

\section*{Acknowledgments}
This research was supported in part by a European Research Council (ERC) Starting Grant for the project ``Bayesian Models and Algorithms for Fairness and Transparency'',
funded under the European
Union's Horizon 2020 Framework Programme
(grant agreement no. 851538).
Novi Quadrianto is also supported by the Basque Government
through the BERC 2018-2021 program and by Spanish Ministry of Sciences, Innovation and Universities:
BCAM Severo Ochoa accreditation SEV-2017-0718.
\bibliography{bibfile}
\bibliographystyle{iclr2022_conference}

\newpage

\appendix
\appendix

\section{Additional experiments}\label{sec:additional-results}
\subsection{Results for Adult Income}\label{sec:adult-results}
\begin{figure*}[htp]
    \centering
    \includegraphics[width=0.49\textwidth]{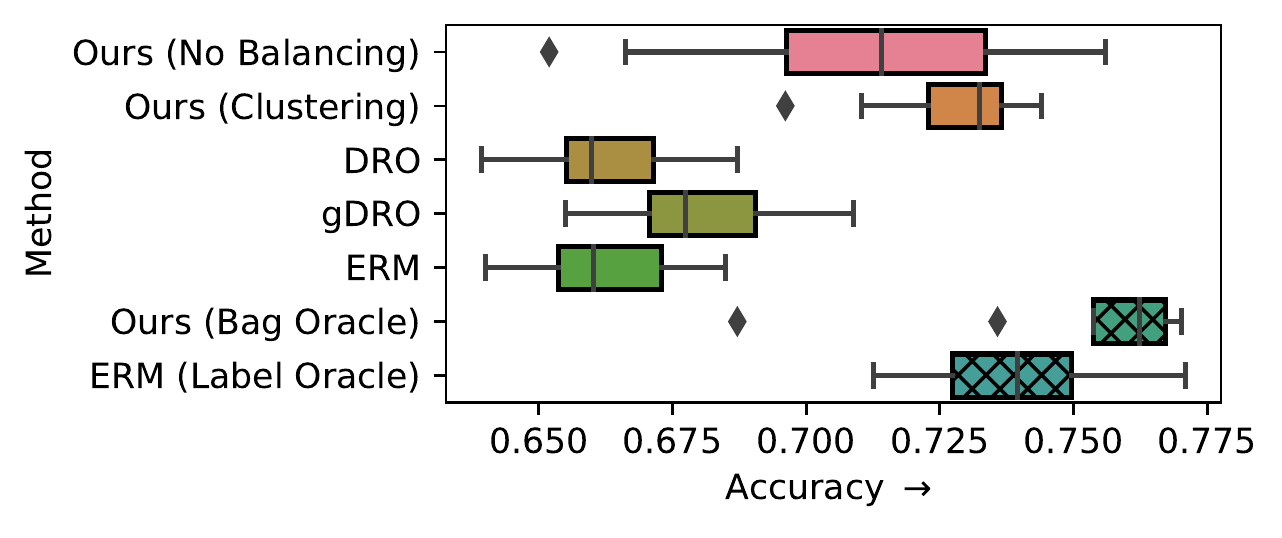}
    \includegraphics[width=0.49\textwidth]{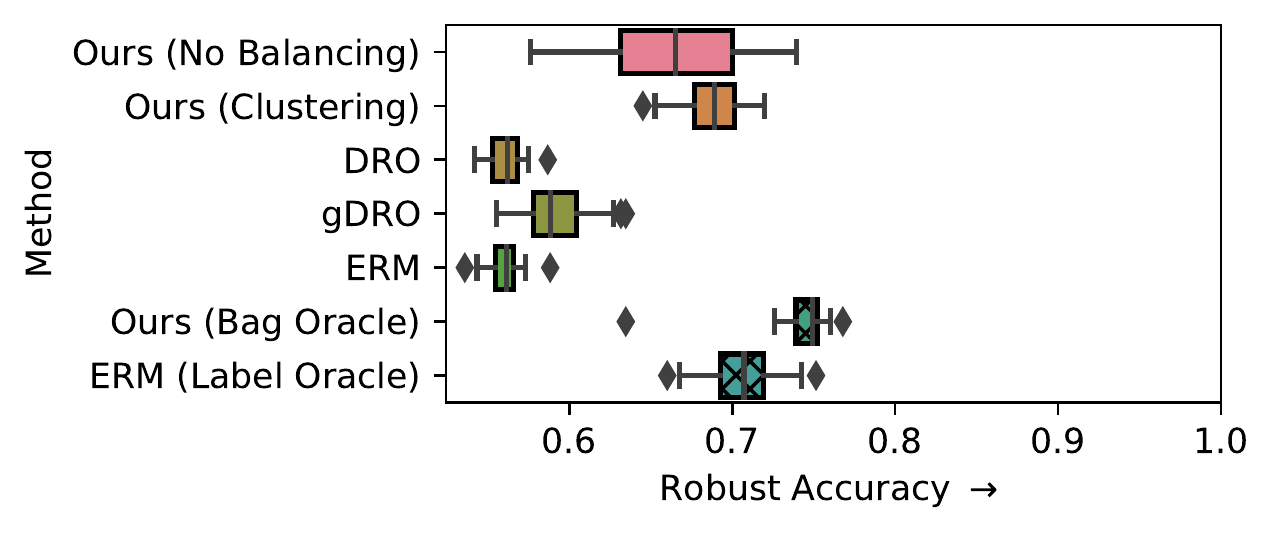}
    \includegraphics[width=0.49\textwidth]{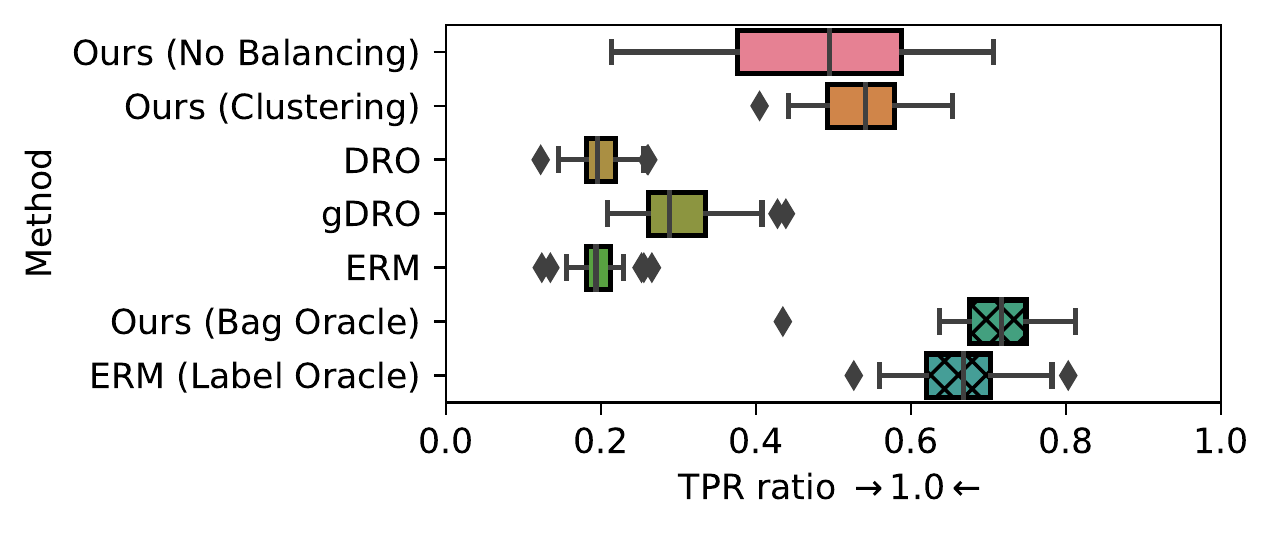}
    \includegraphics[width=0.49\textwidth]{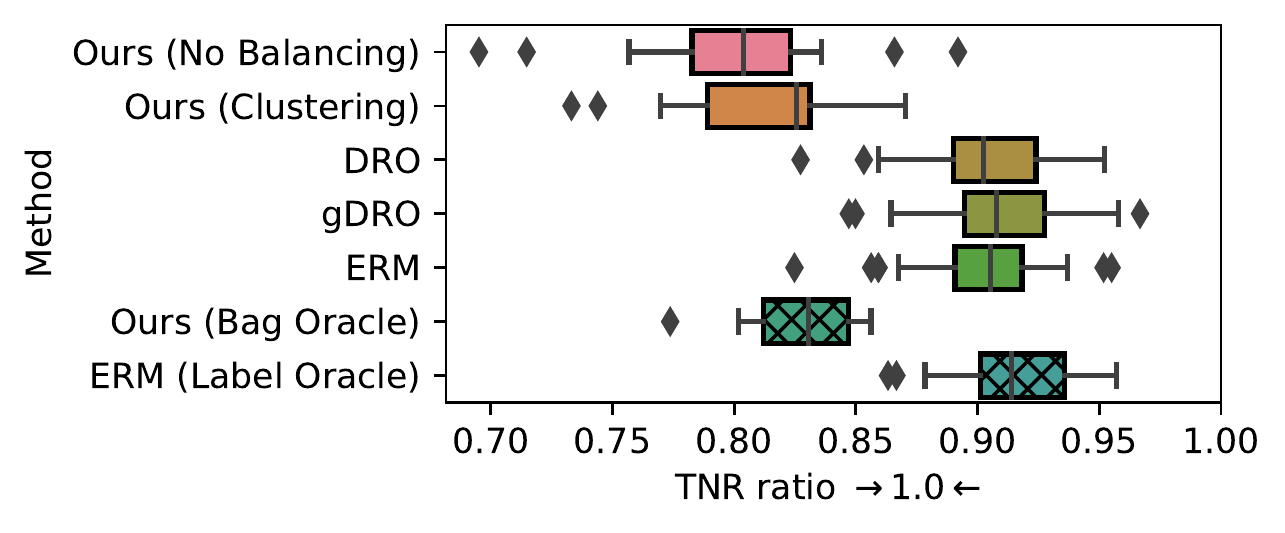}
    \caption{%
    Results for the Adult Income dataset with \emph{subgroup bias},
    for the binary classification task of predicting whether an individual earns $>$\$50,000 with a binary subgrouping based on \emph{gender}.
    \texttt{ERM (Label Oracle)} refers to a model based on ERM (empirical risk minimization),
    trained on a labeled deployment set and as such not suffering from bias present in the training set.
    \textbf{Top left}: Accuracy.
    \textbf{Top right}: Robust Accuracy.
    \textbf{Bottom left}: True positive rate ratio.
    \textbf{Bottom right}: True negative rate ratio.
    For \texttt{Ours (Clustering)}, the clustering accuracy was 69.7\% $\pm$ 0.3\%;
    }%
    \label{fig:adult-subgroup-bias}
\end{figure*}
\begin{figure*}[htp]
    \centering
    \includegraphics[width=0.49\textwidth]{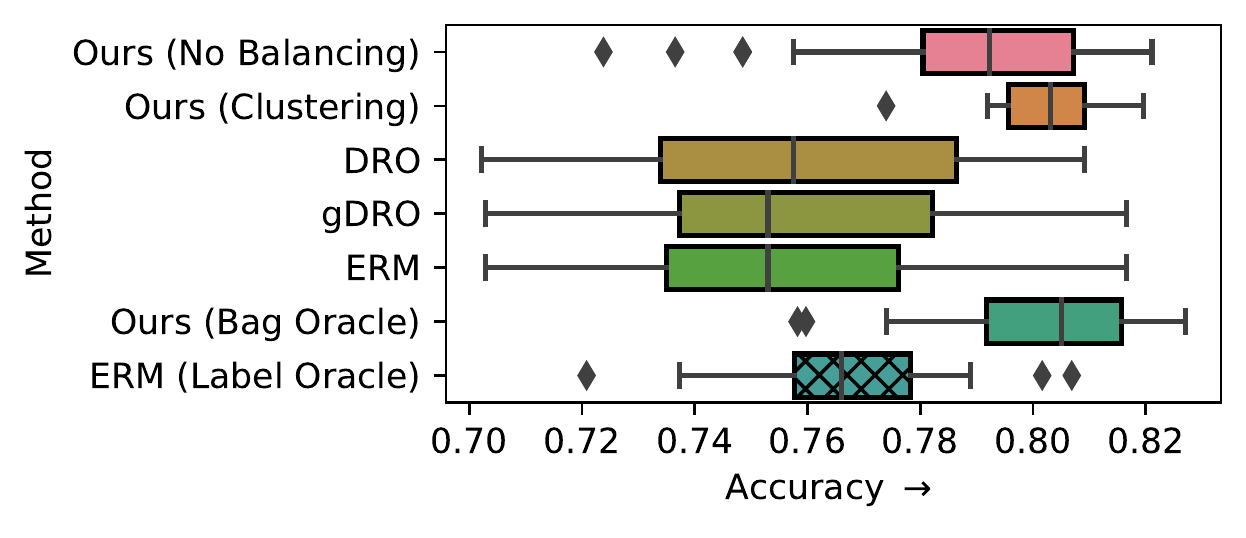}
    \includegraphics[width=0.49\textwidth]{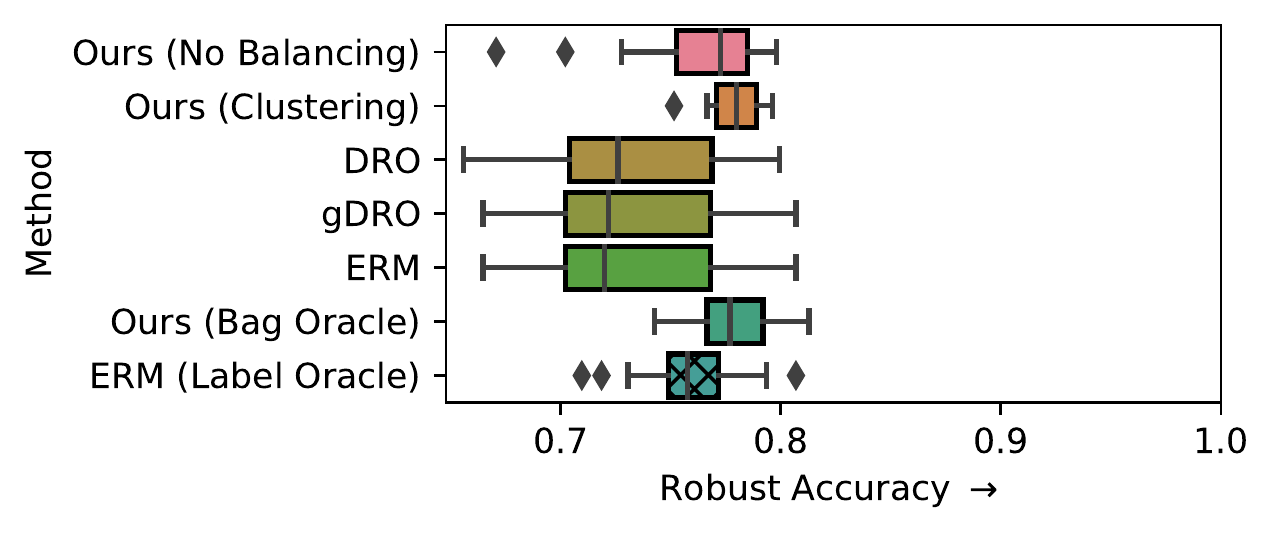}
    \includegraphics[width=0.49\textwidth]{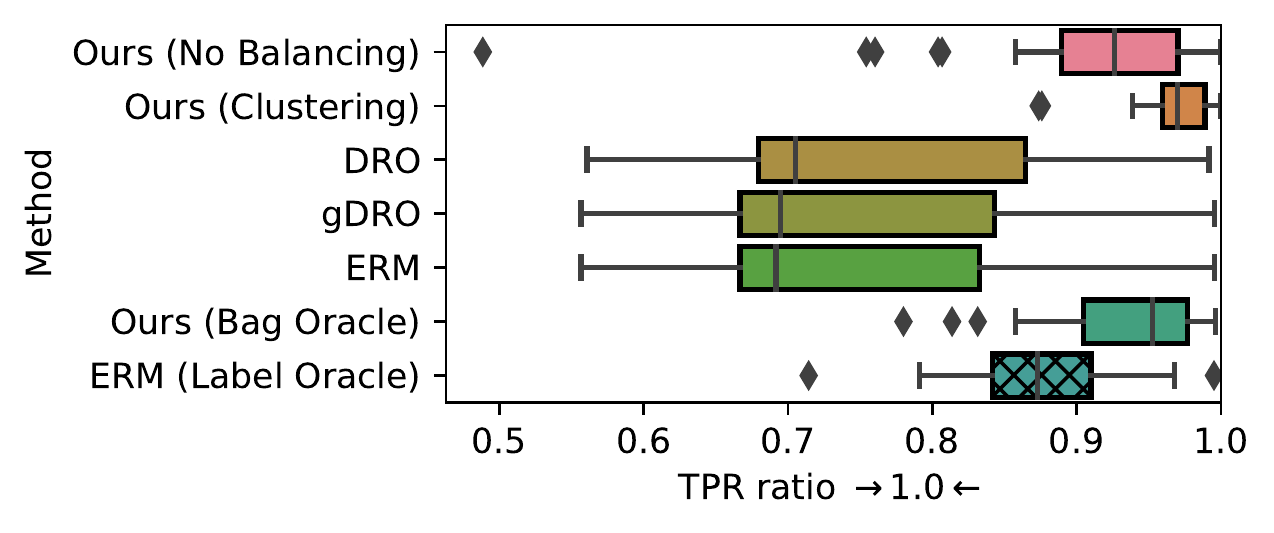}
    \includegraphics[width=0.49\textwidth]{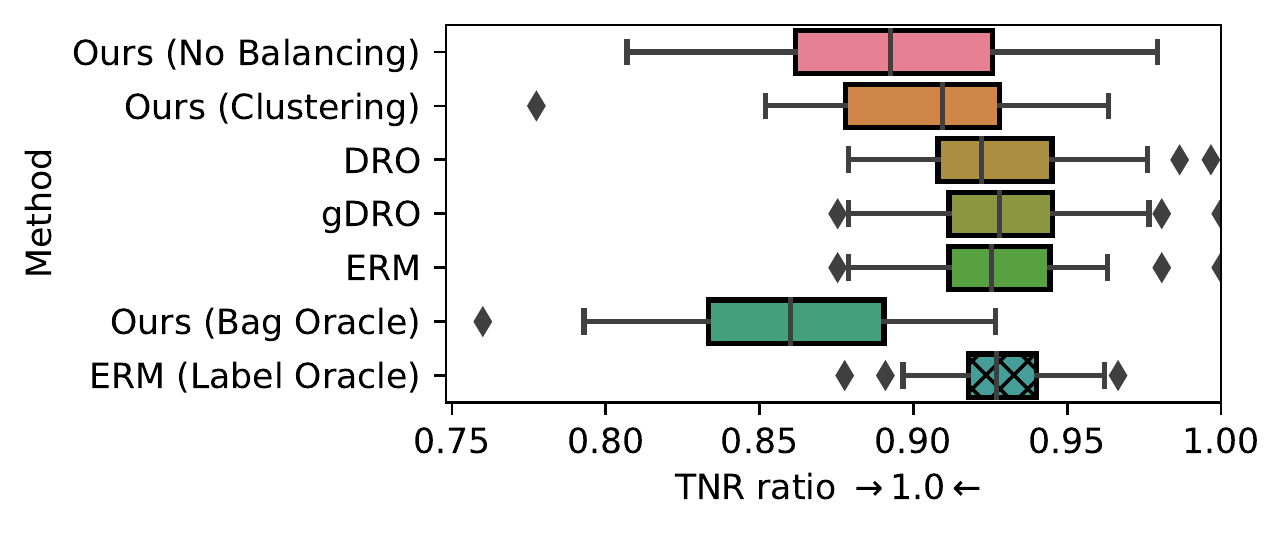}
    \caption{%
    Results for the Adult Income dataset with a \emph{missing subgroup},
    for the binary classification task of predicting whether an individual earns $>$\$50,000 with a binary subgrouping based on \emph{gender}.
    \texttt{ERM (Label Oracle)} refers to a model based on ERM (empirical risk minimization),
    trained on a \textbf{l}abeled \textbf{d}eployment set; thus not suffering from bias in the training set.
    \textbf{Top left}: Accuracy.
    \textbf{Top right}: Robust Accuracy.
    \textbf{Bottom left}: True positive rate ratio.
    \textbf{Bottom right}: True negative rate ratio.
    For\texttt{Ours (Clustering)}, the clustering accuracy was 60.4\% $\pm$ 0.8\%.
    }%
    \label{fig:adult-missing-subgroup}
\end{figure*}
Figures~\ref{fig:adult-subgroup-bias} and \ref{fig:adult-missing-subgroup} show results from our method on the Adult Income dataset \cite{Dua:2017}.
This dataset is a common dataset for evaluating fair machine learning models. 
Each instance in the dataset is described by $14$ characteristics including gender, education, marital status, number of work hours per week among others, along with a label denoting income level ($\geq$\$50K or not). 
We transform the representation into $62$ real and binary features along with the subgroup label $s$. %
The dataset is naturally imbalanced with respect to gender: 30\% of the males are labeled as earning more than \$50K per year (high income), while only 11\% of females are labeled as such.
For further details on the dataset construction, see section~\ref{ssec:dataset-construction-adult}.
Following standard practice in algorithmic fairness  e.g. \cite{ZemeWuSwePitetal13}, we consider gender to be the subgroup label $s$.
A \emph{source} is defined as certain combinations of the subgroup, $s$, and target class, $y$.

For the Adult Income dataset, we study the following two settings of missing sources, a subgroup bias setting and a more extreme missing subgroup setting:
1) \emph{subgroup bias}: we have labeled training data for males ($s=1$) with both positive and negative outcomes, but for the group of females ($s=0$), we only observe the one-sided negative outcome: \(\mathcal{S}_{tr}(y=1)=\{1\}\);
2) \emph{missing subgroup}: we have training data for males with positive and negative outcomes, but do not have labeled data for females, i.e.\ \(\mathcal{S}_{tr}(y=0)=\{1\}\) and \(\mathcal{S}_{tr}(y=1)=\{1\}\).

As before, \texttt{Ours (Clustering)}, \texttt{Ours (No Balancing)}\ and \texttt{Ours (Bag Oracle)} denote variants of our method with different deployment-set balancing strategies.
As baseline methods, we have \texttt{ERM} (standard empirical risk minimization with balanced batches), \texttt{DRO} \cite{HasSriNamLia18}, \texttt{gDRO} \cite{sagawa2019distributionally}
and \texttt{ERM (Label Oracle)} which is the same model as \texttt{ERM}, but trained with access to the ground-truth labels of the deployment set.

In both settings, we observe the same order as for the other dataset in terms of accuracy: \texttt{Ours (Bag Oracle)} achieves the highest performance, followed by \texttt{Ours (Clustering)}, then \texttt{Ours (No Balancing)}.
However, for the \emph{missing subgroup} setting, \texttt{Ours (Clustering)} and \texttt{Ours (Bag Oracle)} perform almost identically, with the former outstripping the latter slightly in terms of de-biasing metrics.
This reduced reliance on balancing can be explained by the additional supervision that comes with having two sources missing instead of one -- in order for the discriminator to distinguish between bags from the deployment set and bags from the training set, the former need only contain \emph{one} of the two missing sources.

Generally, we observe a high variance in the results. This is not attributable to our method, however, with the baselines exhibiting the same behavior, but rather to the fact that the Adult Income dataset is a very noisy dataset which, at the best of times, allows only about 85\% accuracy to be attained (see also \cite{agrawal2020debiasing}). The problem is that samples vary widely in how informative they are. This, coupled with us artificially biasing the dataset to be even more biased (as \emph{subgroup bias} and \emph{missing subgroup}), makes the attainable performance very dependent on which samples the classifier gets to see, which varies according to the random seed used for the data set split.

\subsection{Results for 3-digit 3-color variant of Colored MNIST}\label{ssec:3-digit-3-color}
To investigate how our method scales with the number of sources, we look to a 3-digit, 3-color variant of the dataset in the \emph{subgroup bias} setting where four sources are missing from $D_{tr}$.
Results for this configuration are shown in fig.~\ref{fig:cmnist-3dig-4miss}.
We see that the performance of \texttt{Ours (No Balancing)} is quite close to that of \texttt{Ours (Bag Oracle)}. We suspect this is because balancing is less critical with the increased number of subgroups strengthening the training signal.
As inter-subgroup ratios do not make for suitable metric for non-binary $S$,
we instead quantify the invariance of the predictions to the subgroup with
the HGR maximal correlation \cite{renyi1959measures}.

\begin{figure*}[htp]
  \centering
  \includegraphics[width=0.49\textwidth]{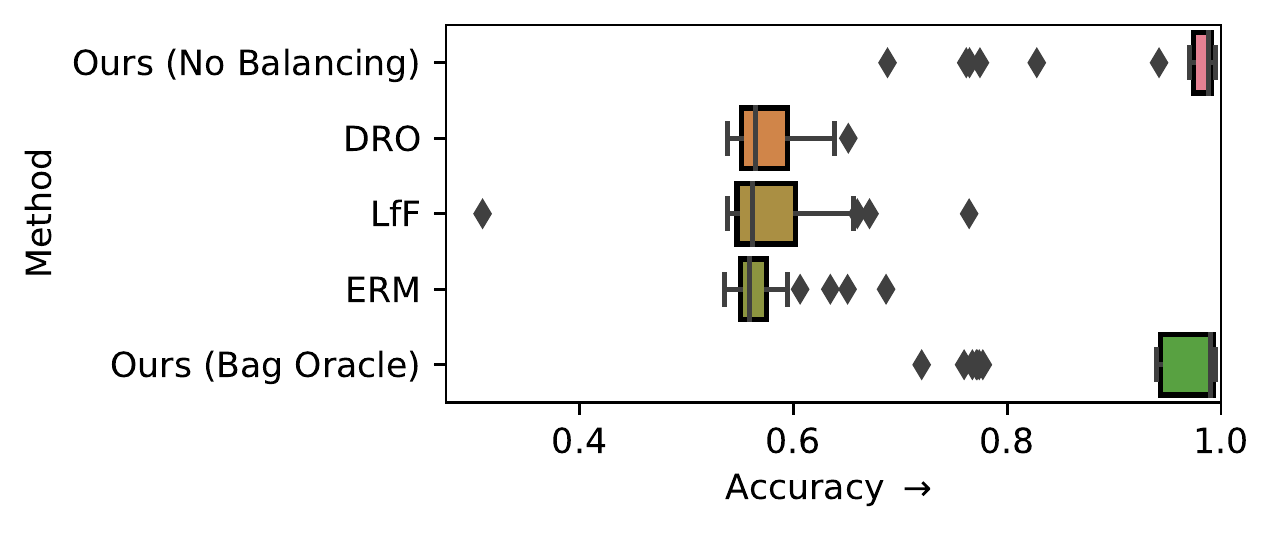}
  \includegraphics[width=0.49\textwidth]{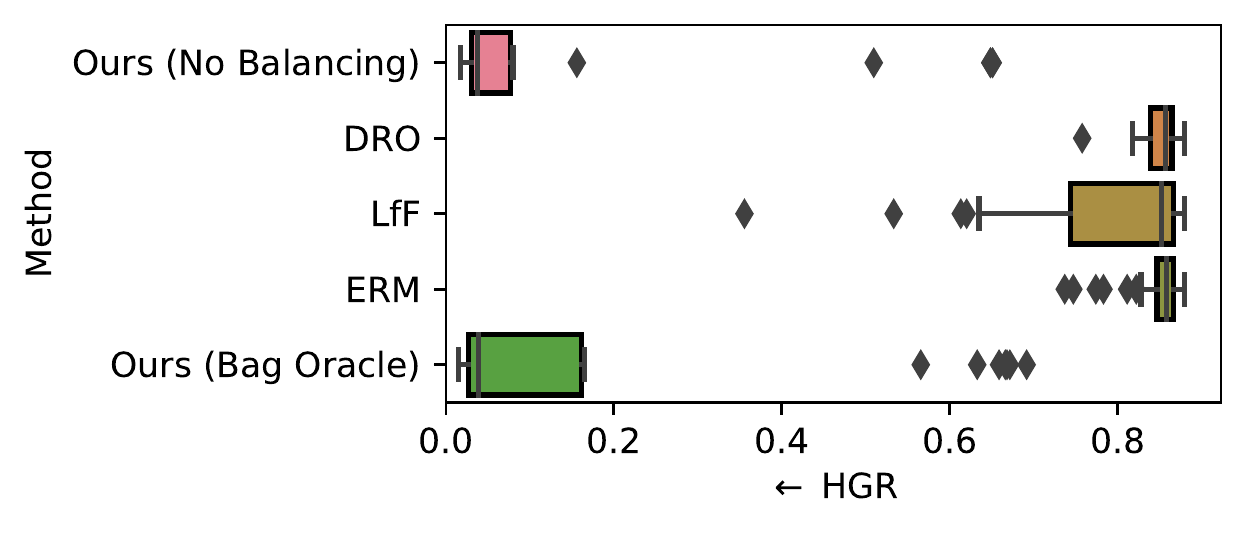}
  \caption{
    Results from \textbf{30 repeats} for the Colored MNIST dataset with three digits: `2', `4' and `6'.
    Four combinations of digit and color are missing: {\color{green}green} 2's, {\color{blue}blue} 2's, {\color{blue}blue} 4's and {\color{green}green} 6's.
    \textbf{Left}: Accuracy.
    \textbf{Right}: Hirschfeld-Gebelein-R\'enyi maximal correlation \cite{renyi1959measures} between $S$ and $Y$.
  }%
  \label{fig:cmnist-3dig-4miss}
\end{figure*}

\subsection{Extended Results for CelebA}\label{ssec:extended-results-celeba}

As alluded to in main text, for three out of four of the missing gender/smiling quadrants, the \texttt{GEORGE} baseline produced an extreme outlier for one out of the five total repeats - these outliers were omitted from the plots to ensure the discriminability of the other results. We reproduce the full, untruncated versions of these plots here in fig. \ref{fig:celeba-gender-smiling-full}. 
We have also included \texttt{Accuracy} metric in fig. \ref{fig:celeba-gender-smiling-full}.

\begin{figure*}[htp]
  \centering
 \includegraphics[width=0.49\textwidth]{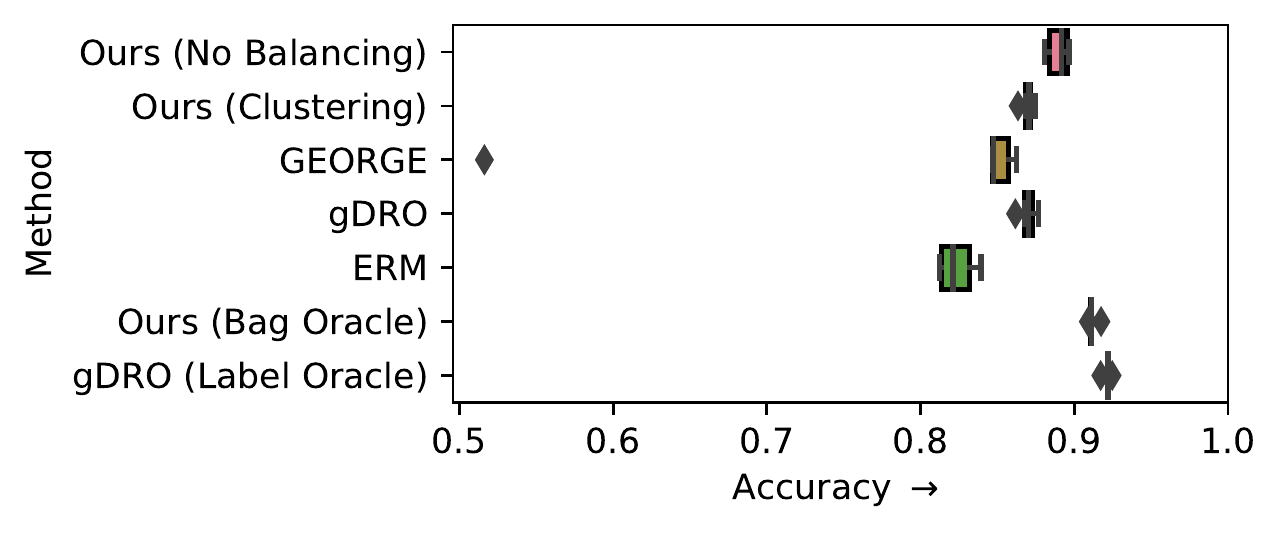}
  \includegraphics[width=0.49\textwidth]{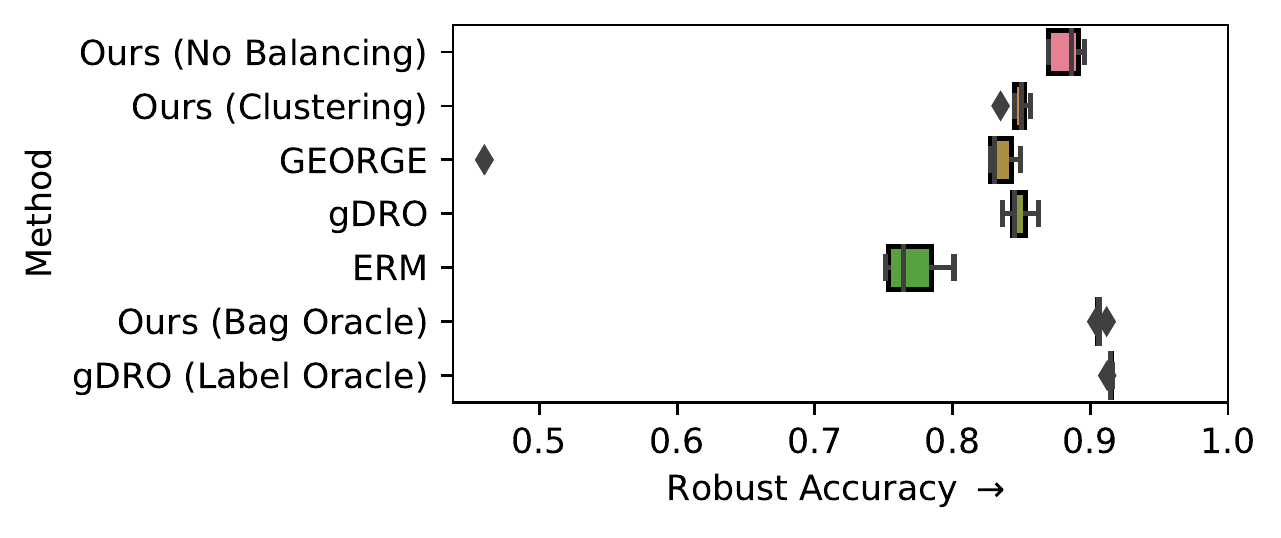}
 \includegraphics[width=0.49\textwidth]{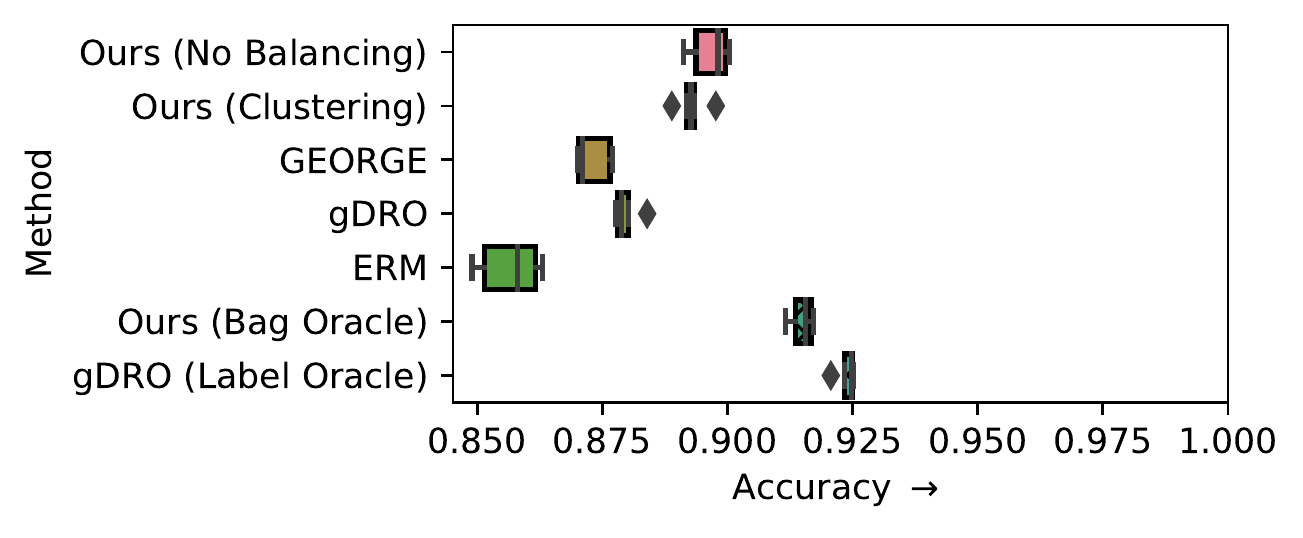}
 \includegraphics[width=0.49\textwidth]{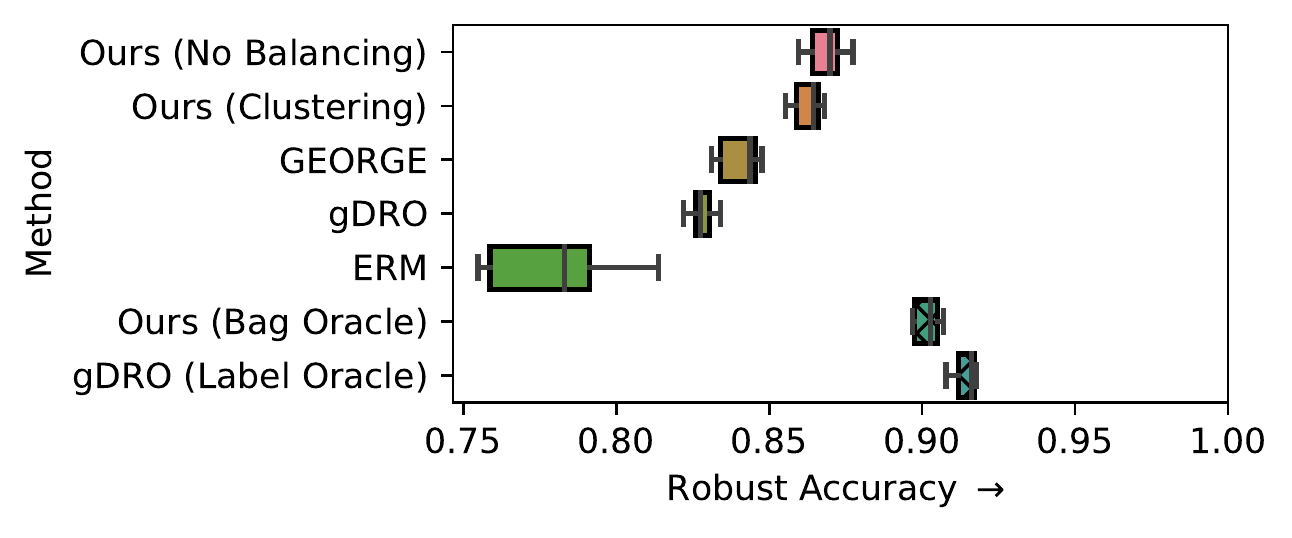}
 \includegraphics[width=0.49\textwidth]{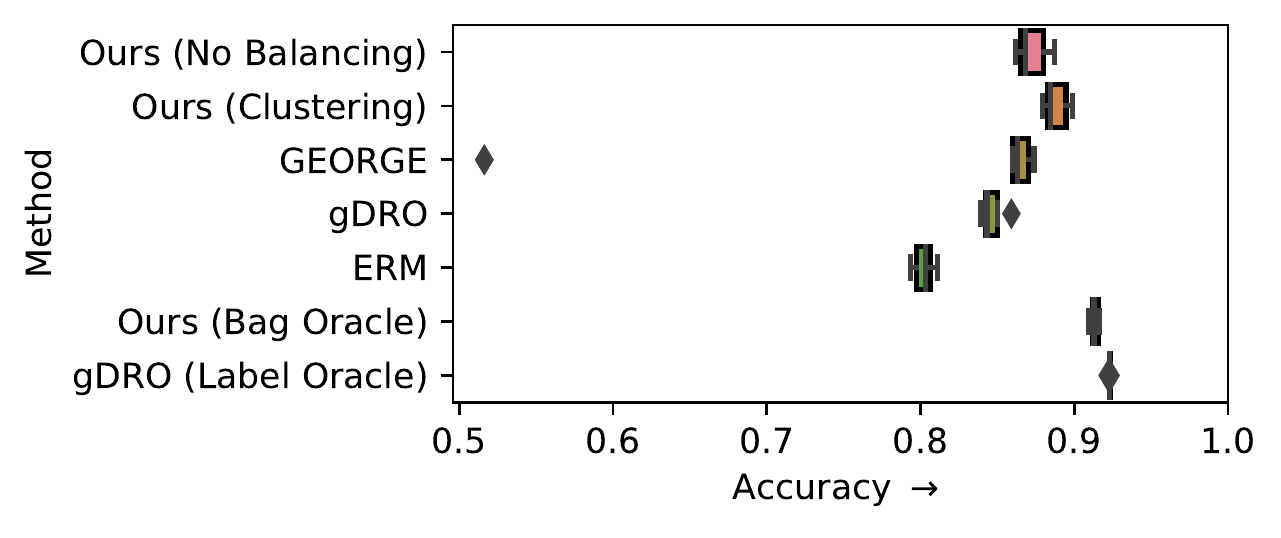}
  \includegraphics[width=0.49\textwidth]{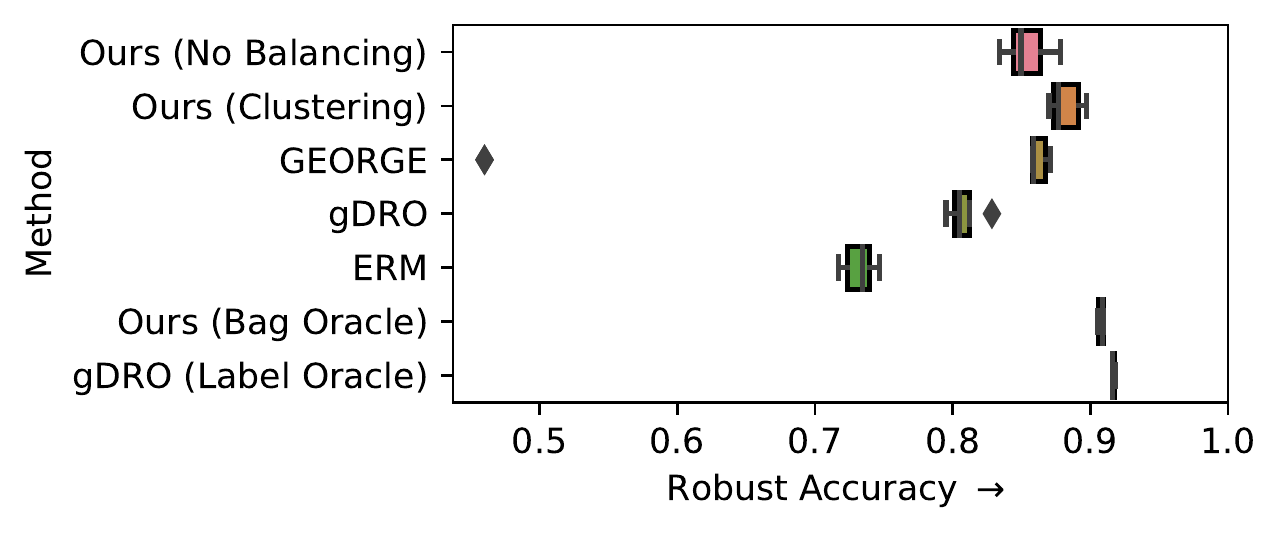}
 \includegraphics[width=0.49\textwidth]{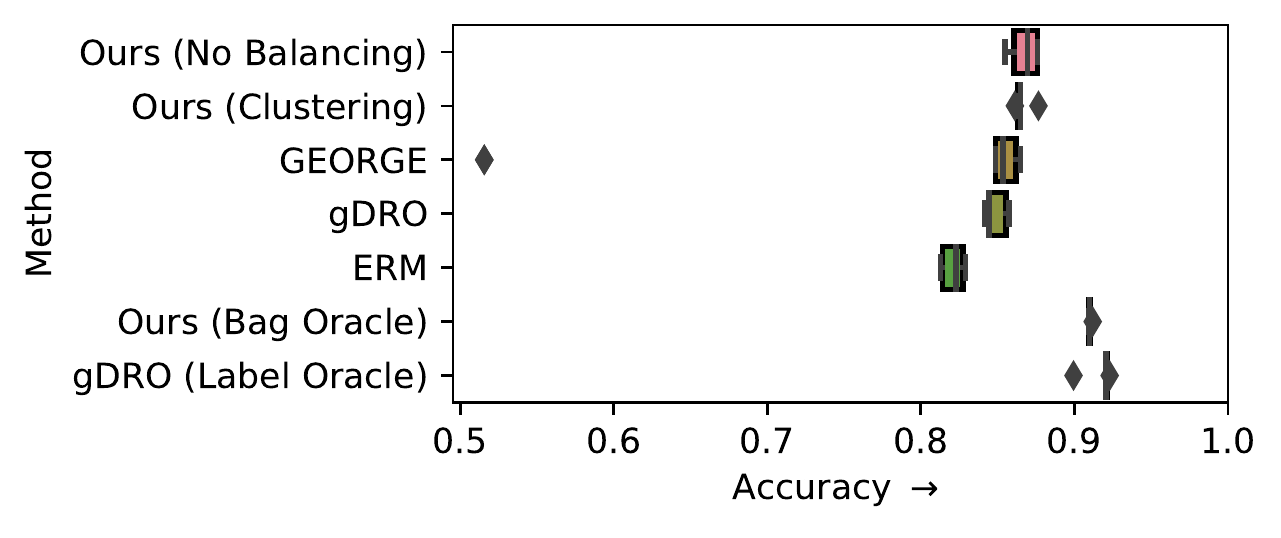}
  \includegraphics[width=0.49\textwidth]{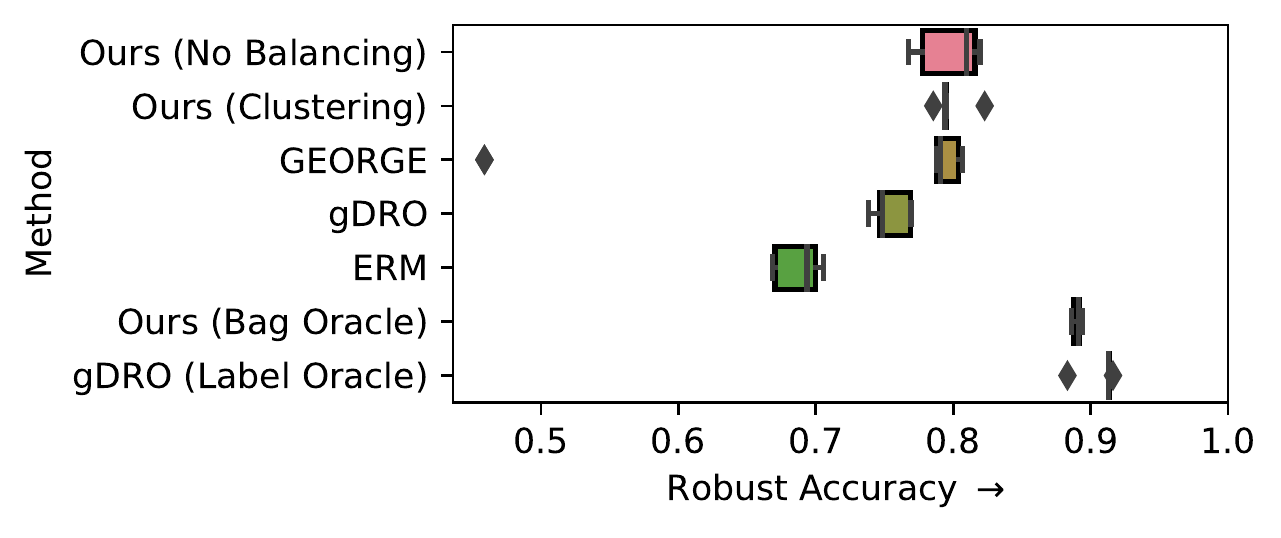}
\caption{
    Results from \textbf{5 repeats} for the CelebA dataset for the \emph{subgroup bias} scenario. The task is to predict ``smiling'' vs ``non-smiling'' and the subgroups are based on gender. The four sources are dropped one at a time from the training set
    (\textbf{first row}: smiling females; \textbf{second row}: smiling males; \textbf{third row}: non-smiling females, \textbf{fourth row}: non-smiling males), while the deployment set is kept fixed. ``Robust Accuracy'' refers to the minimum accuracy computed over the subgroups. 
  }%
 \label{fig:celeba-gender-smiling-full}
\end{figure*}

\section{Theoretical analysis}\label{sec:theoretical-analysis}
In this section, we present our theoretical results concerning the validity of our support-matching objective and the bound on the error introduced into it by clustering.
We use notation consistent with that used throughout the main text.

\subsection{Sampling function for the objective}%
\label{sub:sampling_function_for_objective}
The stated objective uses the following helper function:
\begin{align}
\Pi(s',y') = \begin{cases}
\{s'\}&\text{if }\,\mathcal{S}_{tr}(y=y')=\mathcal{S}\\
\mathcal{S}_{tr}(y=y')&\text{otherwise}~.
\end{cases}
\end{align}
This helper function determines which $s$ value in the training set an $s$-$y$ pair from the deployment set is mapped to.
(The $y$ value always stays \emph{the same} when mapping from deployment set to training set.)
To demonstrate the usage of this function,
we consider the example of binary Colored MNIST
with \(\gS=\{\text{\color{purple}{purple}}, \text{\color{green}green}\}\) and \(\gY=\{2, 4\}\)
where the training set is missing \((s=\text{purple}, y=4)\).
In this case, $\Pi$ takes on the following values:
\begin{align}
  \Pi(\text{{\color{purple}purple}}, 2) &= \{\text{\color{purple}purple}\}\\
  \Pi(\text{\color{green}green},  2) &= \{\text{\color{green}green} \}\\
  \Pi(\text{\color{purple}purple}, 4) &=\mathcal{S}_{tr}(y=4) = \{\text{\color{green}green} \}\\
  \Pi(\text{\color{green}green},  4) &=\mathcal{S}_{tr}(y=4) = \{\text{\color{green}green} \}
\end{align}
It is essential that \((s=\text{\color{purple}{purple}}, y=4)\) from the deployment set is mapped to \((s=\text{\color{green}{green}}, y=4)\)
from the training set, and not \((s=\text{\color{purple}{purple}}, y=2)\).
This procedure is illustrated in fig. \ref{fig:matching-repeated-correct}, and contrasted with an incorrect procedure based on balancing the bag according to $s$ in \ref{fig:matching-repeated-incorrect} -- such a procedure would result in invariance to $y$ instead of $s$, which is obviously undesirable.

In practice, we use the following sampling function $\pi$ to implement $\Pi$, sampling from it for all $(s,y) \in S \times Y$:
\begin{align}
  \pi(s',y') = \begin{cases} x\sim P_\mathit{train}(x|s=s',y'), \\
  \quad\quad\quad\quad\quad\quad\quad\quad\text{if }\,\mathcal{S}_{tr}(y=y')=\mathcal{S} \\
    x\sim P_\mathit{train}(x|s=\check{s},y'), \check{s}\sim \mathrm{uniform}(S_{tr}), \\
    \quad\quad\quad\quad\quad\quad\quad\quad\text{otherwise}~.
\end{cases}
\label{eq:functional-sampling}
\end{align}
With the assumption that our data follows a two-level hierarchy and all digits appear in the training set, the above sampling function $\pi$ traverses the first level which corresponds to the class-level information, and \emph{samples} the second level which corresponds to subgroup-level information when we have missing sources.

\begin{figure}[htp]
  \begin{subfigure}{0.49\textwidth}
    \centering
    \includegraphics[width=0.9\linewidth]{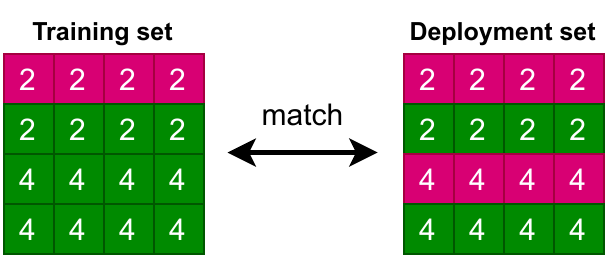}
    \caption{Correct matching procedure.}%
    \label{fig:matching-repeated-correct}
  \end{subfigure}
  \begin{subfigure}{0.49\textwidth}
    \centering
    \includegraphics[width=0.9\linewidth]{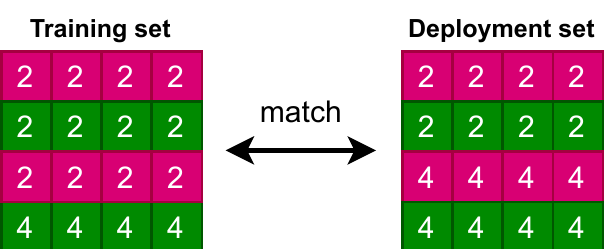}
    \caption{\emph{In}correct matching procedure.}%
    \label{fig:matching-repeated-incorrect}
  \end{subfigure}
  \caption{%
    The two natural matching procedures for one missing source in the training set.
    Only figure \ref{fig:matching-repeated-correct} (left) produces the desired invariance.
  }%
  \label{fig:matching-repeated}
\end{figure}

\subsection{Implication of the objective}\label{implication-of-the-objective}
We restate proposition 1 and present the proof.

We prove here that an encoding $f$ satisfying the objective is invariant to \(s\),
at least in those cases where the class does not have full \(s\)-support (which is exactly the case where it matters).

\begin{theorem}
If \(f\) is such that
\begin{align}
p_f|_{s\in \Pi(s',y'),y=y'} = q_f|_{s=s',y=y'}\quad\forall s'\in\mathcal{S}, y' \in\mathcal{Y}
\end{align}
and \(P_\mathit{train}\) and \(P_\mathit{dep}\) are data
distributions that correspond to the real data distribution \(P\),
except that some \(s\)-\(y\)-combinations are less prevalent, or, in the
case of \(P_\mathit{train}\), missing entirely, then, for every
\(y'\in\mathcal{Y}\), there is either full coverage of \(s\) for \(y'\)
in the training set (\(\mathcal{S}_{tr}(y=y')=\mathcal{S}\)), or the
following holds:
\begin{align}
P(s=s'|f(x)=z', y=y')=\frac{1}{n_s}~.
\end{align}
In other words: for \(y=y'\), \(f(x)\) is not predictive of \(s\).
\end{theorem}

\begin{proof}
If \(y'\) has full coverage of \(s\) in the training set, there is nothing to prove.
So, assume \(y'\) does not have full \(s\)-support.
That means \(\Pi(s',y')=\mathcal{S}_{tr}(y=y')\) for all \(s'\in \mathcal{S}\).
And so
\begin{align}
&P_\mathit{train}(f(x)=z'|s\in \mathcal{S}_{tr}(y=y'), y=y')\\
=\;&P_\mathit{dep}(f(x)=z'|s=s', y=y')\quad\quad\quad\quad\forall s'\in\mathcal{S} \nonumber
\end{align}
The left-hand side of this equation does not depend on \(s'\)
and so the right-hand side must have the same value for all \(s'\in\mathcal{S}\), which implies:
\begin{align}
&P_\mathit{dep}(f(x)=z'|s=s', y=y') \nonumber\\
=\;&P_\mathit{dep}(f(x)=z'|y=y')
\end{align}
Now, by assumption, the different data distributions \emph{train} and \emph{deployment}
only differ from the ``true'' distribution by the prevalence of the different \(s\)-\(y\)-combinations,
with the \emph{deployment} data distribution having all combinations but potentially not in equal quantity.
However, as we restrict ourselves to a certain combination (\(s=s',y=y'\)) in the above equation,
the equation also holds in the true data distribution:
\begin{align}
&P(f(x)=z'|s=s', y=y') \nonumber\\
=\;&P(f(x)=z'|y=y')
\end{align}
Then, using Bayes' rule, we get
\begin{align}
&P(s=s'|f(x)=z', y=y') \nonumber\\
=\;&\frac{P(f(x)=z'|s=s', y=y')P(s=s'|y=y')}{P(f(x)=z'|y=y')} \nonumber\\
=\;&P(s=s'| y=y')~.
\end{align}
Finally, in the true data distribution, we have a uniform prior:
\(P(s=s'|y=y')=(n_s)^{-1}\). This concludes the proof.
\end{proof}

\subsection{Bound on error introduced by clustering}\label{bound-on-error-introduced-by-clustering}

As previously stated, in practice, no labels are available for the deployment set.
Instead, we identify the relevant groupings by clustering.
Such clustering cannot be expected to be perfect.
So, how will clustering affect the calculation of our objective?

\begin{theorem}
If \(q_f(z)\) is a data distribution on
\(\mathcal{Z}\) that is a mixture of \(n_y\cdot n_s\) Gaussians, which
correspond to all unique combinations of \(y\in\mathcal{Y}\) and
\(s\in\mathcal{S}\), and \(p_f(z)\) is any data distribution on
\(\mathcal{Z}\), then without knowing \(y\) and \(s\) on \(q_f\), we can
estimate
\begin{align}
\sum\limits_{s'\in\mathcal{S}}\sum\limits_{y'\in\mathcal{Y}} TV(p_f|_{s\in \Pi(s',y'),y=y'}, q_f|_{s=s',y=y'})
\end{align}
with an error that is bounded by \(\tilde{O}(\sqrt{1/N})\) with high
probability, where \(N\) is the number of samples drawn from \(q_f\) for
learning.
\end{theorem}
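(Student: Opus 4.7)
The plan is to decompose the estimation error into two sources: (i) the error from having to recover the partition of $q_f$ from unlabeled samples via clustering, and (ii) the standard finite-sample error in comparing two distributions given samples. Because $q_f$ is a mixture of $K = n_y\cdot n_s$ Gaussians with each component identified with an $(s,y)$ pair, I would invoke classical results on density estimation for Gaussian mixtures: with $N$ unlabeled samples, the mixture parameters (means, covariances, mixing weights) can be recovered with $\ell_2$ error $\tilde{O}(\sqrt{1/N})$ up to polylogarithmic factors in $1/\delta$, and consequently the mixture density itself can be estimated in total variation at the same rate under standard non-degeneracy or separation conditions on the components.

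Given such an estimator, I would form empirical conditional distributions $\hat q_k$ by MAP-assigning each unlabeled sample to its maximum-posterior cluster. The TV distance between $\hat q_k$ and the true conditional $q_f|_{s=s',y=y'}$ is then $\tilde{O}(\sqrt{1/N})$ for the correct cluster-to-label correspondence $k\leftrightarrow (s',y')$. The correspondence itself can be fixed by choosing the permutation that minimises the total sum, which, under non-degeneracy, coincides with the true labelling with high probability since any mismatched permutation yields a strictly larger value in the infinite-sample limit. On the $p$-side the samples are labelled, so conditioning on $\{s\in\Pi(s',y'),y=y'\}$ is immediate and the empirical TV distance between two conditionals converges to the true TV at rate $O(\sqrt{1/N})$ by reducing to the parametric description of the densities. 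Combining both error sources via the triangle inequality for TV yields error $\tilde{O}(\sqrt{1/N})$ per summand, and summing over the constant number $K$ of terms preserves the rate.

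The main obstacle will be pinning down the non-degeneracy hypothesis needed to achieve the claimed rate: without component separation the GMM-learning rate degrades, and estimating TV between two arbitrary continuous distributions from samples is known \emph{not} to enjoy a parametric rate without structural assumptions. These difficulties are avoided precisely by leveraging the parametric Gaussian-mixture structure of $q_f$, which reduces TV estimation to plugging in the learned parameters whose estimation error is of the claimed order; carefully stating and verifying the hypotheses under which this reduction is valid, and under which the best-permutation matching argument goes through, is the main technical content of the proof.
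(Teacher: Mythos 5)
Your plan follows essentially the same strategy as the paper's proof: learn the Gaussian mixture from the $N$ unlabeled samples of $q_f$, match the learned components to the true $(s,y)$-conditionals, and transfer the error to the objective via the triangle inequality for $TV$, summing over the constant number $n_s n_y$ of terms. The difference is in the supporting machinery, and it matters. The paper does \emph{not} go through parameter recovery at all: it invokes the near-optimal GMM \emph{density-estimation} result of Ashtiani et al.\ (2020), which returns a $k$-component mixture $\hat q_f$ with $TV(q_f,\hat q_f)\le \tilde O(\sqrt{1/N})$ with high probability and requires no separation or non-degeneracy assumptions, and then applies Lemma~3 of Sohoni et al.\ (2020), which directly asserts the existence of a component-to-$(s',y')$ matching $i$ with $TV\bigl(q_f(z\mid s=s',y=y'),\,\hat q_f(z\mid k=i(s',y'))\bigr)\le\tilde O(\sqrt{1/N})$. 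Your route via $\ell_2$ parameter recovery plus MAP assignment and a best-permutation argument can be made to work, but only under the separation/identifiability hypotheses you yourself flag, which are not part of the theorem statement and are exactly what the density-estimation route avoids; this is the one genuine gap in your plan relative to the claimed result. A second, smaller discrepancy: the paper only accounts for the error caused by not knowing $y$ and $s$ on $q_f$; the conditionals of $p_f$ (which is labeled) are treated as given, so no finite-sample argument is made on the $p$-side. Your claim that the $p$-side empirical $TV$ converges at rate $O(\sqrt{1/N})$ ``by reducing to the parametric description of the densities'' cannot be justified as stated, because $p_f$ is an arbitrary distribution with no parametric structure; if you want to handle $p$-side sampling error you would need additional assumptions beyond those in the statement, whereas simply treating the labeled conditionals as known, as the paper does, sidesteps the issue.
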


\begin{proof}
First, we produce an estimate \(\hat{q}_f\) of \(q_f\) using the algorithm from \cite{ashtiani2020near},
which gives us a mixture-of-Gaussian distribution of \(n_y\cdot n_s\) components
with \(TV(q_f, \hat{q}_f)\leq \tilde{O}(\sqrt{1/N})\) with high probability,
where \(N\) is the number of data points used for learning the estimate.
Then, by Lemma 3 from \cite{SohDunAngGuetal20}, \emph{there exists} a
mapping \(i\) from the components \(k\) of the Gaussian mixture
\(\hat{q}_f\) to the \(s\)-\(y\)-combinations in \(q_f\) such that
\begin{align}
&TV(q_f(z|s=s',y=y'),\hat{q}_f(z|k=i(s',y'))) \\
&\quad\quad\quad\quad\quad\quad\quad\quad\quad\quad\quad\quad\quad\quad\leq \tilde{O}\left(\frac{1}{\sqrt{N}}\right)~. \nonumber
\end{align}
Now, consider the element of the sum in the objective that corresponds
to \((s',y')\):
\begin{align}
&TV(p_f(z|s\in \Pi(s',y'),y=y'), q_f(z|s=s',y=y'))\nonumber\\
\leq \;&TV(p_f(z|s\in \Pi(s',y'),y=y'), \hat{q}_f(z|k=i(s',y')))\nonumber\\
&\quad\quad+TV(\hat{q}_f(z|k=i(s',y')), q_f(z|s=s',y=y'))\nonumber\\
\leq \;&TV(p_f(z|s\in \Pi(s',y'),y=y'), \hat{q}_f(z|k=i(s',y'))) \nonumber\\
&\quad\quad+\tilde{O}(1/\sqrt{N})
\end{align}
Thus, for the whole sum over \(s\) and \(y\), the error is bounded by
\begin{align}
\sum\limits_{s'\in\mathcal{S}}\sum\limits_{y'\in\mathcal{Y}}\tilde{O}(\sqrt{1/N})
\leq n_sn_y \max\limits_{(s',y')\in\gS\times\gY}\tilde{O}(\sqrt{1/N})
\end{align}
which is equivalent to just \(\tilde{O}(\sqrt{1/N})\).
\end{proof}

\section{Dataset Construction}\label{sec:dataset-construction}
\subsection{Colored MNIST and biasing parameters}
The MNIST dataset \cite{lecun1998gradient} consists of 70,000 (60,000 designated for training, 10,000 for testing) images of gray-scale hand-written digits. We color the digits following the procedure outlined in \cite{KehBarThoQua20}, randomly assigning each sample one of ten distinct RGB colors. Each source is then a combination of digit-class (class label) and color (subgroup label). We use no data-augmentation aside from symmetrically zero-padding the images to be of size 32x32.

To simulate a more real-world setup where the data, labeled or otherwise, is not naturally balanced, we bias the Colored MNIST training and deployment sets by downsampling certain color/digit combinations. The proportions of each such combination \emph{retained} in the \emph{subgroup bias} (in which we have one source missing from the training set) and \emph{missing subgroup} (in which we have two sources missing from the training set) are enumerated in table~\ref{color_mnist_biasing_po} and \ref{color_mnist_biasing_id}, respectively.
For the 3-digit-3-color variant of the problem, no biasing is applied to either the deployment set or the training set (the missing combinations are specified in the caption accompanying figure~\ref{fig:cmnist-3dig-4miss-add}); this variant was experimented with only under the subgroup-bias setting.

\begin{table}[ht]
\caption{Biasing parameters for the training (left) and deployment (right) sets of Colored MNIST in the \emph{subgroup bias} setting.}
\label{color_mnist_biasing_po}
\centering
\begin{tabular}{lcc}
\toprule
Combination   & \multicolumn{2}{c}{Proportion retained} \\ \cmidrule(lr){2-3}
  & training set & deployment set \\ \midrule
(y = 2, s = {\color{purple}purple}) & 1.0  & 0.7 \\
(y = 2, s = {\color{green}green})   & 0.3  & 0.4 \\
(y = 4, s = {\color{purple}purple}) & 0.0  & 0.2 \\
(y = 4, s = {\color{green}green})   & 1.0  & 1.0 \\
\bottomrule
\end{tabular}
\end{table}

\begin{table}[ht]
\caption{Biasing parameters for the training (left) and deployment (right) sets of Colored MNIST in the \emph{missing subgroup} setting.}
\label{color_mnist_biasing_id}
\centering
\begin{tabular}{lcc}
\toprule
Combination   & \multicolumn{2}{c}{Proportion retained} \\ \cmidrule(lr){2-3}
  & training set & deployment set \\ \midrule
(y = 2, s = {\color{purple}purple}) & 0.0  & 0.7 \\
(y = 2, s = {\color{green}green})   & 0.85 & 0.6 \\
(y = 4, s = {\color{purple}purple}) & 0.0  & 0.4 \\
(y = 4, s = {\color{green}green})   & 1.0  & 1.0 \\
\bottomrule
\end{tabular}
\end{table}

\subsection{Adult Income biasing parameters}\label{ssec:dataset-construction-adult}
For the Adult Income dataset, we do not need to apply any synthetic biasing as the dataset naturally contains some bias w.r.t. $s$. Thus, we instantiate the deployment set as just a random subset of the original dataset. However, explicit balancing of the test set \emph{is} needed to yield meaningful evaluation (namely through the penalizing of biased classifiers) but care needs to be taken in doing so. Balancing the test set such that
\begin{align}
    |\{x \in X |s=0, y=0\}| &= |\{x \in X |s=1, y=0\}|    \nonumber\\
    \text{and}~|\{x \in X |s=0, y=1\}| &= |\{x \in X |s=1, y=1\}|
\end{align}
where for both target classes, $y=0$ and $y=1$, the proportions of the groups $s=0$ and $s=1$ are made to be the same, is intuitive, yet at the same time precludes sensible comparison of the accuracy/fairness trade-off of the different classifiers.
Indeed, with the above conditions, a majority classifier (predicting all 1s or 0s) achieves comparable accuracy to the fairness-unaware baselines, while also being perfectly fair by construction.
This observation motivated us to devise an alternative scheme, where we balance the test set according to the following constraints
\begin{align}
    & |\{x \in X |s=0, y=0\}| 
    = |\{x \in X |s=0, y=1\}|  \nonumber \\
    = &|\{x \in X |s=1, y=1\}|
    = |\{x \in X |s=1, y=0\}|~.
 \end{align}
That is, all subsets of $\gS \times \gY$ are made to be equally sized. Under this new scheme the accuracy of the the majority classifier is 50\% for the binary-classification task.

\section{Model details and optimization}
\subsection{Overview of model architecture}\label{sec:model-arch}
\begin{figure*}[htp]
    \centering
    \includegraphics[width=\textwidth]{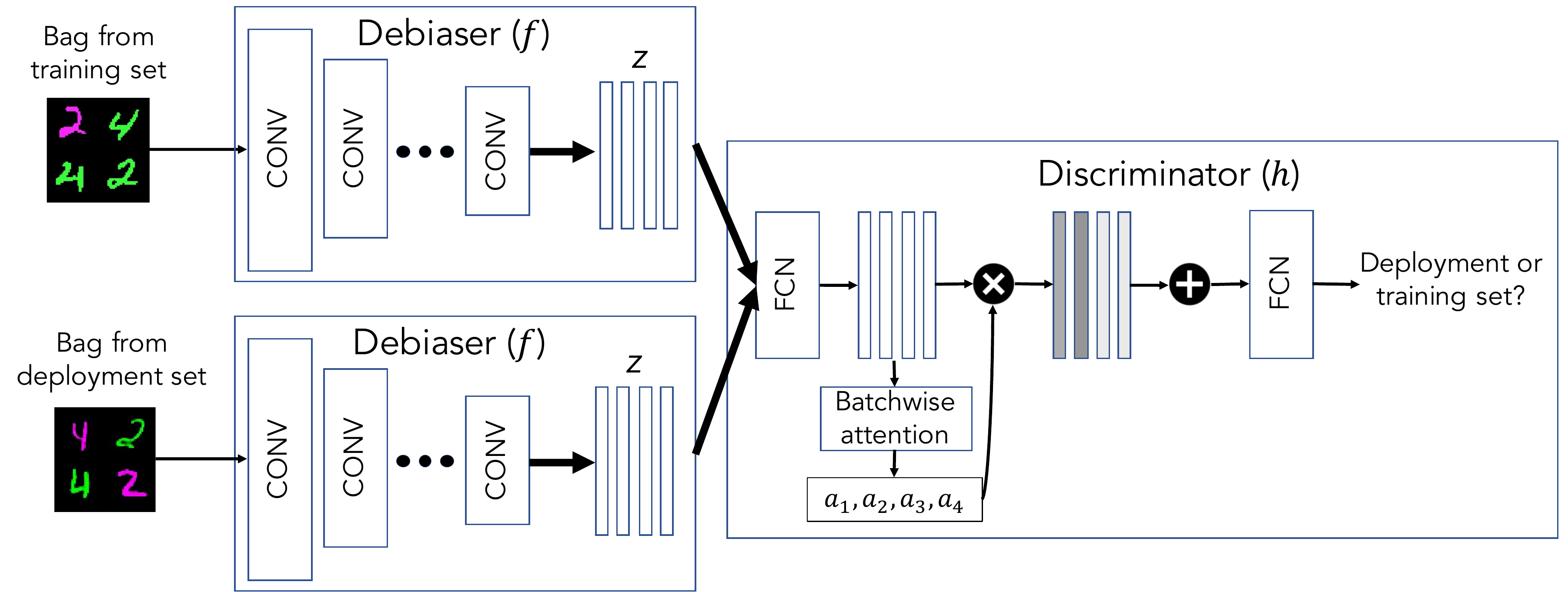}
    \caption{%
    The main components of our support-matching algorithm, $f$ (debiaser) and $h$ (discriminator). The debiaser is trained to produce encodings, $z$ of the data that are invariant to the subgroups differences. %
    In order to determine whether a bag of encodings originates from the training set or the deployment set, the discriminator performs an attention-weighted aggregation over the bag dimension to model interdependencies between the samples. In the case of Colored MNIST where {\color{purple}purple} fours constitute the missing subgroup, the discriminator can identify an encoding of a bag from the training set by the absence of such samples as long as color information is detectable in $z$. %
    By learning a subgroup invariant representation, the debiaser can hide the origin of the bags from the discriminator.}
    \label{fig:architecture}
\end{figure*}%
We give a more detailed explanation of the model used in our method.
Fig.~\ref{fig:architecture} shows the core of our method:
the debiaser, $f$, which produces bags of encodings, $z$
-- on both the training and the deployment set --
which are then fed to a discriminator that tries to identify the origin of the bags.
The discriminator uses batch-wise attention in order to consider a bag as a whole,
which allows cross-comparisons.

\subsection{Details of the attention mechanism}\label{ssec:attention-mechanism}
The \emph{discriminator} function $h$ that predicts which dataset a bag of samples embedded in $z$ was sampled from should have the following property: $h((f(x) | x \in B)) = h((f(x) | x \in \pi (B)))$ for all permutations $\pi$, and $f: x \rightarrow z$.
For the entirety of function $h$ -- composed of sub-functions $h_1(h_2(h_3...)))$ -- to have this property, it suffices that only the innermost sub-function, $\rho$, does.
While there are a number of choices when it comes to defining $\rho$, we choose a weighted average $\rho = \frac{1}{|\mathcal{B}|} \sum_{x \in B}\mathrm{attention}(f(x), B) \cdot f(x)$, with weights computed according to a learned attention mechanism.
The idea of using an attention mechanism for set-wise classification has been previously explored to great success by, e.g., \cite{lee2019set}. %
We experiment with two different types of attention mechanisms for the bag-wise pooling layer of our discriminator, finding both to work well: 1) the gated attention mechanism proposed by \cite{ilse2018attention}; 2) the scaled dot-product attention per \cite{vaswani2017attention}.
For the latter, we define $K$ and $V$ to be  $z$, and $Q$ to be the mean of $z$ over $\mathcal{B}$.
The output of $\rho$ is then further processed by a series of fully-connected layers and the final output is the binary prediction for a given bag of samples.

\subsection{Training procedure and hyperparameters}
\begin{table*}[tp]
 \centering
 \caption{Selected hyperparameters for experiments with Colored MNIST, Adult and CelebA datasets.}
 \label{tab:hparams}
 \scalebox{0.8}{
 \begin{tabular}{llll}
 \toprule
 & \textbf{Colored MNIST} & \textbf{Adult} & \textbf{CelebA}       
 \\ & 2-dig SB / 2-dig MS / 3-dig SB
 \\ \midrule
 Input size  &   $3 \times 32 \times 32$ & $61$ & $3 \times 64 \times 64$ \\  \midrule
 \multicolumn{4}{c}{AutoEncoder}                     \\ \midrule
 Levels                      & $4$         & $1$    & $5$\\
 Level depth                 & $2$         & $1$    & $2$\\
 Hidden units / level        & $[32, 64, 128, 256]$ & $[61]$ & $[32, 64, 128, 256, 512]$\\
 Activation                  & GELU        & GELU   & SiLU  \\
 Layer-wise Normalization               & -           & -      & LayerNorm \\
 Downsampling op.  & Strided Convs. & -- & Strided Convs.\\
 Reconstruction loss         & MSE         & Mixed$^1$  & MSE \\
 Learning rate               & $1 \times 10^{-3}$   & $1 \times 10^{-3}$  & $1 \times 10^{-3}$ \\ \midrule
 \multicolumn{4}{c}{Clustering}                      \\ \midrule
 Batch size                  & $256$      & $1000$  & $256$ \\
 AE pre-training epochs      & $150$        & $100$ & $10$  \\
 Clustering epochs           & $100$       & $300$  & $20$ \\
 Self-supervised loss & Cosine + BCE & Cosine + BCE & Cosine \\
 U (for ranking statistics)             & $5$         & $3$     & $8$    \\   \midrule
 \multicolumn{4}{c}{Support-Matching}                   \\ \midrule
 Batch size & $1$/$32$/$14$  & $64$   & $32$ \\
 Bag size   & $256$/$8$/$18$ & $32$ & $8$ \\
 Training iterations    & $8\text{k}/8\text{k}/20\text{k}$ & $5\text{k}$ & $2\text{k}$ \\
 Encoding ($z$) size$^2$  & $128$   & $35$  & $128$ \\
 Binarized $\tilde{s}$ & \xmark\, / \cmark\, / \cmark & \xmark & \xmark \\
 $y$-predictor weight ($\lambda_1$) & $1$ & $0$ & $1$  \\ 
 $s$-predictor weight ($\lambda_2$) & $1$ & $0$ & $1$  \\ 
 Adversarial weight ($\lambda_3$)   & $1 \times
 10^{-3}$   & $1$   & $1$\\ 
 Stop-gradient $\left(\nabla_\theta h_\psi(f_\theta(X^\mathit{dep}))=0\right)$ & \xmark & \cmark & \xmark \\
 \midrule
 \multicolumn{4}{c}{Predictors}   \\ \midrule
 Learning rate  & $3 \times 10^{-4}$ &   $1 \times 10^{-3}$  $ 1 \times 10^{-3}$\\
 \midrule
 \multicolumn{4}{c}{Discriminator}                   \\ \midrule
 Attention mechanism$^3$    & Gated   & Gated & Gated \\
 Hidden units pre-aggregation  & $[256, 256]$  & $[32]$ & $[256, 256]$\\
 Hidden units post-aggregation & $[256, 256]$ & --  & $[256, 256]$ \\
 Embedding dim (for attention) & $32$ & $128$ & $128$ \\
 Activation & GELU & GELU & GELU \\
 Learning rate  & $3 \times 10^{-4}$    & $1 \times 10^{-3}$ & $1 \times 10^{-3}$\\
 Updates / AE update    & $1$  & $3$    & $1$    \\
 \bottomrule
 \addlinespace
 \multicolumn{4}{p{17cm}}{\footnotesize $^1$ Cross-entropy is used for categorical features, MSE for continuous features.} \\
 \multicolumn{4}{p{17cm}}{\footnotesize $^2$ $|z|$ denotes the combined size of $\tilde{s}$ and $z$, with the former occupying $\ceil{\text{log}_2(\mathcal{S})}$ dimensions, the latter the remaining dimensions.} \\
 \multicolumn{4}{p{17cm}}{
 \footnotesize $^3$ 
 The attention mechanism used for computing the sample-weights within a bag.
 \emph{Gated} refers to gated attention  proposed by \cite{ilse2018attention}, while \emph{SDP} refers to the scaled dot-product attention proposed by \cite{vaswani2017attention}.
 }
 \end{tabular}
 }
 ~\\
 ~\\
 ~\\
\end{table*}

The hyperparameters and architectures for the AutoEncoder (\texttt{AE}), Predictor and Discriminator subnetworks are detailed in Table \ref{tab:hparams} for all three datasets.We train all models using \texttt{Adam} \cite{KinBa15}.

For the Colored MNIST and CelebA datasets, the baseline \texttt{ERM}, \texttt{DRO}, \texttt{LfF} (in the case of the former) and \texttt{gDRO} (in the case of the latter) models use a convolutional backbone consisting of one Conv-BN-LReLU block per ''stage``, with each stage followed by max-pooling operation to spatially downsample by a factor of two to produce the subsequent stage. This backbone consists of 4 and 5 stages for Colored MNIST and CelebA, respectively.
The output of the backbone is flattened and fed to a  single fully-connected layer of size $|Y|$ in order to obtain the class-prediction, $\hat{y_i}$, for a given instance. To evaluate our method, we simply train a linear classifier on top of $z$; this is sufficient due to linear-separability being encouraged during training by the $y$-predictor.
For the Adult Income dataset, we use an MLP composed of a single hidden layer 35 units in size, followed by a SELU activation \cite{klambauer2017self}, as both the downstream classifier for our method, and as the network architecture of the baselines. 
All baselines and downstream classifiers alike were trained for $60$ epochs with a learning rate of $1 \times 10^{-3}$ and a batch size of $256$.

Since, by design, we do not have labels for all subgroups the model will be tested on, and bias against these missing subgroups is what we aim to combat,
properly validating, and thus conducting hyperparameter selection for models generally, is not straightforward.
Indeed, performing model-selection for domain generalization problems is well-known to be a difficult problem \cite{gulrajani2021search}.
We can use estimates of the mutual information between the learned-representation and $s$ and $y$ (which we wish to minimize w.r.t.\ to the former, maximize w.r.t.\ the latter) to guide the process, though optimizing the model w.r.t.\ to these metrics obtained from only the training set does not guarantee generalization to the missing subgroups.
We can, however, additionally measure the entropy of the predictions on the encoded test set and seek to maximize it across all samples, or alternatively train a discriminator of the same kind used for distribution matching as a measure the shift in the latent space between datasets.
We use the latter approach (considering the combination of the learned distance between subspace distributions, accuracy, and reconstruction loss) to inform an extensive grid-search over the hyperparameter space for our method.

For the \texttt{DRO} baseline, we allowed access to the labels of the test set for the purpose of hyperparameter selection, performing a grid-search over multiple splits to avoid overfitting to any particular instantiation.
Specifically, the threshold ($\eta$) parameter for \texttt{DRO} was determined by a grid-search over the space $\{0.01, 0.1, 0.3, 1.0\}$.

In addition to the losses stated in the support-matching objective, $\mathcal{L}$, in the main text, we also regularize the encoder by the $\ell^2$ norm of its embedding, multiplied by a small pre-factor, finding this to work better than more complex regularization methods, such as spectral normalization \cite{miyato2018spectral}, for stabilizing adversarial training.
\section{Additional analysis of results}\label{sec:additional-analysis}
\subsection{Visualizations of results}\label{sec:qual-results}
\begin{figure}[tp]
  \centering
  \begin{subfigure}[b]{0.49\columnwidth}
    \centering
    \includegraphics[width=\textwidth]{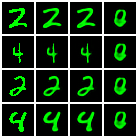}
    \caption{
    Different reconstructions on the training set.
    Corresponding to: original, full reconstruction, reconstruction of $z$ ($\tilde{s}$ zeroed out), reconstruction of $\tilde{s}$ ($z$ zeroed out).
    }%
    \label{fig:cmnist-recon-training}
  \end{subfigure}
   \hfill
  \begin{subfigure}[b]{0.49\columnwidth}
    \centering
    \includegraphics[width=\textwidth]{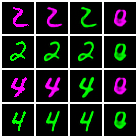}
    \caption{
    Different reconstructions on the deployment set.
    Corresponding to: original, full reconstruction, reconstruction of $z$ ($\tilde{s}$ zeroed out), reconstruction of $\tilde{s}$ ($z$ zeroed out).
    }%
    \label{fig:cmnist-recon-deployment}
  \end{subfigure}
  \caption{
   Visualization of our method's solutions for the Colored MNIST dataset, with {\color{purple}purple} as the missing subgroup.
   In each of the subfigures \ref{fig:cmnist-recon-training} and \ref{fig:cmnist-recon-deployment}:
   Column 1 shows the original images from $x$ from the respective set.
   Column 2 shows plain reconstructions generated from $x_\textit{recon}=g(f(x), t(x))$.
   Column 3 shows reconstruction with zeroed-out $\tilde{s}$: $g(f(x), 0)$, which effectively visualizes $z$.
   Column 4 shows the result of an analogous process where $z$ was zeroed out instead.
  }%
  \label{fig:cmnist-recon}
\end{figure}%
\begin{figure}[tp]
  \centering
  \begin{subfigure}[b]{0.49\columnwidth}
    \centering
    \includegraphics[width=\textwidth]{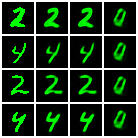}
    \caption{
    Different reconstructions on the training set.
    Corresponding to: original, full reconstruction, reconstruction of $z$ ($\tilde{s}$ zeroed out), reconstruction of $\tilde{s}$ ($z$ zeroed out).
    }%
    \label{fig:cmnist-recon-training-failure}
  \end{subfigure}
   \hfill
  \begin{subfigure}[b]{0.49\columnwidth}
    \centering
    \includegraphics[width=\textwidth]{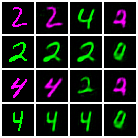}
    \caption{
    Different reconstructions on the deployment set.
    Corresponding to: original, full reconstruction, reconstruction of $z$ ($\tilde{s}$ zeroed out), reconstruction of $\tilde{s}$ ($z$ zeroed out).
    }%
    \label{fig:cmnist-recon-deployment-failure}
  \end{subfigure}
  \caption{
   Visualization of a failure of our method for the Colored MNIST dataset, with {\color{purple}purple} as the missing subgroup.
   In each of the subfigures \ref{fig:cmnist-recon-training-failure} and \ref{fig:cmnist-recon-deployment-failure}:
   Column 1 shows the original images from $x$ from the respective set.
   Column 2 shows plain reconstructions generated from $x_\textit{recon}=g(f(x), t(x))$.
   Column 3 shows reconstruction with zeroed-out $\tilde{s}$: $g(f(x), 0)$, which effectively visualizes $z$.
   Column 4 shows the result of an analogous process where $z$ was zeroed out instead.
  }%
  \label{fig:cmnist-recon-failure}
\end{figure}%
\begin{figure}[htp]
     \centering
     \includegraphics[width=0.5\textwidth]{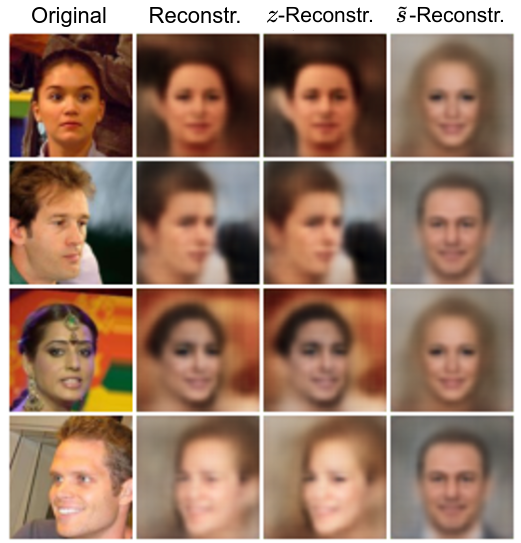}
     \caption{%
     Visualization of our method's solutions for the CelebA dataset, with ``smiling females'' as the missing subgroup.
     Column 1 shows the original images from $x$ from the deployment set of CelebA.
     Column 2 shows plain reconstructions generated from $x_\textit{recon}=g(f(x), t(x))$.
     Column 3 shows reconstruction with zeroed-out $\tilde{s}$: $g(f(x), 0)$, which effectively visualizes $z$.
     Column 4 shows the result of an analogous process where $z$ was zeroed out instead.
     }%
     \label{fig:celeba-recons}
\end{figure}%
\begin{figure*}[htp]
  \centering
    \begin{subfigure}[b]{0.49\textwidth}
    \includegraphics[width=\textwidth]{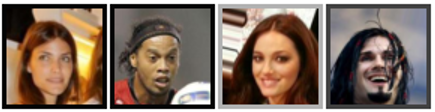}
    \end{subfigure}
    \hfill
    \begin{subfigure}[b]{0.49\textwidth}
    \includegraphics[width=\textwidth]{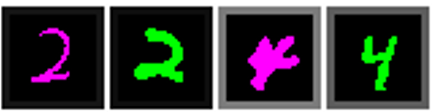}
    \end{subfigure}
  \caption{
    Example sample-wise attention maps for bags of CelebA (left) and CMNIST (right) images sampled from a balanced deployment set. The training set is biased according to the \emph{subgroup bias} setting where for CelebA ``smiling females'' constitute the missing source and for Colored MNIST {\color{purple}purple} fours constitute the missing source. The attention weights are used during the discriminator's aggregation step to compute a weighted sum over the bag. The attention-weight assigned to each sample is proportional to the lightness of its frame, with black signifying a weight of 0, white a weight of 1. Those samples belonging to the missing subgroup are assigned the highest weight as they signal from which dataset (training vs. deployment) the bag containing them was drawn from.
  }%
  \label{fig:attn_maps}
\end{figure*}%
We show qualitative results of the disentangling in figures \ref{fig:cmnist-recon}, \ref{fig:cmnist-recon-failure} (both Colored MNIST), and \ref{fig:celeba-recons} (CelebA).
Fig.~\ref{fig:cmnist-recon} shows successful disentangling (from a run that achieved close to 100\% accuracy);
in the deployment set the representation $z$ has lost all coloring (see column 3 in the figures).
Fig.~\ref{fig:cmnist-recon-failure} on the other hand, shows a visualization from a \emph{failed} run;
Instead of encoding purple 2's and green 2's with the same representation,
the model here encoded purple 2's and green 4's as similar.
This is a valid solution of the given optimization problem
-- the representation is invariant to training set vs deployment set --
but it is definitely not the intended solution.

Fig.\ref{fig:celeba-recons} shows visualizations for CelebA.
With a successful disentangling, column 3 (visualization of $z$) should show a version of the image that is ``gender-neutral'' (i.e., invariant to gender).
Furthermore, column 4 (visualization of $\tilde{s}$) should be invariant to the class label (i.e., ``smiling''), so the images should either be all with smiles or all without smiles.

Fig.~\ref{fig:attn_maps} shows attention maps for bags from the deployment set. We can see that the model pays special attention to those samples that are not included in the training set.
For details, see the captions.

\subsection{Additional metrics}\label{sec:additional-metrics}
Figures~\ref{fig:cmnist-2v4-partial-add}, \ref{fig:cmnist-2v4-miss-s-add},  and \ref{fig:celeba-gender-smiling-add} show the true positive rate (TPR) ratio and the true negative rate (TNR) ratio as additional metrics for Colored MNIST (2 digits) and CelebA.
These are computed as the ratio of TPR (or TNR) on subgroup $s=0$ over the TPR (or TNR) on subgroup $s=1$; if this gives a number greater than 1, the inverse is taken.
Similarly to the PR ratio reported in the main paper, these ratios give an indication of how much the prediction of the classifier depends on the subgroup label $s$.

Figure~\ref{fig:cmnist-3dig-4miss-add} shows metrics specific to multivariate $s$ (i.e., non-binary $s$).
We report the minimum (i.e. farthest away from 1) of the pairwise ratios (TPR/TNR ratio min) as well as the largest difference between the raw values (TPR/TNR diff max).
Additionally, we compute the Hirschfeld-Gebe\-lein-R\'enyi (HGR) maximal correlation \cite{renyi1959measures} between $S$ and $Y$, serving as a measure of dependence defined between two variables with arbitrary support.
\begin{figure*}[htp]
  \centering
  \includegraphics[width=0.49\textwidth]{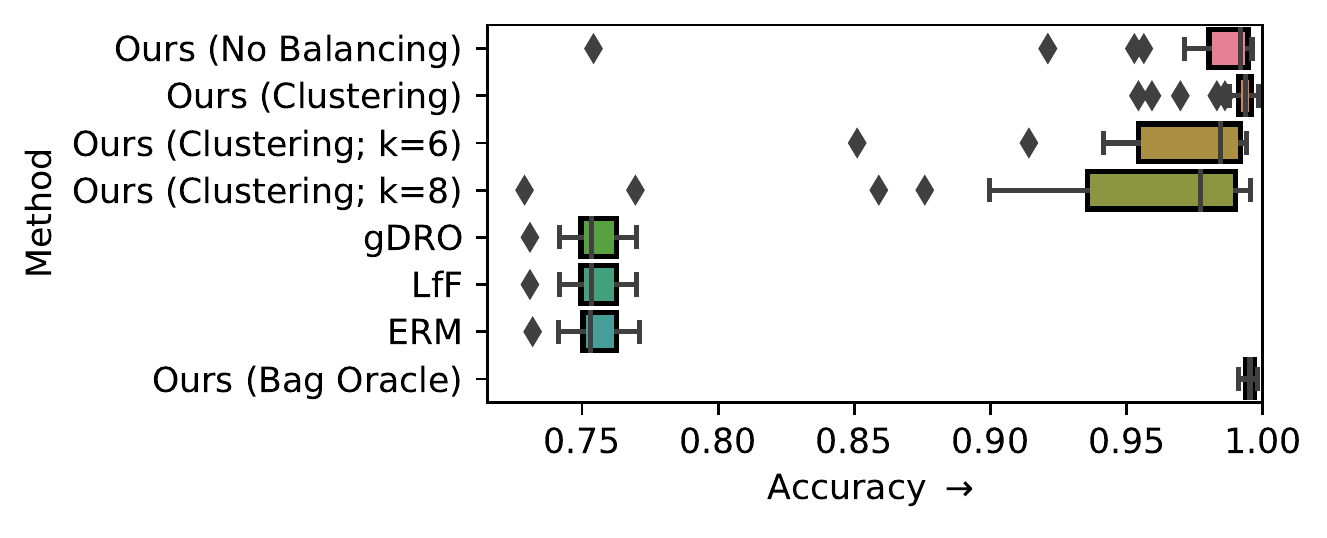}
  \includegraphics[width=0.49\textwidth]{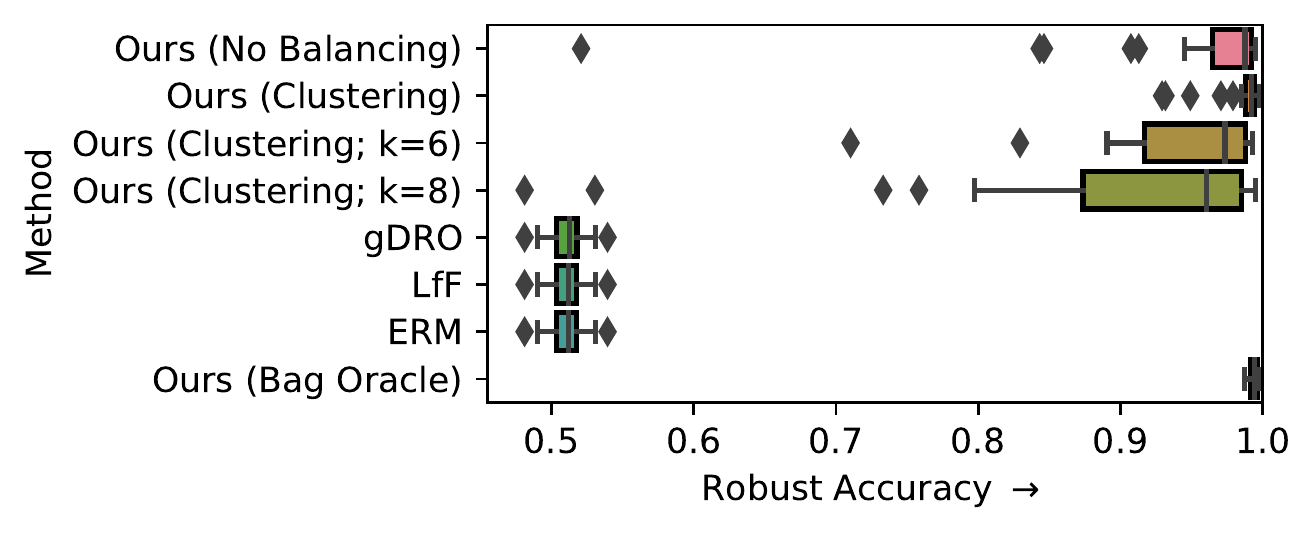}
  \includegraphics[width=0.49\textwidth]{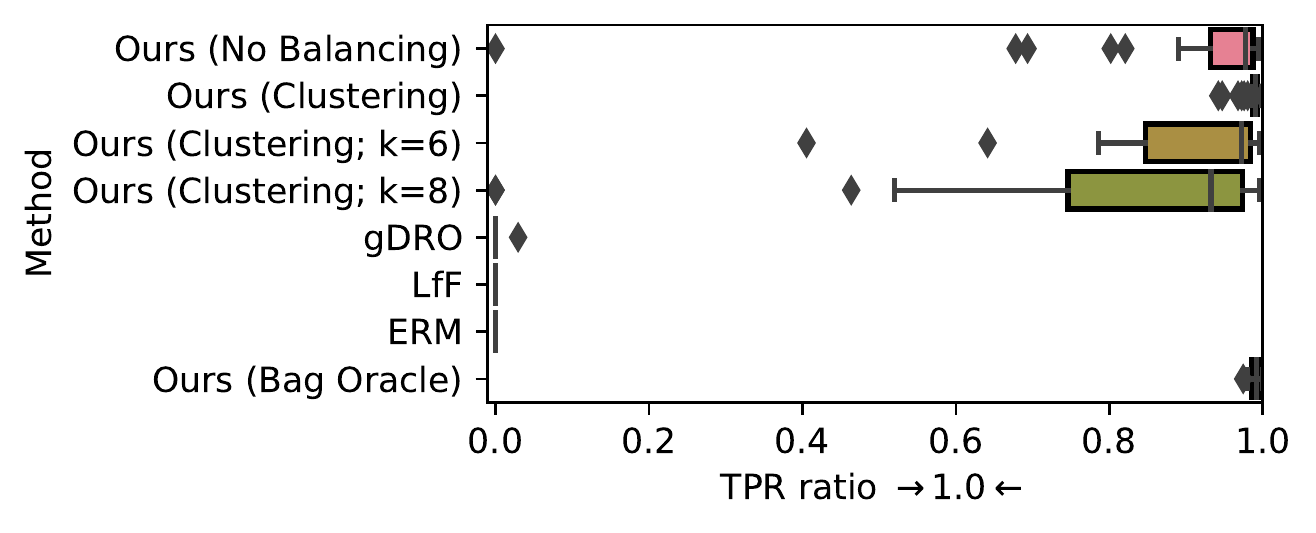}
    \includegraphics[width=0.49\textwidth]{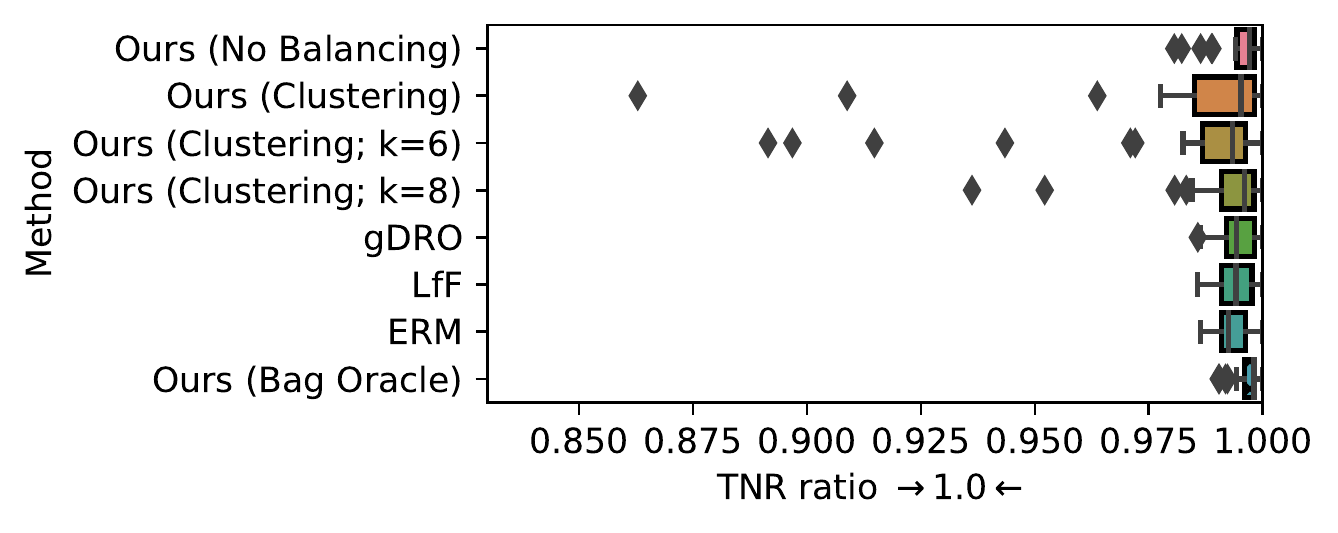}
  \caption{
    Results from \textbf{30 repeats} for the Colored MNIST dataset with two digits, 2 and 4, with \emph{subgroup bias} for the color `{\color{purple}purple}': for {\color{purple}purple}, only the digit class `2' is present.
    \textbf{Top left}: Accuracy.
    \textbf{Top right}: Positive rate ratio.
    \textbf{Bottom left}: True positive rate ratio.
    \textbf{Bottom right}: True negative rate ratio.
    For \texttt{Ours (Clustering)}, the clustering accuracy was 96\% $\pm$ 6\%.
    For an explanation of \texttt{Ours (Clustering; k=6/8)} see section~\ref{sec:overclustering}.
  }%
  \label{fig:cmnist-2v4-partial-add}
\end{figure*}
\begin{figure*}[htp]
  \centering
  \includegraphics[width=0.49\textwidth]{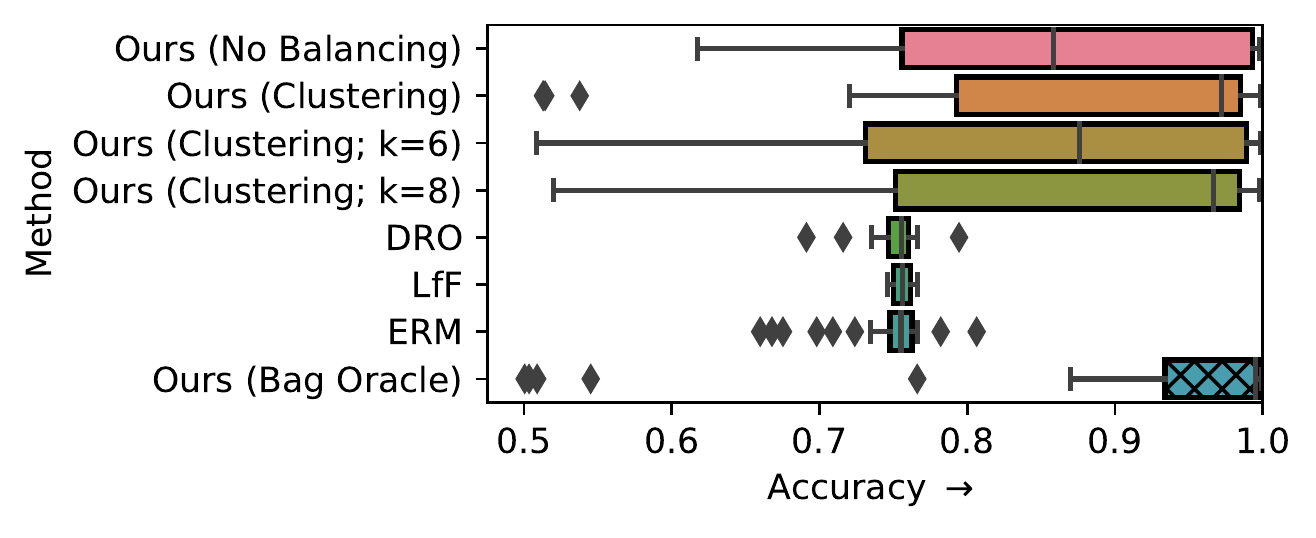}
  \includegraphics[width=0.49\textwidth]{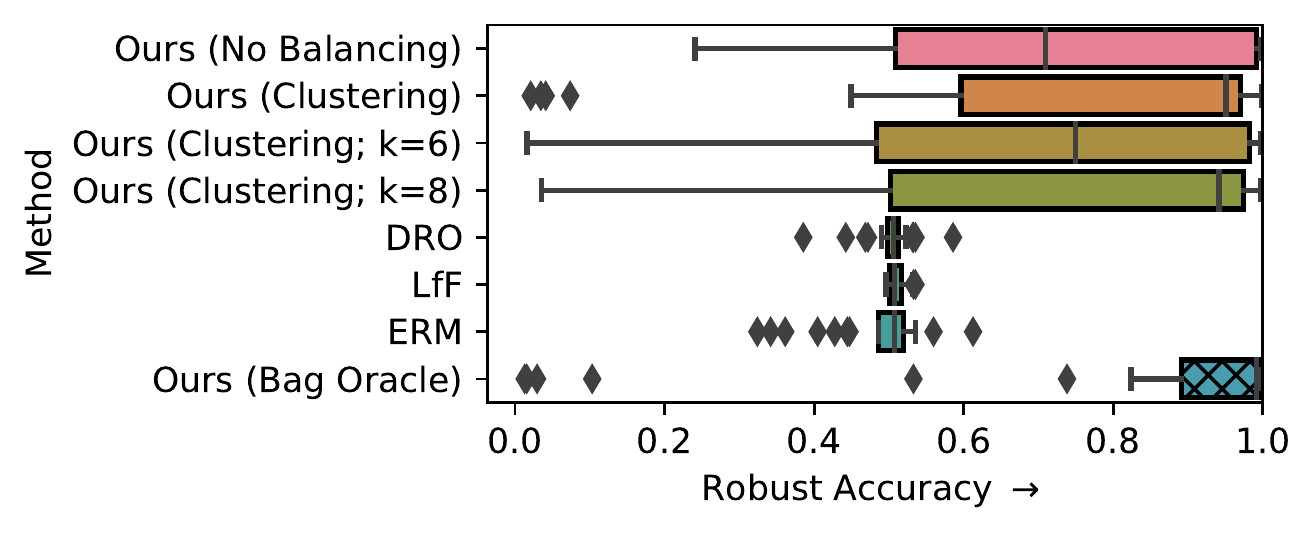}
  \includegraphics[width=0.49\textwidth]{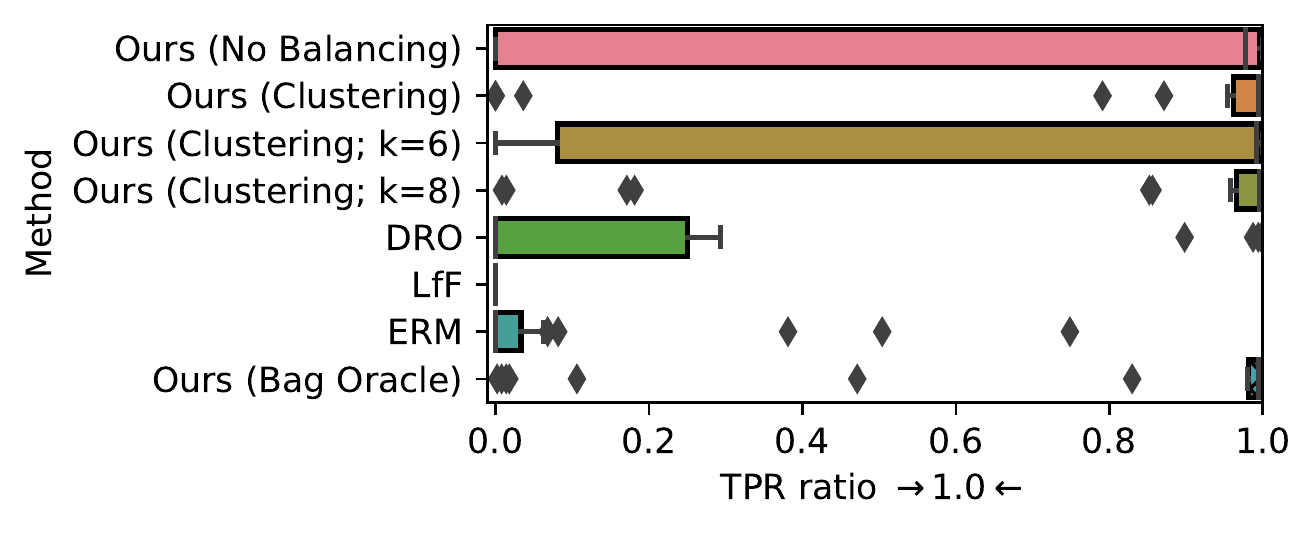}
  \includegraphics[width=0.49\textwidth]{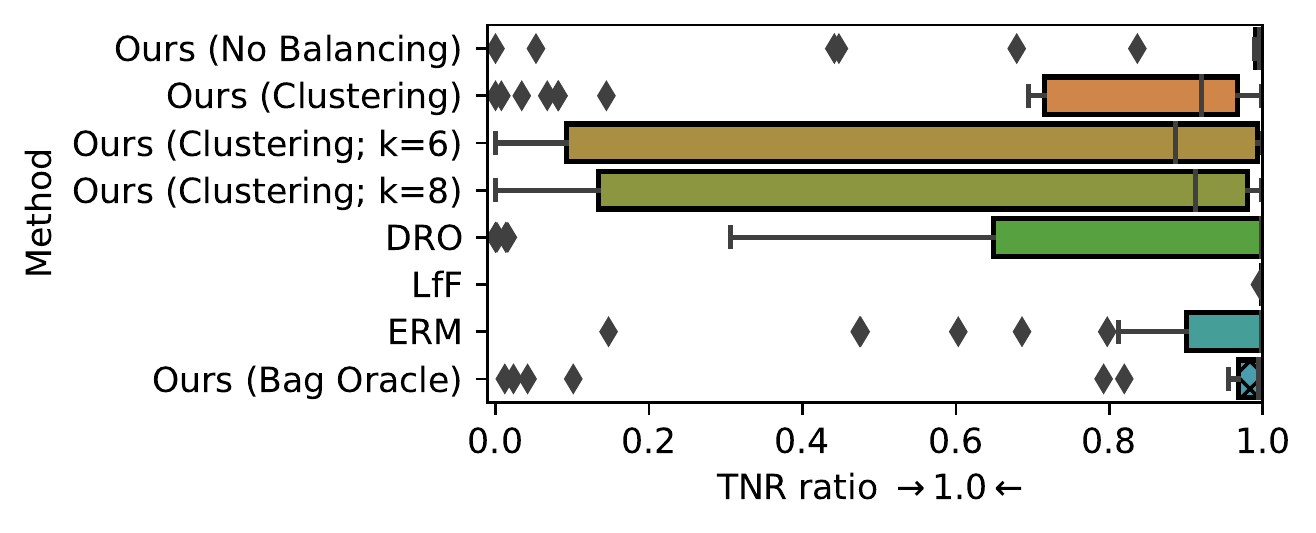}

  \caption{
    Results from \textbf{30 repeats} for the Colored MNIST dataset with two digits, 2 and 4, with a \emph{missing subgroup}: the training dataset only has {\color{green}green} digits.
    \textbf{Top left}: Accuracy.
    \textbf{Top right}: Robust Accuracy.
    \textbf{Bottom left}: True positive rate ratio.
    \textbf{Bottom right}: True negative rate ratio.
    For \texttt{Ours (Clustering)}, the clustering accuracy was 88\% $\pm$ 5\%.
    For an explanation of \texttt{Ours (Clustering; k=6/8)} see section~\ref{sec:overclustering}.
  }%
  \label{fig:cmnist-2v4-miss-s-add}
\end{figure*}

\begin{figure*}[htp]
  \centering
  \includegraphics[width=0.49\textwidth]{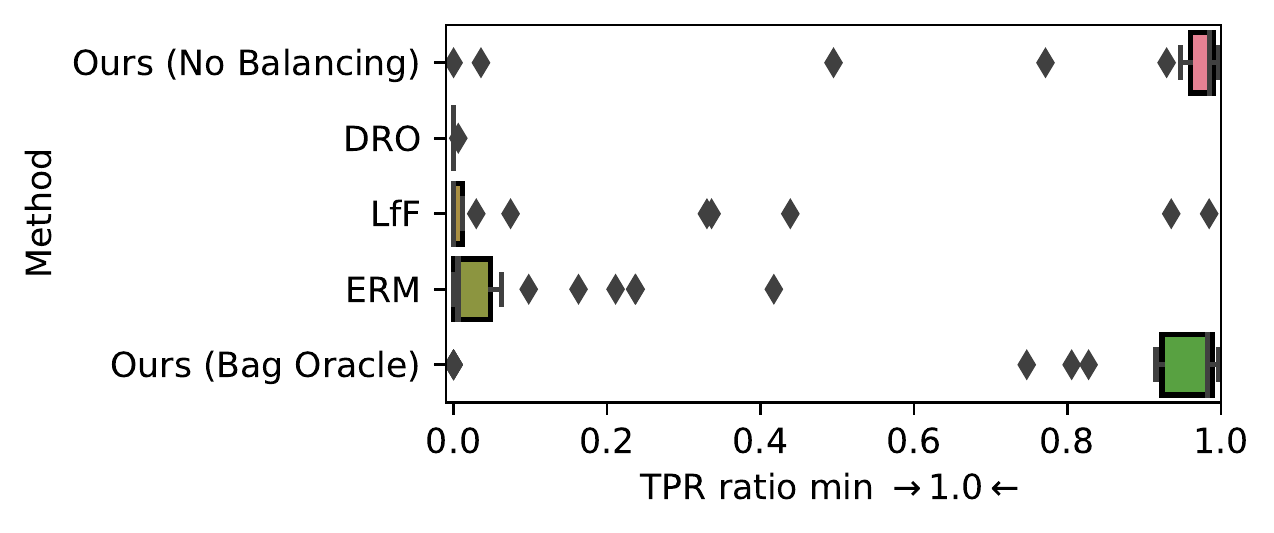}
  \includegraphics[width=0.49\textwidth]{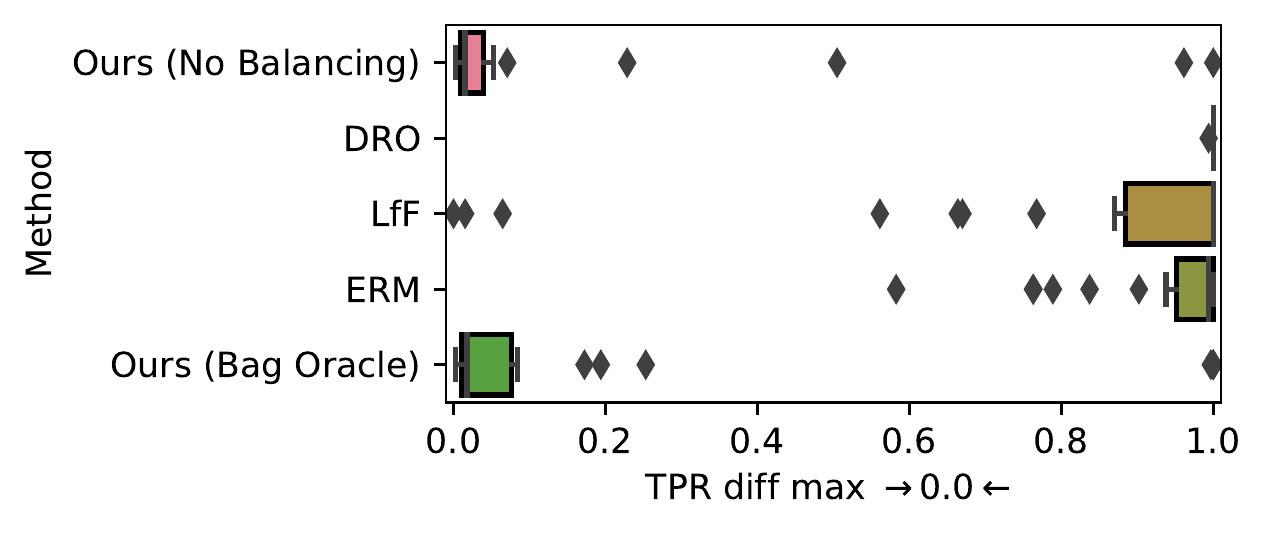}
  \includegraphics[width=0.49\textwidth]{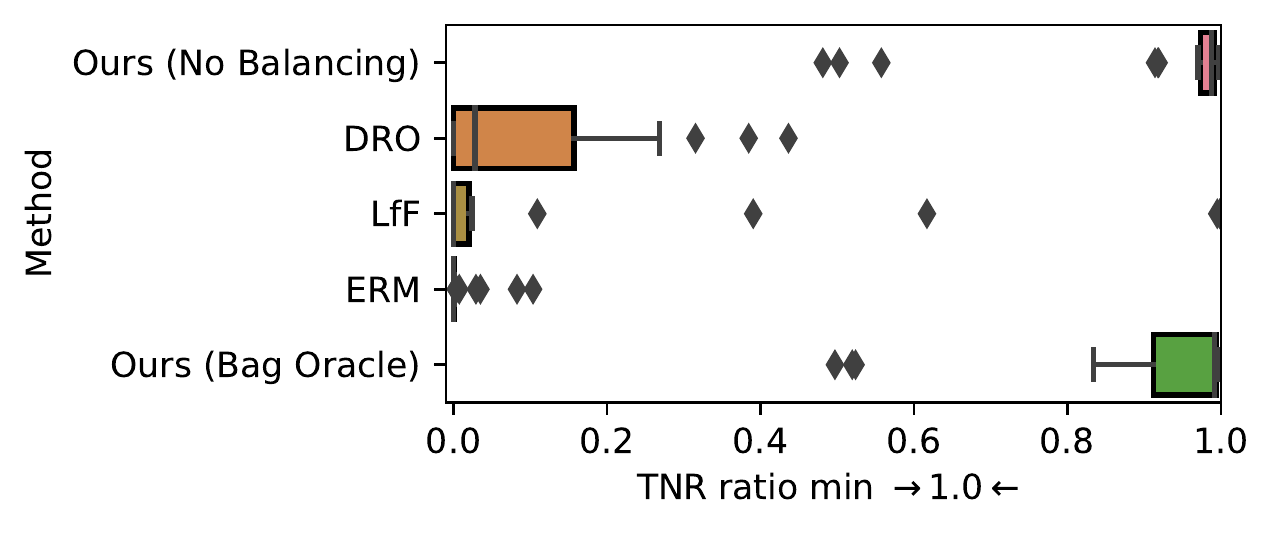}
  \includegraphics[width=0.49\textwidth]{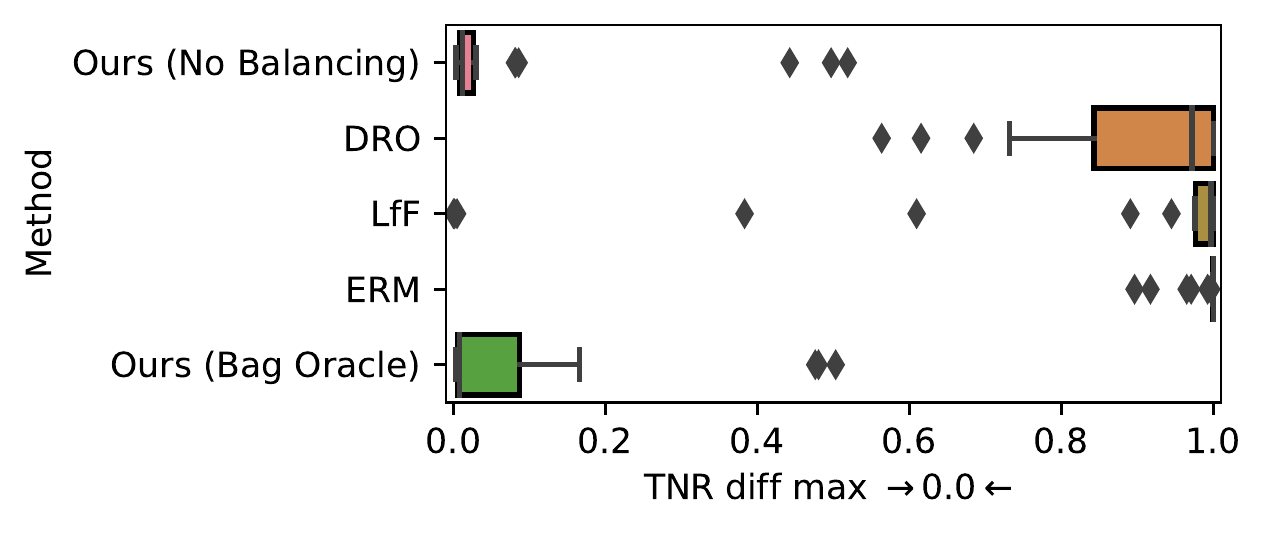}
  \caption{
    Results from \textbf{30 repeats} for the Colored MNIST dataset with three digits: `2', `4' and `6'.
    Four combinations of digit and color are missing: {\color{green}green} 2's, {\color{blue}blue} 2's, {\color{blue}blue} 4's and {\color{green}green} 6's.
    \textbf{First row, left}: minimum of all true positive rate ratios.
    \textbf{First row, right}: maximum of all true positive rate differences.
    \textbf{Second row, left}: minimum of all true negative rate ratios.
    \textbf{Second row, right}: maximum of all true negative rate differences.
  }%
  \label{fig:cmnist-3dig-4miss-add}
\end{figure*}
  
\begin{figure*}[t]
  \centering
 \includegraphics[width=0.49\textwidth]{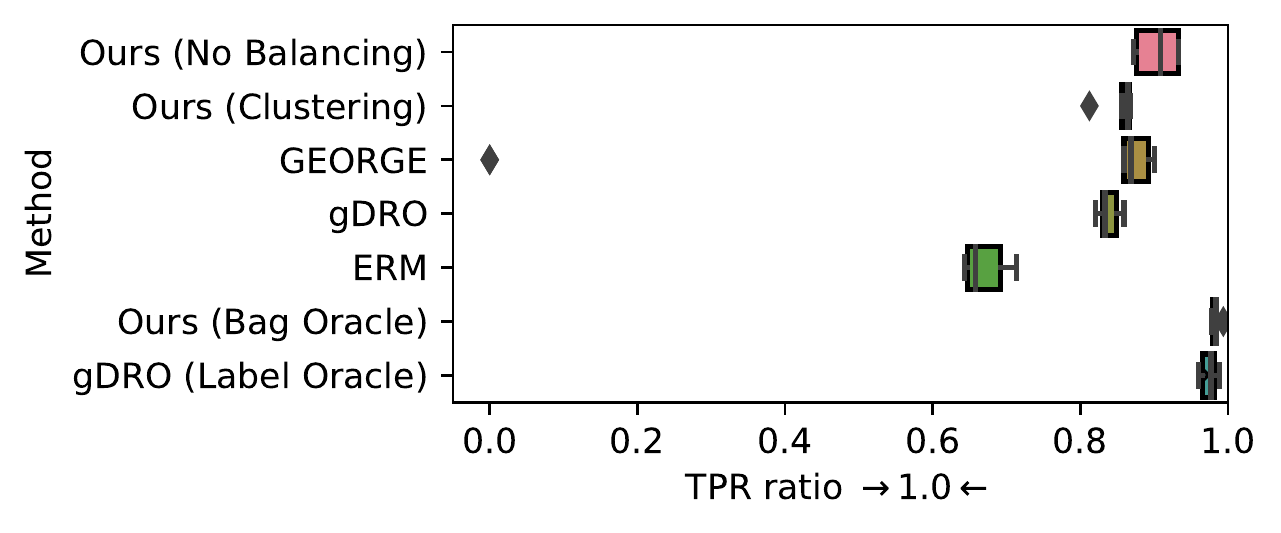}
 \includegraphics[width=0.49\textwidth]{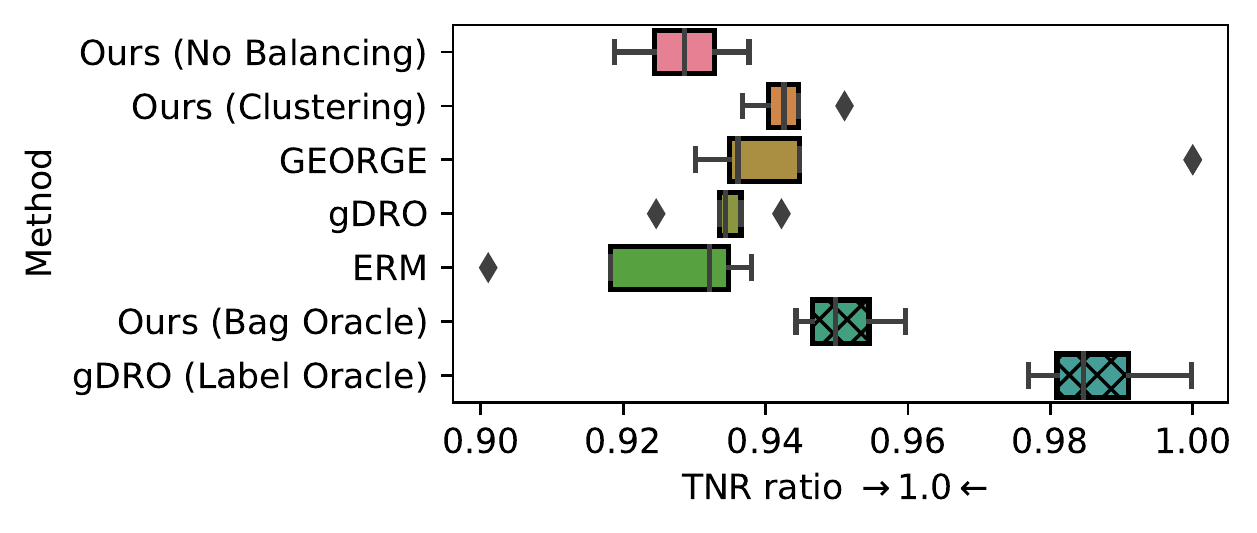}
 \includegraphics[width=0.49\textwidth]{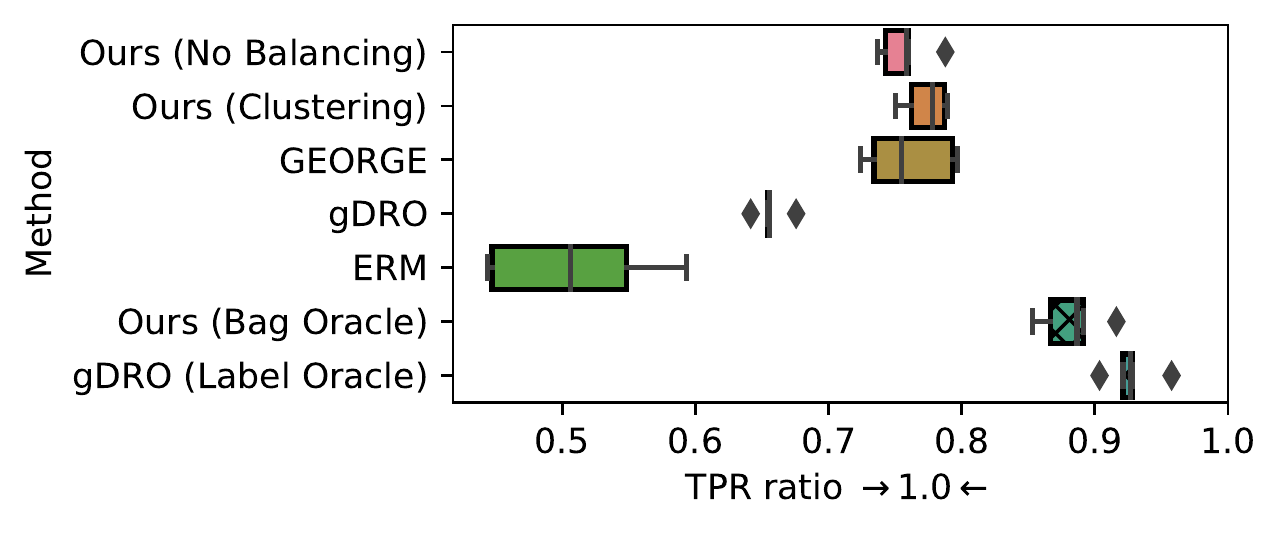}
 \includegraphics[width=0.49\textwidth]{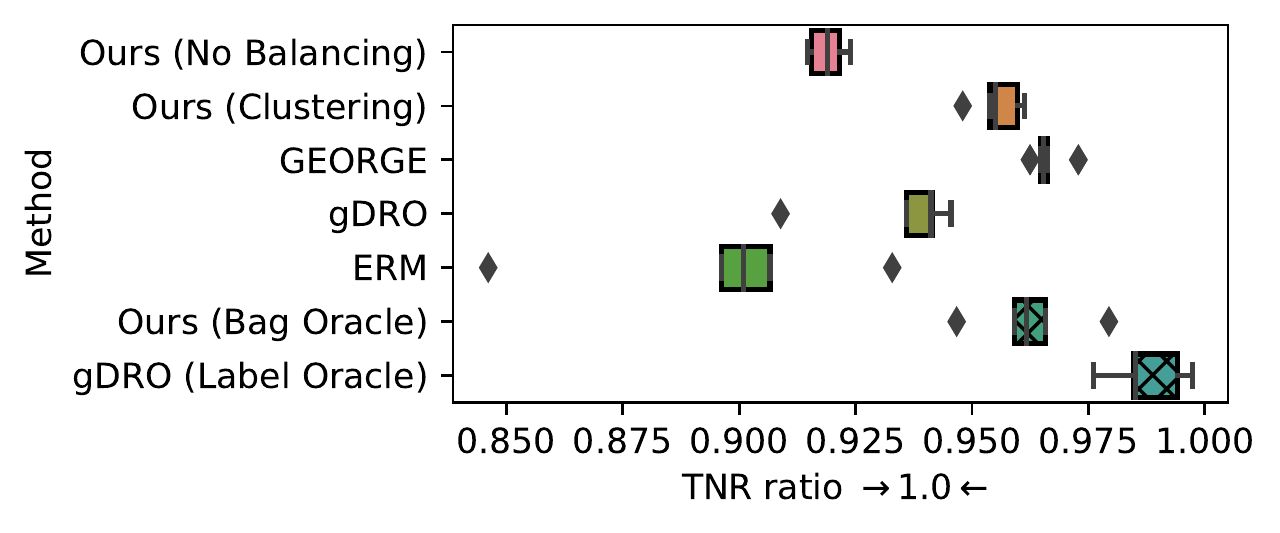}
 \includegraphics[width=0.49\textwidth]{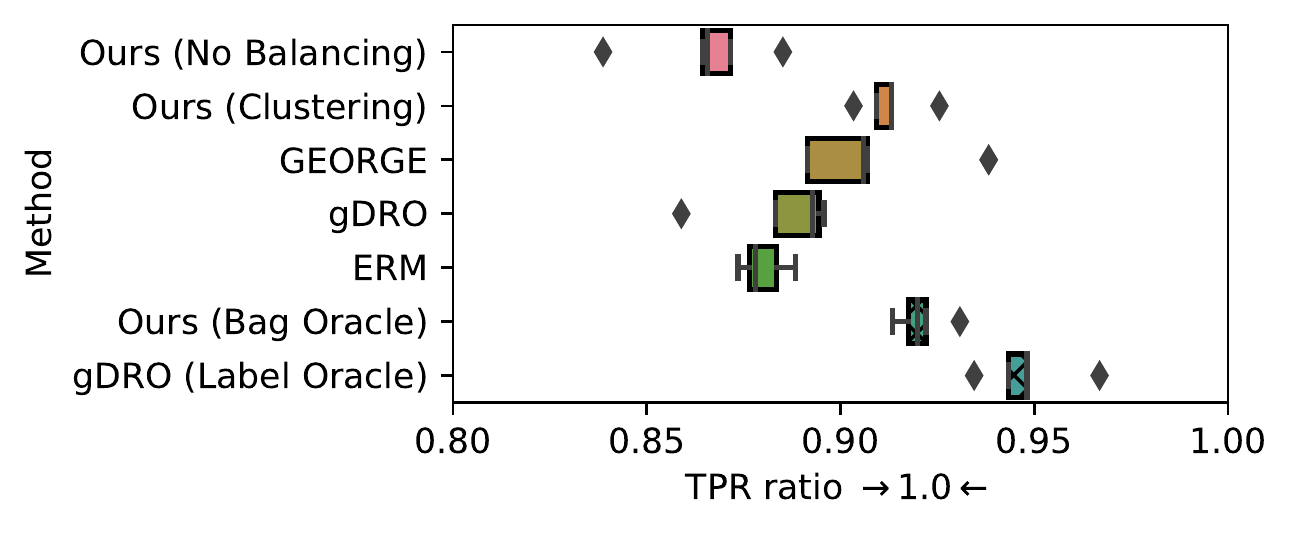}
 \includegraphics[width=0.49\textwidth]{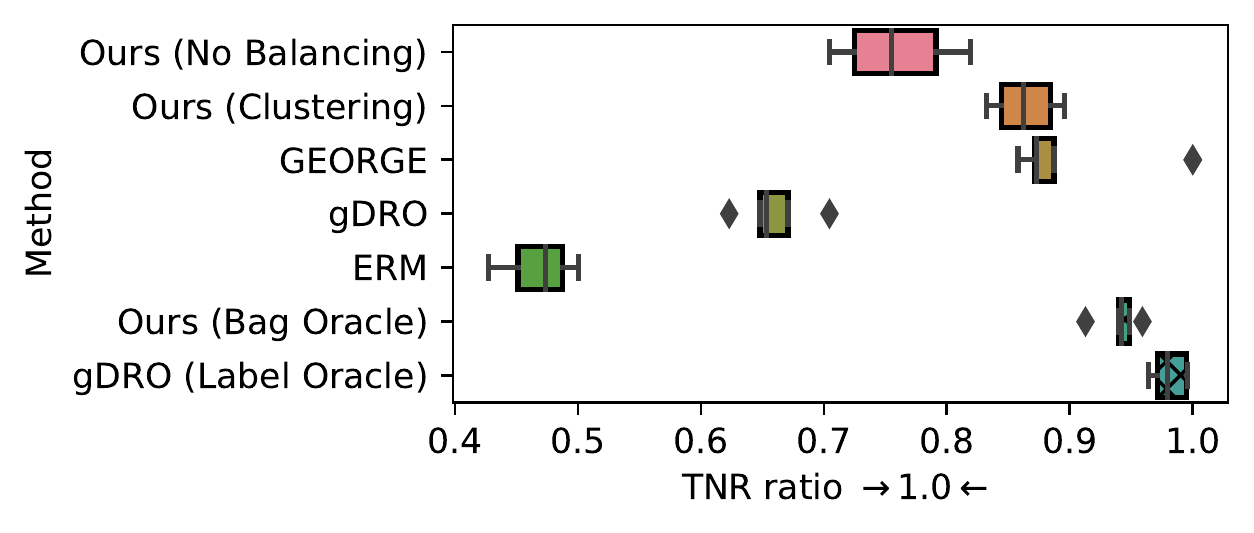}
 \includegraphics[width=0.49\textwidth]{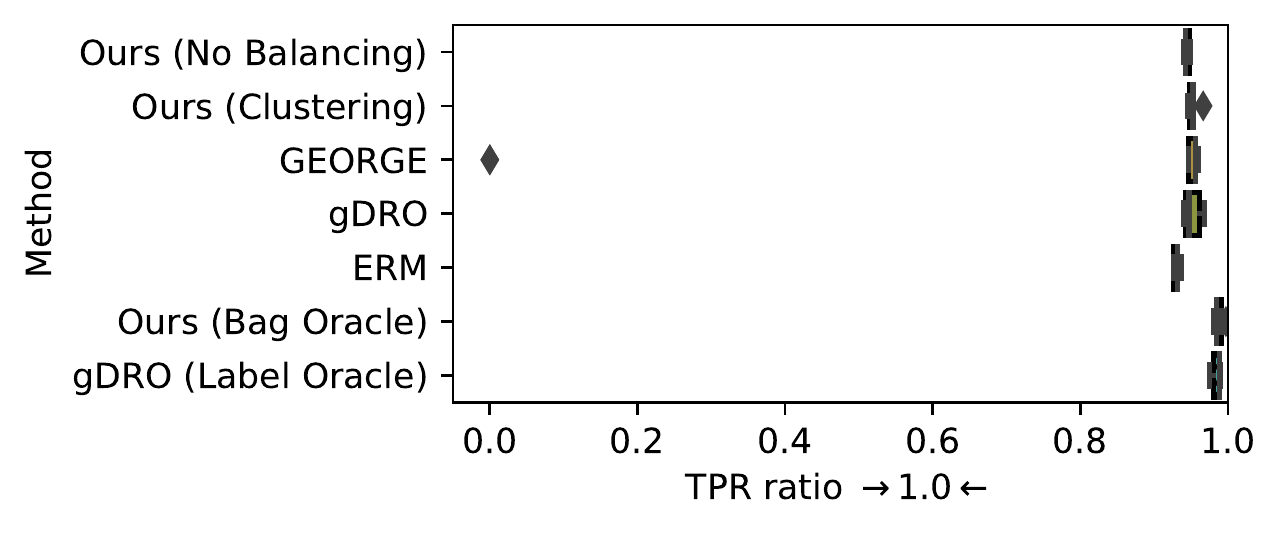}
 \includegraphics[width=0.49\textwidth]{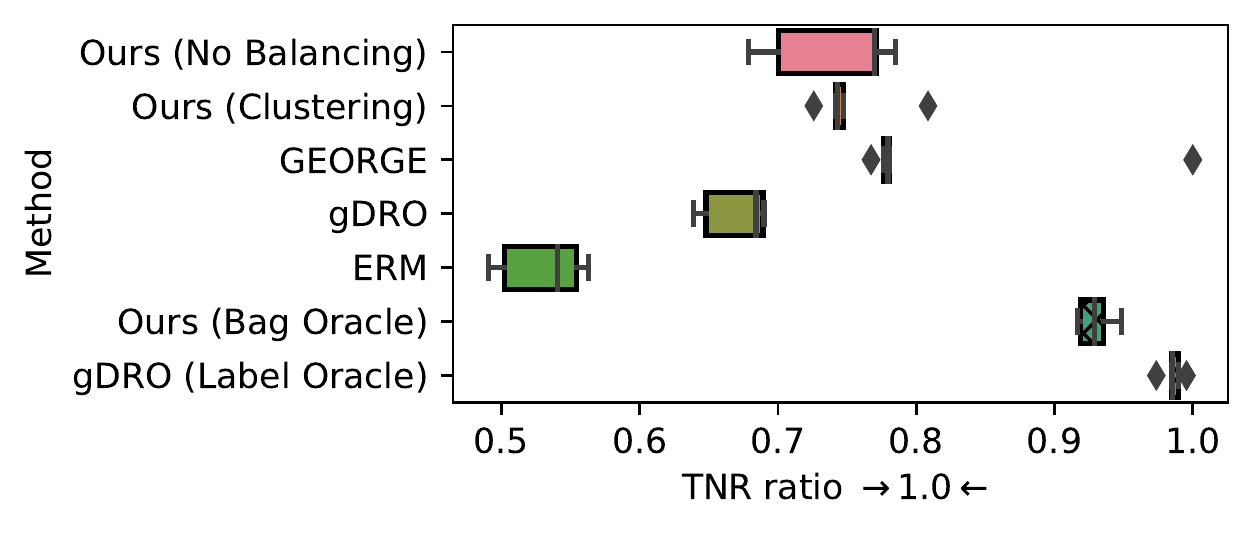}
 \caption{%
   Plots of additional metrics for CelebA under the SB setting, where ''smiling'' is the class label and ''gender'' is the subgroup label. These metrics are ratios computed between the \emph{Male} and \emph{Female} subgroups with the largest of the two values involved always selected as the denominator. \textbf{Left:} TNR ratio. \textbf{Right}: TNR ratio.
 }%
 \label{fig:celeba-gender-smiling-add}
\end{figure*}

\section{Ablation studies}\label{sec:ablations}
\subsection{Using an instance-wise loss instead of a set-wise loss}\label{ssec:no-mil}
\begin{figure*}[htp]
  \centering
  \includegraphics[width=0.49\textwidth]{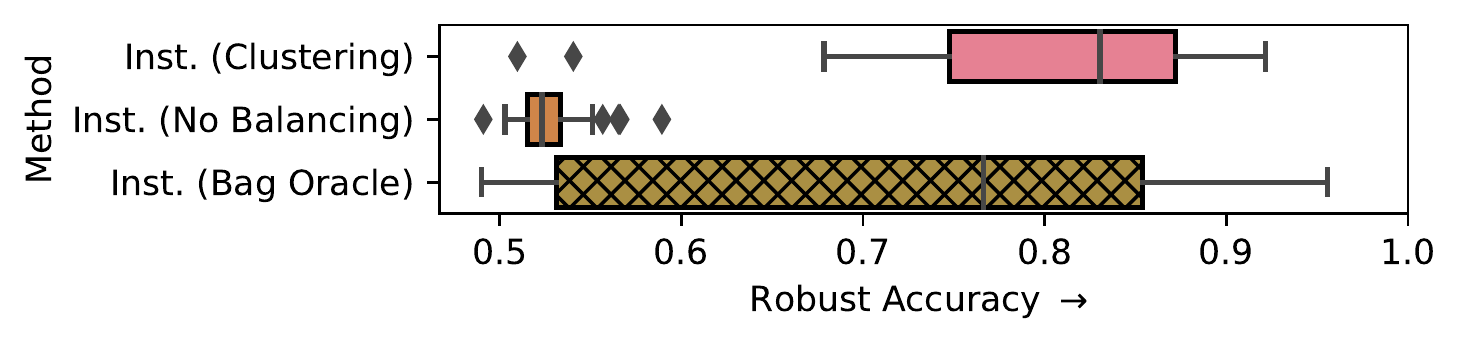}
  \includegraphics[width=0.49\textwidth]{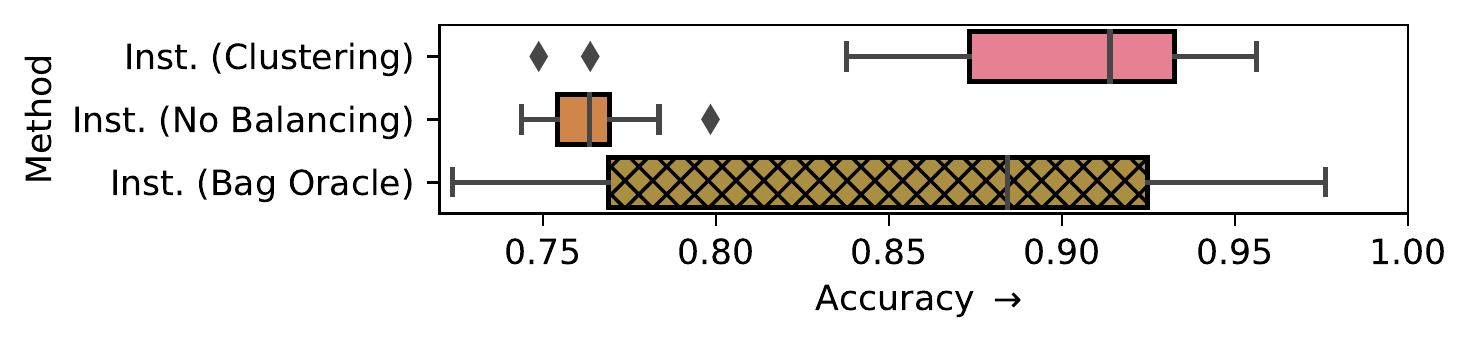}
  \caption{
    Results from \textbf{30 repeats} with an \emph{instance-wise} loss for the Colored MNIST dataset with two digits, 2 and 4, with \emph{subgroup bias} for the color `{\color{purple}purple}': for {\color{purple}purple}, only the digit class `2' is present.
    \textbf{Left}: Accuracy.
    \textbf{Right}: Positive rate ratio.
    For \texttt{Inst.\ (Clustering)}, the clustering accuracy was 96\% $\pm$ 6\%.
  }%
  \label{fig:cmnist-2v4-partial-add-nomil}
\end{figure*}
\begin{figure*}[htp]
  \centering
  \includegraphics[width=0.49\textwidth]{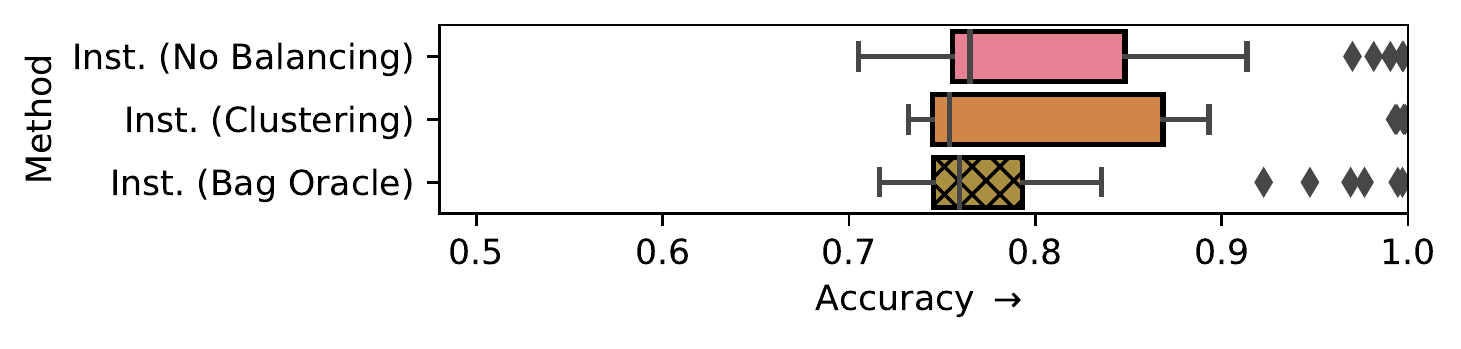}
  \includegraphics[width=0.49\textwidth]{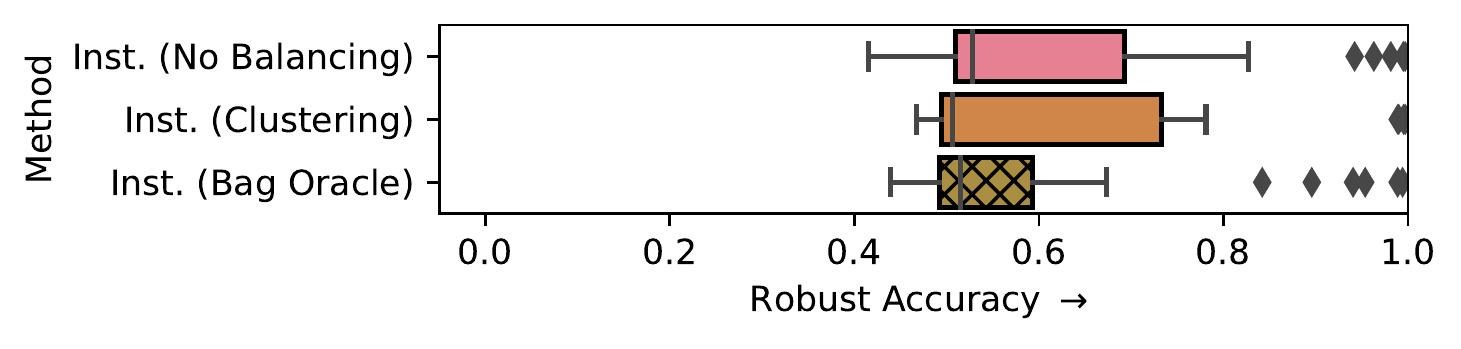}

  \caption{
    Results from \textbf{30 repeats} with an \emph{instance-wise} loss for the Colored MNIST dataset with two digits, 2 and 4, with a \emph{missing subgroup}: the training dataset only has {\color{green}green} digits.
    \textbf{Left}: Accuracy.
    \textbf{Right}: Robust Accuracy.
    For \texttt{Inst.\ (Clustering)}, the clustering accuracy was 88\% $\pm$ 5\%.
  }%
  \label{fig:cmnist-2v4-miss-s-add-nomil}
\end{figure*}
See Fig.~\ref{fig:cmnist-2v4-partial-add-nomil} and Fig.~\ref{fig:cmnist-2v4-miss-s-add-nomil}
for results on 2-digit Colored MNIST (under the \emph{subgroup bias} and \emph{missing subgroup} settings, respectively) for our method but with the loss computed instance-wise (\texttt{Inst.})\ as opposed to set-wise, as is typical of adversarial unsupervised domain adaptation methods (e.g. \cite{ganin2016domain}).
All aspects of the method, other than those directly involved in the loss-computation, were kept constant -- this includes the use of hierarchical balancing, despite the necessary removal of the aggregation layer meaning the discriminator is no longer sensitive to the bagging.
It is clear that the aforementioned change to the loss drastically increases the variance (IQR) of the results for both settings and, at the same time, drastically reduces the median \texttt{Robust Accuracy} to the point of being only marginally above that of the \texttt{ERM} baseline, regardless of the chosen balancing scheme.

\subsection{Clustering with an incorrect number of clusters}\label{sec:overclustering}
We also investigate what happens when the number of clusters is set incorrectly.
For 2-digit Colored MNIST, we expect 4 clusters, corresponding to the 4 possible combinations of the binary class label $y$ and the binary subgroup label $s$.
However, there might be circumstances where the correct number of clusters is not known; how does the batch balancing work in this case?
We run experiments with the number of clusters set to 6 and to 8, with all other aspects of the pipeline kept the same.
It should be noted that this is a very na\"ive way of dealing with an unknown number of clusters.
There are methods specifically designed for identifying the right number of clusters \cite{hamerly2004learning,chazal2013persistence},
and that is what would be used if this situation arose in practice.

The results can be found in figures~\ref{fig:cmnist-2v4-partial-add} and \ref{fig:cmnist-2v4-miss-s-add}.
Bags and batches are constructed by drawing an equal number of samples from each cluster.
Unsurprisingly, the method performs worse than with the correct number of clusters.
When investigating how the clustering methods deal with the larger number of clusters,
we found that it is predominantly those samples that do not appear in the training set
which get spread out among the additional clusters.
This is most likely due to the fact that the clustering is semi-supervised,
with those clusters that occur in the training set having supervision.
The overall effect is that the samples which are not appearing in the training set are over-represented in the drawn bags,
which means it is easier for the adversary to identify where the bags came from,
and the encoder cannot properly learn to produce an invariant encoding.

\section{Adapting GEORGE}\label{adapting_g}

As discussed in the main text,
GEORGE \cite{SohDunAngGuetal20} was originally developed to address the uneven performance resulting from hidden stratification, though hidden stratification of a different kind to the one we consider.
In \cite{SohDunAngGuetal20} the training set comes with (super-)class labels but without subclass (or \emph{subgroup} in our terminology) labels.
The training set is unlabeled with respect to the subclass, but all superclass-subclass combinations (or ``sources'') are assumed to be present in the training data and therefore discoverable via clustering.
(Note that the clustering in \cite{SohDunAngGuetal20} is -- in contrast to our method -- completely without supervision and there is nothing to guide the clustering towards discovering the subgroups of interest, apart from the assumption that they are the most salient.)
On the other hand, in our setting, we do have access to all sources expected at deployment time, but not all of them are present in the training data -- some are exclusively found in the \emph{unlabeled} deployment set.

This necessitates propagating the labels from the training set to the deployment set, which can be done within the clustering step to ensure consistency between the cluster labels and the propagated superclass labels.
Doing so requires us to modify the clustering algorithm such that instead of predicting each source independently of one another, we factorize the joint distribution of the super- and subclasses, $P(Y, S)$ into their respective marginal distributions, $P(Y)$ and $P(S)$.
In practice, this is achieved by applying two separate cluster-prediction heads to the image representation, $z$: one, $\mu_y$, predicting the superclass, $y$, the other, $\mu_s$ predicting the subclass, $s$.
This allows us to decouple the supervised loss for the two types of label and to always be able to recover $y$ due to having full supervision in terms of its ground-truth labels from the training set -- this means we can identify all the $y$ clusters with the right $y$ labels.
This is not necessarily possible for $s$, because some $s$ values might be completely missing from the labeled training set (missing subgroup setting).

With the outputs structured as just described, we can obtain the prediction for a given sample's source (which is needed to compute the unsupervised clustering loss and for balancing the deployment set), by taking the argmax of the vectorized outer product of the softmaxed outputs of the two heads:
\begin{align}
&\omega_i = \argmax\limits_{k} \: \mathrm{vec}(\mu_y(z_i) \otimes \mu_s(z_i))_k\;,\\
&\quad\quad\quad\quad\quad\quad\quad\quad\quad\quad k = 1, ..., |S \times Y| \nonumber
\end{align}
where $\mu_y(z_i)$ and $\mu_s(z_i)$ are vectors, $\otimes$ is the outer product, and $\mathrm{vec}(A)$ is the vectorization of matrix $A$.
After training the clustering model, we can then use it to generate predictions $\hat{Y}^{dep}$, as well as the cluster labels $\hat{\Omega}^{dep}$, for the deployment set, and use them together to train a robust classifier with gDRO \cite{sagawa2019distributionally}, as in \cite{SohDunAngGuetal20}.

\section{Code}
The code can be found here: \url{https://github.com/wearepal/missing-sources}.

\end{document}